\DeclareMathOperator*{\argmin}{arg\,min}
\newcommand{\vertiii}[1]{{\left\vert\kern-0.25ex\left\vert\kern-0.25ex\left\vert #1 
		\right\vert\kern-0.25ex\right\vert\kern-0.25ex\right\vert}}
\newtheorem{theorem}{Theorem}
\newtheorem{lemma}{Lemma}
\newtheorem{assumption}{Assumption}
\newtheorem{definition}{Definition}
\title{\textbf{Meta Learning for Support Recovery in High-dimensional Precision Matrix Estimation}}
\author[1]{Qian Zhang}
\author[2]{Yilin Zheng}
\author[3]{Jean Honorio}
\affil[1]{
	Department of Statistics,
	Purdue University,
	\texttt{zhan3761@purdue.edu}
}
\affil[2]{
	Department of Computer Science,
	Purdue University,
	\texttt{zheng453@purdue.edu}
}
\affil[3]{
	Department of Computer Science,
	Purdue University,
	\texttt{jhonorio@purdue.edu}
}
\date{}
\begin{document}

\maketitle

\begin{abstract}
In this paper, we study meta learning for support (i.e., the set of non-zero entries) recovery in high-dimensional precision matrix estimation where we reduce the sufficient sample complexity in a novel task with the information learned from other auxiliary tasks. In our setup, each task has a different random true precision matrix, each with a possibly different support. We assume that the union of the supports of all the true precision matrices (i.e., the true support union) is small in size. We propose to pool all the samples from different tasks, and \emph{improperly} estimate a single precision matrix by minimizing the $\ell_1$-regularized log-determinant Bregman divergence. We show that with high probability, the support of the \emph{improperly} estimated single precision matrix is equal to the true support union, provided a sufficient number of samples per task $n \in O((\log N)/K)$, for $N$-dimensional vectors and $K$ tasks. That is, one requires less samples per task when more tasks are available. We prove a matching information-theoretic lower bound for the necessary number of samples, which is $n \in \Omega((\log N)/K)$, and thus, our algorithm is minimax optimal. Then for the novel task, we prove that the minimization of the $\ell_1$-regularized log-determinant Bregman divergence with the additional constraint that the support is a subset of the estimated support union could reduce the sufficient sample complexity of successful support recovery to $O(\log(|S_{\text{off}}|))$ where $|S_{\text{off}}|$ is the number of off-diagonal elements in the support union and is much less than $N$ for sparse matrices. We also prove a matching information-theoretic lower bound of $\Omega(\log(|S_{\text{off}}|))$ for the necessary number of samples. Synthetic experiments validate our theory.
\end{abstract}

\section{Introduction}
Precision (or inverse covariance) matrix estimation is an important problem in high-dimensional statistical learning \cite{wang2016precision} with great application in time series \cite{chen2013covariance}, principal component analysis \cite{fan2016overview}, probabilistic graphical models \cite{meinshausen2006high}, etc. For example, in Gaussian graphical models where we model the variables in a graph as a zero-mean multivariate Gaussian random vector, the set of off-diagonal non-zero entries of the precision matrix corresponds exactly to the set of edges of the graph \cite{ravikumar2011high}. For this reason, estimating the precision matrix to recover its support set, which is the set of non-zero entries, is the common strategy of structure learning in Gaussian graphical models. An estimate of the precision matrix is called sign-consistent if it has the same support and sign of entries with respect to the true matrix.

However, the learner faces several challenges in precision matrix estimation. The first challenge is the high-dimensionality of the data. The dimension of the data, $N$, could be much higher than the sample size $n$, and thus the empirical sample covariance and its inverse will behave badly \cite{johnstone2001distribution}. Secondly, unlike in Gaussian graphical models, the data may not follow multivariate Gaussian distribution. The third challenge is the heterogeneity of the data. There could be limited samples from the distribution of interest but a large amount of samples from multiple multivariate distributions with different precision matrices. 

For the first two challenges, we assume the precision matrices are sparse and consider a general class of distributions, i.e., multivariate sub-Gaussian distributions later described in Definition \ref{def:mulsubG}. The class of sub-Gaussian variates \cite{buldygin1980sub} includes for instance Gaussian variables, any bounded random variable (e.g. Bernoulli, multinomial, uniform), any random variable with strictly log-concave density, and any finite mixture of sub-Gaussian variables. Then we address the high-dimension challenge by using $\ell_1$-regularized log-determinant Bregman divergence minimization \cite{ravikumar2011high}, which is also the $\ell_1$-regularized maximum likelihood estimator for multivariate Gaussian distributions \cite{yuan2007model}. 

For the challenge of heterogeneity, prior works have considered a multi-task learning problem where the learner treats each different distribution as a task with a related precision matrix and solves each and every task simultaneously.
Suppose there are $K$ tasks and $n$ samples with dimension $N$ per task. When there is only one task ($K=1$), Ravikumar et al. \cite{ravikumar2011high} proved that $n\in O(\log N)$ is sufficient for the sign-consistency of $\ell_1$-regularized log-determinant Bregman divergence minimization with multivariate sub-Gaussian data. When $K>1$, Honorio et al. \cite{honorio2012statistical} proposed the $\ell_{1,p}$-regularized log-determinant Bregman divergence minimization to estimate the precision matrices of all tasks and proved that $n\in O(\log K+\log N)$ is sufficient for the correct support union recovery with high probability. Guo et al. \cite{guo2011joint} introduced a different regularized maximum likelihood estimation to learn all precision matrices and proved $n\in O((N\log N) /K)$ is sufficient for the correct support recovery of the precision matrix in each task with high probability. Ma and Michailidis \cite{ma2016joint} proposed a joint estimation method consisting of a group Lasso regularized neighborhood selection step and a maximum likelihood step. They proved that their method recovers the support of the precision matrix in each task with high probability if $n\in O(K+\log N)$. There are also several algorithms for the multi-task problem but without theoretical guarantees for the consistency of their estimates \cite{mohan2014node, chiquet2011inferring}.

In this paper, we solve the heterogeneity challenge with meta learning where we recover the support of the precision matrix in a novel task with the information learned from other auxiliary tasks. Unlike previous methods, we also use improper estimation in our meta learning method to have better theoretical guarantees for support recovery. Specifically, instead of estimating each and every precision matrix in the auxiliary tasks, we pool all the samples from the auxiliary tasks together to estimate a single ``common precision matrix'' (see Definition \ref{def:msGGM}) in order to recover the ``support union'' (see Definition \ref{def:msGGM}) of the precision matrices in those tasks. Then we estimate the precision matrix of the novel task with the constraint that its support is a subset of the estimated support union and its diagonal entries are equal to the diagonal entries of the estimated common precision matrix. We prove that for the sign-consistency of our estimates, the sufficient and necessary sample size per auxiliary task is $n\in\Theta((\log N)/K)$ which is much better than the results of the aforementioned multi-task learning methods and enables the learner to gather more tasks (instead of more samples per task) to get a more accurate estimate since the sample complexity is inversely proportional to $K$. The sufficient and necessary sample complexity of the novel task is $\Theta(\log(|S_{\text{off}}|))$ where $|S_{\text{off}}|$ is the number of off-diagonal elements in the support union $S$ and $|S_{\text{off}}|\ll N$ for sparse graphs, which is better than the result in \cite{ravikumar2011high}.

Moreover, to the best of our knowledge, we are the first to introduce randomness in the precision matrices of different tasks while previous methods assume the precision matrix in each task to be deterministic. Our theoretical results hold for a wide class of distributions of the precision matrices under some conditions, which broadens the application scenarios of our method. 
The use of improper estimation in our method is innovative for the problem of support recovery of high-dimensional precision matrices. Our work also fills in the blank of the theory and methodology of meta learning in high-dimensional precision matrix estimation.
Generally, meta learning aims to develop learning approaches that could have good performance on an extensive range of learning tasks and generalize to solve new tasks easily and efficiently with only a few training examples \cite{Vanschoren19}. Thus it is also referred to as $\textit{learning to learn}$ \cite{lake2015human}. Current research mainly focuses on designing practical meta learning algorithms, for instance, \cite{koch2015siamese, vinyals2016matching, sung2018learning, santoro2016meta, munkhdalai2017meta, finn2017model}. We believe our work could provide some insights for the theoretical understanding of meta learning.

This paper has the following four contributions. Firstly, we propose a meta learning approach by introducing multiple auxiliary learning tasks for support recovery of high-dimensional precision matrices with improper estimation. Secondly, we add randomness to the precision matrices in different learning tasks, which is a significant innovation compared to previous methods.
Thirdly, we prove that for $N$-dimensional multivariate sub-Gaussian random vectors and $K$ auxiliary tasks with support union $S$, the sufficient sample complexity of our method is $O((\log N)/K)$ per auxiliary task for support union recovery and $O(\log(|S_{\text{off}}|))$ for support recovery of the novel task, which provides the theoretical basis for introducing more tasks for meta learning in support recovery of precision matrices. 
Fourthly, we prove information-theoretic lower bounds for the failure of support union recovery in the auxiliary tasks and the failure of support recovery in the novel task. We show that $\Omega((\log N)/K)$ samples per auxiliary task and $\Omega(\log(|S_{\text{off}}|))$ samples for the novel task are necessary for the recovery success, which proves that our meta learning method is minimax optimal.
Lastly, we conduct synthetic experiments to validate our theory. We calculate the support union recovery rates of our meta learning approach and multi-task learning approaches for different sizes of samples and tasks. For a fixed task size $K$, our approach achieves high support union recovery rates when the sample size per task has the order $O((\log N)/K)$. For a fixed sample size per task, our method performs the best when the task size $K$ is large.

\section{Preliminaries}
This section introduces our mathematical models and the meta learning problem. The important notations used in the paper are illustrated in Table~\ref{Tab:notations}.

{
	\begin{table*}
		\vskip 0.1in
		\caption{Notations used in the paper}
		\label{Tab:notations}
		\vskip 0.05in
		\centering
		\begin{tabularx}{\textwidth}{lX}
			\toprule
			Notation     & Description     \\
			\midrule
			$\text{sign}(x)$ & The sign of $x\in\mathbb{R}$, i.e., $\text{sign}(x)=x/|x|$ if $x\neq0$; $\text{sign}(x)=0$ if $x=0$    \\
			$\|a\|_{\infty}$ & The $\ell_{\infty}$-norm of vector $a\in\mathbb{R}^n$, i.e., $\max_{i=1}^n|a_i|$   \\
			$\|a\|_{1}$ & The $\ell_{1}$-norm of vector $a\in\mathbb{R}^n$, i.e., $\sum_{i=1}^n|a_i|$   \\
			$\|A\|_{\infty}$ & The $\ell_{\infty}$-norm of matrix $A\in\mathbb{R}^{m\times n}$, i.e., $\max_{1\le i\le m,1\le j\le n}|A_{ij}|$   \\
			$\|A\|_{1}$ & The $\ell_{1}$-norm of matrix $A\in\mathbb{R}^{m\times n}$, i.e., $\sum_{1\le i\le m,1\le j\le n}|A_{ij}|$   \\
			$\vertiii{A}_{\infty}$ & The $\ell_{\infty}$-operator-norm of matrix $A\in\mathbb{R}^{m\times n}$, i.e., $\max_{1\le i\le m}\sum_{j=1}^n|A_{ij}|$   \\
			$\lambda_{\min}(A)$ & The minimum eigenvalue of matrix $A\in\mathbb{R}^{m\times m}$   \\
			$\lambda_{\max}(A)$ & The maximum eigenvalue of matrix $A\in\mathbb{R}^{m\times m}$   \\
			$\vertiii{A}_{2}$ & The $\ell_{2}$-operator-norm of matrix $A\in\mathbb{R}^{m\times n}$, i.e., $\sqrt{\lambda_{\max}(A^{\text{T}}A)}$   \\
			$A\succ0$ & The matrix $A$ is symmetric and positive-definite.   \\
			$\text{det}(A)$ & The determinant of matrix $A\in\mathbb{R}^{m\times n}$    \\
			$\text{supp}(A)$ & The support set of matrix $A\in\mathbb{R}^{m\times n}$, i.e., $\left\{(i,j)|A_{ij}\neq0\right\}$    \\
			$\text{diag}(A)$ & The vector consisting of the diagonal entries of matrix $A\in\mathbb{R}^{n\times n}$, i.e., $[A_{11},A_{22},\dots,A_{nn}]^{\text{T}}$    \\
			$|S|$ & The number of elements in the set $S$ \\
			$S_{\text{off}}$ & The set of off-diagonal elements in the set $S$, i.e., $\{(i,j):(i,j)\in S,i\neq j\}$ \\
			$A_S$ & The sub-matrix composed by the entries according to the set $S$ of $A\in\mathbb{R}^{m\times n}$, i.e., $\left(A_{(i,j)}\right)_{(i,j)\in S}$    \\
			$\langle A,B\rangle\in\mathbb{R}$ & The Frobenius inner product of $A,B\in\mathbb{R}^{m \times n}$, i.e., $\sum_{1\le i\le m,1\le j\le n}A_{ij}B_{ij}$ \\
			$A\odot B\in\mathbb{R}^{m \times n}$ & The Hadamard product of $A,B\in\mathbb{R}^{m \times n}$, i.e., $[A \odot B]_{ij}=A_{ij} B_{ij}$ \\
			$A \otimes B\in\mathbb{R}^{mp \times nq}$ & The Kronecker product of $A\in\mathbb{R}^{m \times n}$, $B\in\mathbb{R}^{p \times q}$, i.e., $[A \otimes B]_{(i,j),(k,l)}=[A \otimes B]_{p(i-1)+k,q(j-1)+l} = A_{ij}B_{kl}$ \\
			$[A\otimes B]_{S_1S_2}$ & The sub-matrix composed by the entries according to the set $S_1\times S_2$ of the matrix $A\otimes B\in\mathbb{R}^{mp\times nq}$ for $A\in\mathbb{R}^{m \times n}$, $B\in\mathbb{R}^{p \times q}$, i.e., $\left([A\otimes B]_{(i,j),(k,l)}\right)_{(i,j)\in S_1,(k,l)\in S_2}$    \\
			\bottomrule
		\end{tabularx}
	\vskip -0.1in
	\end{table*}
}

\subsection{Multivariate sub-Gaussian Distributions with Random Precision Matrices}
We first define a general class of multivariate distributions, the multivariate sub-Gaussian distribution. 
\begin{definition} \label{def:mulsubG}
	We say a random vector $X\in\mathbb{R}^{N}$ follows a multivariate sub-Gaussian distribution with precision $\Omega\in\mathbb{R}^{N\times N}$ and parameter $\sigma$ if
	
	(i) $\mathbb{E}\left[X_{t}^{(k)}\right]=0$, $\text{Cov}\left(X\right)=\Sigma=\left(\Omega\right)^{-1}$, and
	
	(ii) $\frac{X_i}{\sqrt{\Sigma_{ii}}}$ is a sub-Gaussian random variable with parameter $\sigma$ for $1\le i\le N$.
\end{definition}
The definition of sub-Gaussian random variable is as follows \cite{buldygin2000metric}:
\begin{definition} \label{def:subGRV}
	A random variable $X\in\mathbb{R}$ is called sub-Gaussian with parameter $\sigma\ge0$ if
	\begin{equation} \label{eq:DefsubGaussian}
	\mathbb{E}\left[e^{\lambda X}\right]\le \exp\left(\frac{\sigma^2\lambda^2}{2}\right),\ \ \forall\ \lambda\in\mathbb{R}
	\end{equation}
\end{definition}
Obviously, Gaussian variables are sub-Gaussian and the Gaussian graphical model is a special case of the multivariate sub-Gaussian distribution.

In this paper, we consider multiple multivariate sub-Gaussian distributions whose precision matrices are randomly generated, which makes our model more reasonable and universal compared to the deterministic setting in all the previous works. Formally, we define the following family of multivariate sub-Gaussian distributions with random precision matrices:
\begin{definition} \label{def:msGGM}
	Let $X_{1}^{(k)}\,X_{2}^{(k)},...,X_{n^{(k)}}^{(k)}\in\mathbb{R}^{N}$ be i.i.d. random vectors for $1\le k\le K$. Let $X_{t,i}^{(k)}$ be the i-th entry of $X_{t}^{(k)}$ for $1\le i\le N$. We say $\left\{X^{(k)}_t\right\}_{1\le t\le n^{(k)},\ 1\le k\le K}$ follows a family of random $N$-dimensional multivariate sub-Gaussian distributions of size $K$ with parameter $\sigma$ if
	
	(i) $\bar{\Omega}^{(k)}=\bar{\Omega}+\Delta^{(k)}$ with $\bar{\Omega}, \Delta^{(k)}\in\mathbb{R}^{N\times N}$, $\bar{\Omega}\succ0$ deterministic, and $\Delta^{(k)},1\le k\le K,$ are i.i.d. random matrices drawn from distribution $P$;
	
	(ii) For some $\gamma>0,c_{\max}\in\left(0, \lambda_{\min}(\bar{\Omega})/2\right]$, we have
	\begin{equation} \label{eq:condtion2}
	\mathbb{P}_{\Delta\sim P}[\bar{\Omega}+\Delta\succ0, \text{supp}(\Delta)\subseteq\text{supp}(\bar{\Omega}),
	\|(\bar{\Omega}+\Delta)^{-1}\|_{\infty}\le \gamma, \vertiii{\Delta}_2\le c_{\max}]=1
	\end{equation}
	and $\beta:=\|(\bar{\Omega})^{-1}-\mathbb{E}_{\Delta\sim P}[(\bar{\Omega}+\Delta)^{-1}]\|_{\infty}<\infty$;
	
	(iii) $\mathbb{E}\left[X_{t}^{(k)}\big|\bar{\Sigma}^{(k)}\right]=0$, $\text{Cov}\left(X_{t}^{(k)}\big|\bar{\Sigma}^{(k)}\right)=\bar{\Sigma}^{(k)}$ for $\bar{\Sigma}^{(k)}:=\left(\bar{\Omega}^{(k)}\right)^{-1}$, $1\le t\le n^{(k)},\ 1\le k\le K$;
	
	(iv) $\left\{X_{t}^{(k)}\right\}_{1\le t\le n^{(k)},\ 1\le k\le K}$ are conditionally independent given $\{\bar{\Omega}^{(k)}\}_{k=1}^K$;
	
	(v) $\frac{X_{t,i}^{(k)}}{\sqrt{\bar{\Sigma}^{(k)}_{ii}}}$ conditioned on $\bar{\Omega}^{(k)}$ is sub-Gaussian with parameter $\sigma$ for $1\le i\le N,\ 1\le t\le n^{(k)},\ 1\le k\le K$.
	
	We refer to $\bar{\Omega}$ as the true common precision matrix and $S:=\text{supp}(\bar{\Omega})$ as the support union of the above family of distributions.
\end{definition}
Notice that we define the support union as $S=\text{supp}(\bar{\Omega})$ instead of $\cup_{k=1}^K\text{supp}(\bar{\Omega}^{(k)})$ which is a random subset of the deterministic set $S$ because we are interested on a novel task where the support of its precision matrix is a subset of the support of $\Omega$, i.e., $S$.

\subsection{Problem Setting}
\label{subsec:prob}
In this paper, we focus on the problem of estimating the support of the precision matrix of a multivariate sub-Gaussian distribution. Following the principles of meta learning, we solve a novel task by first estimating a superset of the support of the precision matrix in the novel task from $K$ auxiliary tasks.

Specifically, suppose there are $n^{(K+1)}$ samples from a multivariate sub-Gaussian distribution with precision matrix $\bar{\Omega}^{(K+1)}$ for the novel task. We introduce $n^{(k)}$ samples for each auxiliary task $k\in\{1,...,K\}$ and assume all samples in the $K$ auxiliary tasks follow a family of random multivariate sub-Gaussian distributions with common precision matrix $\bar{\Omega}$ specified in Definition \ref{def:msGGM}. Our meta learning method aims to recover the support union $S=\text{supp}(\bar{\Omega})$ with the $K$ auxiliary tasks and use $S$ to assist in recovering $S^{(K+1)}:=\text{supp}(\bar{\Omega}^{(K+1)})$ with the assumption that $S^{(K+1)}\subseteq S$.

\section{Our Novel Improper Estimation Method}
As illustrated in Section \ref{subsec:prob}, in the first step of our method, we recover the support union $S$ of the $K$ auxiliary tasks by estimating the true common precision matrix $\bar{\Omega}$. To be specific, we pool all samples from the $K$ tasks together and estimate $\bar{\Omega}$ by minimizing the $\ell_1$-regularized log-determinant Bregman divergence between the estimate and $\bar{\Omega}$; i.e., we solve the following optimization problem with regularization constant $\lambda>0$:
\begin{equation} \label{eq:estimator0}
\hat{\Omega}=\argmin_{\Omega\succ0}\sum_{k=1}^KT^{(k)}\left(\log\det\left(\Omega\right)-
\langle\hat{\Sigma}^{(k)},\Omega\rangle\right)-\lambda\|\Omega\|_1
\end{equation}
where $\hat{\Sigma}^{(k)}:=\frac{1}{n^{(k)}}\sum_{t=1}^{n^{(k)}}X_{t}^{(k)}\left(X_{t}^{(k)}\right)^{\text{T}}$ is the empirical sample covariance and $T^{(k)}$ is proportional to the number of samples $n^{(k)}$ for task $k$. Define the following loss function:
\begin{equation} \label{eq:loss}
\ell(\Omega)=\sum_{k=1}^KT^{(k)}\left(\langle\hat{\Sigma}^{(k)},\Omega\rangle-\log\det\left(\Omega\right)\right)
\end{equation}
Then we can rewrite \eqref{eq:estimator0} as
\begin{equation} \label{eq:estimator}
\hat{\Omega}=\argmin_{\Omega\succ0}\left(\ell(\Omega)+\lambda\|\Omega\|_1\right)
\end{equation}

For clarity of exposition, we assume the number of samples per auxiliary task is the same, i.e., $n^{(k)}=n$, $T^{(k)}=1/K$ for $1\le k\le K$ in our analysis. In addition, we do not assume $n^{(K+1)}=n$. Notice that \eqref{eq:estimator} is an improper estimation because we estimate a single precision matrix with data from different distributions. This will enable us to recover the support union with the most efficient sample size per task (see Section \ref{sec:suppunion}).

For the second step, suppose that we have successfully recovered the true support union $S$ in the first step. Then for a novel task, i.e., the $(K+1)$-th task, since we have assumed the support $S^{(K+1)}$ of its precision matrix $\hat{\Omega}^{(K+1)}$ is also a subset of the support union $S$, we propose the following constrained $\ell_1$-regularized log-determinant Bregman divergence minimization for $\hat{\Omega}^{(K+1)}$:
\begin{equation}\label{eq:estimatorNovel1}
\begin{aligned}
\hat{\Omega}^{(K+1)}=&\argmin_{\Omega\succ0} \ell^{(K+1)}(\Omega)+\lambda\|\Omega\|_1\ \\&\text{s.t.}\ \ \text{supp}(\Omega)\subseteq \text{supp}(\hat{\Omega}),\ \ \text{diag}(\Omega)=\text{diag}(\hat{\Omega}).
\end{aligned}
\end{equation}
where $\ell^{(K+1)}(\Omega):=\langle\hat{\Sigma}^{(K+1)},\Omega\rangle-\log\det\left(\Omega\right)$, $\hat{\Sigma}^{(K+1)}:=\frac{1}{n^{(K+1)}}\sum_{t=1}^{n^{(K+1)}}X_{t}^{(K+1)}\left(X_{t}^{(K+1)}\right)^{\text{T}}$ is the empirical sample covariance and $\hat{\Omega}$ is obtained in \eqref{eq:estimator}. 
Note that \eqref{eq:estimatorNovel1} is also an improper estimation because of the constraint $\text{diag}(\Omega)=\text{diag}(\hat{\Omega})$. For our target of support recovery and sign-consistency, there is no need to estimate the diagonal entries of the precision matrix since they are always positive. Hence, we introduce this constraint to reduce the sample complexity by only focusing on estimating the off-diagonal entries (see Section \ref{sec:suppnovel}).

\section{Theoretical Results}
In this section, we formally state our assumptions and theoretical results.

\subsection{Assumptions}
Our theoretical results require an assumption on the true common precision matrix $\bar{\Omega}$ which is called mutual incoherence or irrepresentability condition in \cite{ravikumar2011high}. The Hessian of the loss function \eqref{eq:loss} when $\Omega=\bar{\Omega}$ is
\begin{equation} \label{eq:D2lossTrue}
\nabla^2\ell\left(\bar{\Omega}\right)=T\bar{\Gamma}
\end{equation}
where $T:=\sum_{k=1}^KT^{(k)}$ and $\bar{\Gamma}:=\nabla^2\log\det(\bar{\Omega})=\bar{\Omega}^{-1}\otimes\bar{\Omega}^{-1}\in \mathbb{R}^{N^2\times N^2}$. The mutual incoherence assumption is as follows:
\begin{assumption} \label{assump1}
	There exists some $\alpha\in(0,1]$ such that $\vertiii{\bar{\Gamma}_{S^cS}(\bar{\Gamma}_{SS})^{-1}}_{\infty}\le 1-\alpha$
\end{assumption}
We should notice that $\vertiii{\bar{\Gamma}_{S^cS}(\bar{\Gamma}_{SS})^{-1}}_{\infty}=\max_{u\in S^c}\|\bar{\Gamma}_{u S}(\bar{\Gamma}_{SS})^{-1}\|_1$. Thus this assumption in fact places restrictions on the influence of non-support terms indexed by $S^c$, on the support-based terms indexed by $S$ \cite{ravikumar2011high}.

We also require the mutual incoherence assumption for the precision matrix $\bar{\Omega}^{(K+1)}$ in the novel task:
\begin{assumption} \label{assump:novel}
	There exists $\alpha^{(K+1)}\in(0,1]$ such that 
	\begin{equation} \label{eq:assump-K+1}
	\vertiii{\bar{\Gamma}^{(K+1)}_{(S^{(K+1)})^cS^{(K+1)}}(\bar{\Gamma}^{(K+1)}_{S^{(K+1)}S^{(K+1)}})^{-1}}_{\infty}\le 1-\alpha^{(K+1)}
	\end{equation}
	where $\bar{\Gamma}^{(K+1)}:=(\bar{\Omega}^{(K+1)})^{-1}\otimes(\bar{\Omega}^{(K+1)})^{-1}$.
\end{assumption}
For $\bar{\Gamma}$ and $\bar{\Gamma}^{(K+1)}$, our analysis keeps explicit track of the quantities $\kappa_{\bar{\Gamma}}:=\vertiii{(\bar{\Gamma}_{SS})^{-1}}_{\infty}$ and $\kappa_{\bar{\Gamma}^{(K+1)}}:=\vertiii{(\bar{\Gamma}^{(K+1)}_{S^{(K+1)}S^{(K+1)}})^{-1}}_{\infty}$.

To relate the two norms $\|\cdot\|_{\infty}$ and $\vertiii{\cdot}_{\infty}$, we define the degree of a matrix as the maximal size of the supports of its row vectors. The degree of $\bar{\Omega}$ is
$d:=\max_{1\le i\le N}\left|\left\{j:1\le j\le N,\bar{\Omega}_{ij}\neq0\right\}\right|$ 
and the degree of $\bar{\Omega}^{(K+1)}$ is
$d^{(K+1)}:=\max_{1\le i\le N}\left|\left\{j:1\le j\le N,\bar{\Omega}^{(K+1)}_{ij}\neq0\right\}\right|$.

We call $\bar{\Sigma}:=\bar{\Omega}^{-1}$ the true common covariance matrix and denote its $\ell_{\infty}$-operator-norm by
$\kappa_{\bar{\Sigma}}:=\vertiii{\bar{\Sigma}}_{\infty}$.
Similarly for the covariance matrix $\bar{\Sigma}^{(K+1)}=(\bar{\Omega}^{(K+1)})^{-1}$ in the novel task, we define $\kappa_{\bar{\Sigma}^{(K+1)}}:=\vertiii{\bar{\Sigma}^{(K+1)}}_{\infty}$.

In order to bound $\vertiii{\bar{\Sigma}}_2$ in our proof, we define $\lambda_{\min}:=\lambda_{\min}(\bar{\Omega})$.

To show the sign-consistency of our estimators, we also need to consider the minimal magnitude of non-zero entries in $\bar{\Omega}$ and $\bar{\Omega}^{(K+1)}$, i.e.,
$\omega_{\min}:=\min_{(i,j)\in S}|\bar{\Omega}_{ij}|$,
$\omega_{\min}^{(K+1)}:=\min_{(i,j)\in S}|\bar{\Omega}^{(K+1)}_{ij}|$,

\subsection{Main theorems}
For our meta learning method, we have
\begin{lemma} \label{lem:convex}
	For $\lambda>0$, the problem in \eqref{eq:estimator} and \eqref{eq:estimatorNovel1} are strictly convex and have unique solutions $\hat{\Omega}$ and $\hat{\Omega}^{(K+1)}$ respectively.
\end{lemma}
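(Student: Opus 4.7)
The plan is to establish strict convexity of both objective functions on their respective feasible sets (each a convex subset of the open positive definite cone) and then verify that the infima are attained; uniqueness will then follow immediately from strict convexity.

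For strict convexity in \eqref{eq:estimator}, I would invoke the classical fact that $\Omega \mapsto -\log\det(\Omega)$ is strictly convex on $\{\Omega \succ 0\}$, as is visible from its Hessian $\Omega^{-1} \otimes \Omega^{-1}$, which is positive definite on the space of symmetric matrices. The linear terms $\langle \hat{\Sigma}^{(k)},\Omega\rangle$ and the $\ell_1$ penalty $\lambda \|\Omega\|_1$ are both convex, so since $\sum_k T^{(k)} > 0$ and $\lambda > 0$, the entire objective inherits strict convexity from the log-determinant piece. For \eqref{eq:estimatorNovel1} the reasoning is identical once one observes that the constraints $\text{supp}(\Omega) \subseteq \text{supp}(\hat{\Omega})$ and $\text{diag}(\Omega) = \text{diag}(\hat{\Omega})$ merely cut out an affine subspace; intersecting it with the open positive definite cone yields a convex feasible set, and it is nonempty because $\hat{\Omega}$ itself satisfies both constraints.

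To show that the infimum in \eqref{eq:estimator} is attained, I would argue that the sub-level sets are compact subsets of the open PD cone. As $\Omega$ approaches the boundary of the cone, some eigenvalue tends to $0$ and hence $-\log\det(\Omega) \to +\infty$; the linear terms are non-negative on the PD cone (since $\hat{\Sigma}^{(k)} \succeq 0$) and cannot cancel this divergence. For the behaviour at infinity, note that for any $\Omega \succ 0$ one has $\vertiii{\Omega}_2 \le \text{tr}(\Omega) \le \|\Omega\|_1$, so $\|\Omega\|_1 \to \infty$ whenever the spectral norm does, and the $\lambda \|\Omega\|_1$ term dominates the objective. The sub-level sets are therefore closed, bounded, and bounded away from the boundary of the cone, hence compact, and a continuous function attains its minimum on a compact set. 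For \eqref{eq:estimatorNovel1} the same argument restricts cleanly to the affine feasible set: fixing the diagonal merely removes a constant contribution from $\|\Omega\|_1$, leaving the $\ell_1$ coercivity on the free off-diagonal entries and the log-det barrier intact.

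Combining existence with strict convexity yields a unique minimizer in each problem. The main obstacle is really the coercivity/existence step, where one has to combine the log-det barrier (to prevent escape to the cone boundary) with the $\ell_1$ penalty (to prevent escape to infinity) to conclude compactness of sub-level sets. Once this is in place, the uniqueness conclusion is an immediate consequence of strict convexity.
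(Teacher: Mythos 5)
Your proof of strict convexity follows the same route as the paper: the Hessian of $-\log\det$ is $\Omega^{-1}\otimes\Omega^{-1}\succ 0$ on the PD cone, the linear and $\ell_1$ terms are convex, and the constraints in \eqref{eq:estimatorNovel1} cut out an affine subspace. Where you genuinely go beyond the paper is the existence step. The paper's proof ends at ``strictly convex, therefore has a unique solution,'' which silently assumes the infimum is attained; you fill that gap with a coercivity argument, showing sub-level sets are compact by combining the log-det barrier (which blows up as an eigenvalue tends to $0$, and your bound $\vertiii{\Omega}_2\le\mathrm{tr}(\Omega)\le\|\Omega\|_1$ and the nonnegativity of $\langle\hat\Sigma^{(k)},\Omega\rangle$ on the PD cone are both correct) with the linear growth of $\lambda\|\Omega\|_1$, which dominates the at-most-logarithmic decay of $-\log\det$. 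So your proof is strictly more complete than the paper's. One small remark: for \eqref{eq:estimatorNovel1} you invoke $\ell_1$ coercivity on the off-diagonal entries, but in fact the feasible set there is already bounded without it, since fixing the diagonal to $\mathrm{diag}(\hat\Omega)$ and requiring positive semidefiniteness of every $2\times2$ principal minor forces $|\Omega_{ij}|\le\sqrt{\hat\Omega_{ii}\hat\Omega_{jj}}$; only the log-det barrier near the boundary is then needed. Your argument is nevertheless correct as written.
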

The detailed proofs of all the lemmas, theorems and corollaries in the paper are in the supplementary material. We then study the theoretical behaviors of $\hat{\Omega}$ in \eqref{eq:estimator} and $\hat{\Omega}^{(K+1)}$ in \eqref{eq:estimatorNovel1}.

\subsubsection{SUPPORT UNION RECOVERY} \label{sec:suppunion}
Our first theorem specifies a probability lower bound of recovering a subset of the true support union by our estimator in \eqref{eq:estimator} for multiple random multivariate sub-Gaussian distributions.
\begin{theorem} \label{thm:subset}
	For a family of $N$-dimensional random multivariate sub-Gaussian distributions of size $K$ with parameter $\sigma$ described in Definition \ref{def:msGGM} with $n^{(k)}=n$, $1\le k\le K$ and satisfying Assumption \ref{assump1}, consider the estimator $\hat{\Omega}$ obtained in \eqref{eq:estimator} with $T^{(k)}=1/K$ and
	$\lambda=(8\delta+4\delta^*)/\alpha$ for $\delta\in(0,\delta^*/2]$ where $\delta^*:=\frac{\alpha^2}{2\kappa_{\bar{\Gamma}}(\alpha+8)^2}\min\left\{\frac{1}{3\kappa_{\bar{\Sigma}}d},\frac{1}{3\kappa_{\bar{\Sigma}}^3\kappa_{\bar{\Gamma}}d}\right\}$. If $\beta\le\delta^*/2$, then with probability at least
	\begin{equation} \label{eq:probsubsetsub}
	\begin{aligned}
	1-2N(N+1)\exp\left(-\frac{nK}{2}\min\left\{\frac{\delta^2}{64(1+4\sigma^2)^2\gamma^2},1\right\}\right)
	-2N\exp\left(-\frac{K\lambda_{\min}^4}{128c_{\max}^2}\left(\frac{\delta^*}{2}-\beta\right)^2\right)
	\end{aligned}
	\end{equation}
	we have:
	
	(i) $\text{supp}(\hat{\Omega})\subseteq\text{supp}\left(\bar{\Omega}\right)$
	
	(ii) $\|\hat{\Omega}-\bar{\Omega}\|_{\infty}\le \kappa_{\bar{\Gamma}}\left(\frac{8}{\alpha}+1\right)(2\delta+\delta^*)$
\end{theorem}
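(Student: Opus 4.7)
The plan is to adapt the primal--dual witness (PDW) technique of \cite{ravikumar2011high} to the \emph{meta learning} setting in which the $K$ precision matrices are themselves i.i.d.\ perturbations of a common $\bar{\Omega}$. First I would construct an oracle solution by solving the restricted problem
\[
\tilde{\Omega} := \argmin_{\Omega \succ 0,\, \Omega_{S^{c}}=0}\, \ell(\Omega) + \lambda\|\Omega\|_{1},
\]
and picking a subgradient $\tilde{Z}\in\partial\|\tilde{\Omega}\|_{1}$ that matches $\text{sign}(\tilde{\Omega})$ on $S$; on $S^{c}$ I would \emph{define} $\tilde{Z}_{S^{c}}$ via the stationarity equation $\nabla\ell(\tilde{\Omega})+\lambda\tilde{Z}=0$. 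By Lemma~\ref{lem:convex} the minimizer of \eqref{eq:estimator} is unique, so once I verify the strict dual feasibility $\|\tilde{Z}_{S^{c}}\|_{\infty}<1$, it follows that $\hat{\Omega}=\tilde{\Omega}$, which immediately gives (i) since $\text{supp}(\tilde{\Omega})\subseteq S$.

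Next I would linearize the stationarity condition around $\bar{\Omega}$. Writing $\tilde{\Delta}:=\tilde{\Omega}-\bar{\Omega}$ and using the identity $\tilde{\Omega}^{-1}-\bar{\Omega}^{-1} = -\bar{\Sigma}\,\tilde{\Delta}\,\bar{\Sigma}+R(\tilde{\Delta})$ for a second-order remainder $R(\tilde{\Delta})$, the restricted stationarity equation becomes
\[
\bar{\Gamma}_{SS}\,\text{vec}(\tilde{\Delta}_{S}) \;=\; W_{S} - R(\tilde{\Delta})_{S} - \lambda\,\tilde{Z}_{S},
\]
where $W := \tfrac{1}{K}\sum_{k=1}^{K}\hat{\Sigma}^{(k)} - \bar{\Sigma}$ is the effective gradient noise at $\bar{\Omega}$. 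Solving for $\tilde{\Delta}_{S}$ via $\vertiii{(\bar{\Gamma}_{SS})^{-1}}_{\infty}\le\kappa_{\bar{\Gamma}}$ and substituting back into the $S^{c}$ row of the stationarity equation, Assumption~\ref{assump1} together with the bound $\max\{\|W\|_{\infty},\|R(\tilde{\Delta})\|_{\infty}\}\le \alpha\lambda/8$ yields $\|\tilde{Z}_{S^{c}}\|_{\infty}\le 1-\alpha/2<1$, which closes the feasibility step. The same equation furnishes (ii) through $\|\tilde{\Delta}\|_{\infty}\le \kappa_{\bar{\Gamma}}(\|W\|_{\infty}+\|R(\tilde{\Delta})\|_{\infty}+\lambda)$; the nonlinear remainder is handled by a contraction argument on an $\ell_{\infty}$-ball of radius $r\asymp \kappa_{\bar{\Gamma}}(1+8/\alpha)(2\delta+\delta^{*})$, whose feasibility is ensured by the choices of $\delta^{*}$ and $\lambda$ and by the factors $\kappa_{\bar{\Sigma}}d$ and $\kappa_{\bar{\Sigma}}^{3}\kappa_{\bar{\Gamma}}d$ that appear in $\delta^{*}$.

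The main obstacle, and where the proof genuinely diverges from the single-task analysis, is controlling $\|W\|_{\infty}$. I would decompose $W = W_{1}+W_{2}$, with $W_{1} := \tfrac{1}{K}\sum_{k}(\hat{\Sigma}^{(k)}-\bar{\Sigma}^{(k)})$ capturing the \emph{within-task} sampling noise and $W_{2} := \tfrac{1}{K}\sum_{k}\bar{\Sigma}^{(k)}-\bar{\Sigma}$ capturing the \emph{between-task} randomness of the precision matrices. Conditioning on $\{\bar{\Omega}^{(k)}\}$, each entry of $W_{1}$ is an average over $nK$ centered sub-exponential products of sub-Gaussian variables with variance proxy controlled by $(1+4\sigma^{2})^{2}\gamma^{2}$ (using $|\bar{\Sigma}^{(k)}_{ii}|\le\gamma$ from Definition~\ref{def:msGGM}(ii)); a Bernstein-type tail bound plus a union bound over the $N(N+1)/2$ entries produces the first exponential in \eqref{eq:probsubsetsub} at the level $\delta$. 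For $W_{2}$ I would write $W_{2} = \tfrac{1}{K}\sum_{k}\bigl(\bar{\Sigma}^{(k)}-\mathbb{E}\bar{\Sigma}^{(k)}\bigr) + \bigl(\mathbb{E}\bar{\Sigma}^{(k)}-\bar{\Sigma}\bigr)$; the first part is a sum of $K$ i.i.d.\ bounded centered matrices whose operator-norm bound, via the resolvent identity $\bar{\Sigma}^{(k)}-\bar{\Sigma}=-\bar{\Sigma}\Delta^{(k)}\bar{\Sigma}^{(k)}$ together with $\vertiii{\Delta^{(k)}}_{2}\le c_{\max}$ and $c_{\max}\le\lambda_{\min}/2$, is $O(c_{\max}/\lambda_{\min}^{2})$, yielding the second exponential via Hoeffding after a union bound over the $N$ rows; the deterministic part has $\ell_{\infty}$-norm at most $\beta$, which is absorbed by the slack $\delta^{*}/2-\beta$. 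Intersecting the two good events gives $\|W\|_{\infty}\le \alpha\lambda/8$ with the probability claimed in \eqref{eq:probsubsetsub}, at which point the PDW scheme delivers both (i) and (ii).
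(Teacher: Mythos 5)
Your proposal is correct and follows essentially the same route as the paper: primal--dual witness construction, linearization with a second-order remainder $R$, a Brouwer-type contraction argument on an $\ell_\infty$-ball to control $\|\tilde\Delta\|_\infty$, and the crucial decomposition $W=W_1+W_2$ into within-task sampling noise and between-task randomness of the precision matrices, with sub-Gaussian concentration conditional on $\{\bar\Sigma^{(k)}\}$ applied to $W_1$ and a concentration bound on $\tfrac{1}{K}\sum_k\bar\Sigma^{(k)}-\mathbb{E}\bar\Sigma^{(k)}$ for $W_2$ with the bias $\beta$ absorbed by the slack $\delta^*/2-\beta$. The only point worth flagging is your treatment of $W_2$: you write ``Hoeffding after a union bound over the $N$ rows,'' but the paper controls $\vertiii{H-\mathbb{E}[H]}_2$ via Tropp's matrix bounded-difference inequality (Corollary 7.5 in \cite{Tropp2011}) and then uses $\|\cdot\|_\infty\le\vertiii{\cdot}_2$; a naive entry-wise Hoeffding would give an $N(N+1)$ union bound factor (rather than $2N$), and a row-wise Hoeffding does not directly control $\|W_2\|_\infty$ without a lossy $\ell_1$-to-$\ell_\infty$ conversion, so to match \eqref{eq:probsubsetsub} exactly you really do want the matrix concentration inequality, exploiting the operator-norm bound $\vertiii{\bar\Sigma^{(k)}-\bar\Sigma}_2 \le 2c_{\max}/\lambda_{\min}^2$ that you correctly derive from the resolvent identity and $c_{\max}\le\lambda_{\min}/2$.
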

\begin{proof}[Proof sketch for Theorem \ref{thm:subset}]
	We use the primal-dual witness approach \cite{ravikumar2011high} to prove Theorem \ref{thm:subset}. The key step is to verify that the strict dual feasibility condition holds. Using some norm inequalities and Brouwer's fixed point theorem (see e.g. \cite{ortega2000iterative}), we show that it suffices to bound the random term $\|\sum_{k=1}^K\frac{1}{K}W^{(k)}\|_{\infty}$ with $W^{(k)}=\hat{\Sigma}^{(k)}-\bar{\Sigma}$ for $1\le k\le K$ after some careful and involved derivation. Then we decompose the random term into two parts as follows
	\begin{equation*}
	\begin{aligned}
	\|\sum_{k=1}^K\frac{1}{K}W^{(k)}\|_{\infty}=&\|\frac{1}{K}\sum_{k=1}^K\hat{\Sigma}^{(k)}-
	\bar{\Sigma}^{(k)}+\bar{\Sigma}^{(k)}-\bar{\Sigma}\|_{\infty} \\ \le &
	\underbrace{\|\frac{1}{K}\sum_{k=1}^K\hat{\Sigma}^{(k)}-\bar{\Sigma}^{(k)}\|_{\infty}}_{Y_1}+
	\underbrace{\|\frac{1}{K}\sum_{k=1}^{K}\bar{\Sigma}^{(k)}-\bar{\Sigma}\|_{\infty}}_{Y_2}
	\end{aligned}
	\end{equation*}
	Conditioning on $\{\bar{\Sigma}^{(k)}\}_{k=1}^K$, $Y_1$ can be bounded by the sub-Gaussianity of the samples. Then by the law of total expectation we can get the term $2N(N+1)\exp\left(-\frac{nK}{2}\min\left\{\frac{\delta^2}{64(1+4\sigma^2)^2\gamma^2},1\right\}\right)$ in \eqref{eq:probsubsetsub}.
	
	Define $H:=\frac{1}{K}\sum_{k=1}^{K}\bar{\Sigma}^{(k)}$. We bound $Y_2$ with the following two terms
	\begin{equation} \label{eq:boundY2}
	\begin{aligned}
	Y_2&=\|H-\mathbb{E}[H]+\mathbb{E}[H]-\bar{\Sigma}\|_{\infty}\\&\le
	\|\mathbb{E}[H]-\bar{\Sigma}\|_{\infty}+\vertiii{H-\mathbb{E}[H]}_{2}\\&=
	\beta+\vertiii{H-\mathbb{E}[H]}_{2}
	\end{aligned}
	\end{equation}
	since $\|\mathbb{E}[H]-\bar{\Sigma}\|_{\infty}=\|(\bar{\Omega})^{-1}-\mathbb{E}_{\Delta\sim P}[(\bar{\Omega}+\Delta)^{-1}]\|_{\infty}=\beta$. Then we bound $\vertiii{H-\mathbb{E}[H]}_{2}$ with Corollary 7.5 in \cite{Tropp2011} to get the term $2N\exp\left(-\frac{K\lambda_{\min}^4}{128c_{\max}^2}\left(\frac{\delta^*}{2}-\beta\right)^2\right)$ in \eqref{eq:probsubsetsub}. The detailed proof is in the supplementary material.
\qedhere
\end{proof}

Our proof follows the primal-dual witness approach \cite{ravikumar2011high}. From Theorem \ref{thm:subset}, we can see that for our method, a sample complexity of $O((\log N)/K)$ per task is sufficient for the recovery of a subset of the true support union. 

The next theorem addresses the sign-consistency of the estimate \eqref{eq:estimator}. We say the estimator $\hat{\Omega}$ is sign-consistent if 
\begin{equation} \label{eq:consis}
\text{sign}(\hat{\Omega}_{ij})=\text{sign}(\bar{\Omega}_{ij})\ \ \text{for}\ \ \forall i,j\in\{1,2,...,N\}
\end{equation}
It is obvious that sign-consistency immediately implies the success of support recovery.
\begin{theorem} \label{thm:consis}
	For a family of $N$-dimensional random multivariate sub-Gaussian distributions of size $K$ with parameter $\sigma$ described in Definition \ref{def:msGGM} with $n^{(k)}=n$, $1\le k\le K$ and satisfying Assumption \ref{assump1}, consider the estimator $\hat{\Omega}$ obtained in \eqref{eq:estimator} with $T^{(k)}=1/K$ and $\lambda=8\delta^{\dagger}/\alpha$ where 
		\begin{equation*}
		\delta^{\dagger}:=\left\{
		\begin{aligned}
		&\frac{\alpha^2}{2\kappa_{\bar{\Gamma}}(\alpha+8)^2}\min\left\{\frac{1}{3\kappa_{\bar{\Sigma}}d},\frac{1}
		{3\kappa_{\bar{\Sigma}}^3\kappa_{\bar{\Gamma}}d}\right\},\ \ \text{if}\ \omega_{\min}\ge\frac{2\alpha}{8+\alpha}\min\left\{\frac{1}{3\kappa_{\bar{\Sigma}}d},
		\frac{1}{3\kappa_{\bar{\Sigma}}^3\kappa_{\bar{\Gamma}}d}\right\}; \\
		&\frac{\alpha\omega_{\min}}{4(8+\alpha)\kappa_{\bar{\Gamma}}},\ \ 
		\text{otherwise,}
		\end{aligned}
		\right.
		\end{equation*}
		If $\beta\le\delta^{\dagger}/2$, then with probability at least
		\begin{equation} \label{eq:probconsissub}
		\begin{aligned}
		1-2N(N+1)\exp\left(-\frac{nK}{2}\min\left\{\frac{(\delta^{\dagger})^2}{256(1+4\sigma^2)^2\gamma^2},1\right\}\right)
		-2N\exp\left(-\frac{K\lambda_{\min}^4}{128c_{\max}^2}\left(\frac{\delta^{\dagger}}{2}-\beta\right)^2\right)
		\end{aligned}
		\end{equation}
	the estimator $\hat{\Omega}$ is sign-consistent and thus $\text{supp}(\hat{\Omega})=\text{supp}\left(\bar{\Omega}\right)$.
\end{theorem}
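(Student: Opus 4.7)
The plan is to deduce sign-consistency from the two conclusions of Theorem \ref{thm:subset}, applied with the witness radius $\delta^{\dagger}$ in place of $\delta^{*}$. First, sign-consistency \eqref{eq:consis} reduces to two separate requirements: every entry outside $S=\text{supp}(\bar{\Omega})$ must vanish in $\hat{\Omega}$, and every entry inside $S$ must match $\bar{\Omega}$ in sign. The first requirement is exactly Theorem \ref{thm:subset}(i). The second follows from Theorem \ref{thm:subset}(ii) whenever the $\ell_\infty$ bound is strictly less than $\omega_{\min}$: for any $(i,j)\in S$ we then have $|\hat{\Omega}_{ij}-\bar{\Omega}_{ij}|<|\bar{\Omega}_{ij}|$, forcing $\hat{\Omega}_{ij}$ to lie on the same side of $0$ as $\bar{\Omega}_{ij}$. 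So the task is to choose $\lambda$ so that $\|\hat{\Omega}-\bar{\Omega}\|_\infty\leq\omega_{\min}/2$ with the probability stated in \eqref{eq:probconsissub}.

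The two branches defining $\delta^{\dagger}$ correspond to two regimes for $\omega_{\min}$. In the first branch, $\omega_{\min}$ is already large enough that the maximal allowed choice $\delta=\delta^{*}/2$ in Theorem \ref{thm:subset} suffices: substituting $\delta=\delta^{*}/2$ into $\kappa_{\bar{\Gamma}}(8/\alpha+1)(2\delta+\delta^{*})$ and then plugging in the explicit formula for $\delta^{*}$ produces $\tfrac{\alpha}{8+\alpha}\min\{1/(3\kappa_{\bar{\Sigma}}d),\ 1/(3\kappa_{\bar{\Sigma}}^{3}\kappa_{\bar{\Gamma}}d)\}$, which is exactly $\omega_{\min}/2$ at the first-branch threshold and strictly smaller above it; meanwhile $\lambda=(8\delta+4\delta^{*})/\alpha$ collapses to $8\delta^{*}/\alpha=8\delta^{\dagger}/\alpha$, agreeing with the theorem statement. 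In the second branch, $\omega_{\min}$ falls below the threshold and this choice no longer delivers $\omega_{\min}/2$; I would instead rerun the primal-dual witness construction of Theorem \ref{thm:subset} at the tighter scale $\delta^{\dagger}=\alpha\omega_{\min}/[4(8+\alpha)\kappa_{\bar{\Gamma}}]\leq\delta^{*}/2$, treating $\delta^{\dagger}$ as the new effective witness radius. Because $\delta^{\dagger}\leq\delta^{*}/2$, every containment used in the Brouwer fixed-point step and every strict dual feasibility inequality remain valid, and the error bound now reduces to $\kappa_{\bar{\Gamma}}(8/\alpha+1)\cdot 2\delta^{\dagger}=\omega_{\min}/2$.

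For the probability statement I would transport the two-term decomposition $\|\tfrac{1}{K}\sum_k W^{(k)}\|_\infty\leq Y_1+Y_2$ from the sketch of Theorem \ref{thm:subset}, applied now at the scale $\delta^{\dagger}/2$. The sub-Gaussian concentration bound on $Y_1$ conditional on $\{\bar{\Sigma}^{(k)}\}$, combined with the law of total expectation, produces the first exponential term in \eqref{eq:probconsissub}; the constant $256$ replaces the $64$ of \eqref{eq:probsubsetsub} because the relevant deviation is now $\delta^{\dagger}/2$ rather than $\delta^{*}/2$, so $\delta^{2}$ becomes $(\delta^{\dagger})^{2}/4$. The matrix-concentration bound on $Y_2$ via Corollary 7.5 of \cite{Tropp2011} then yields the second term, which is why the hypothesis tightens to $\beta\leq\delta^{\dagger}/2$.

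The main obstacle I expect is verifying that the primal-dual witness construction of Theorem \ref{thm:subset}, particularly the Brouwer fixed-point step, still closes at the potentially much smaller radius $\delta^{\dagger}$ in the second branch. The original $\delta^{*}$ was calibrated so that the quadratic remainder in the Taylor expansion of $\nabla\log\det(\cdot)$ around $\bar{\Omega}$ stays dominated by the linear term inside a ball of that radius; any smaller radius preserves this domination, so the same contraction argument still delivers the fixed point and the accompanying $\ell_\infty$ bound. Once this monotonicity check is in hand, the sign-consistency conclusion and the probability bound \eqref{eq:probconsissub} both drop out.
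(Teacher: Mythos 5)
Your proposal tracks the paper's proof closely: it reduces sign-consistency to support inclusion plus an $\ell_\infty$ error bound smaller than $\omega_{\min}$, splits into the same two branches according to $\omega_{\min}$, verifies that the first branch is exactly the endpoint $\delta=\delta^*/2$ of Theorem~\ref{thm:subset}, and in the second branch re-runs the primal-dual witness at the tighter scale $\delta^{\dagger}$ while re-assembling the probability bound from the $Y_1+Y_2$ decomposition with deviation $\delta^{\dagger}/2$. This is the paper's route (Lemma~\ref{lem:strictdual} together with the paper's Lemma on the sufficient condition for sign-consistency, plus the tail decomposition).

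One slip worth flagging: in the second branch you assert $\delta^{\dagger}=\alpha\omega_{\min}/[4(8+\alpha)\kappa_{\bar{\Gamma}}]\leq\delta^{*}/2$ and use this to argue the witness construction still closes. That inequality is not implied by the branch condition. The branch condition $\omega_{\min}<\tfrac{2\alpha}{8+\alpha}\min\{\cdot\}$ only gives $\delta^{\dagger}<\delta^{*}$; when $\omega_{\min}$ lies between $\tfrac{\alpha}{8+\alpha}\min\{\cdot\}$ and $\tfrac{2\alpha}{8+\alpha}\min\{\cdot\}$ one has $\delta^{*}/2<\delta^{\dagger}<\delta^{*}$. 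Fortunately the actual requirement for the primal-dual witness (Lemma~\ref{lem:strictdual}) is only $\xi\in(0,\delta^{*}]$, so $\delta^{\dagger}<\delta^{*}$ already suffices for the Brouwer step and strict dual feasibility; and your concentration bound on $Y_1$ at deviation $\delta^{\dagger}/2$ and on $Y_2$ at deviation $\delta^{\dagger}/2-\beta$ needs no constraint relating $\delta^{\dagger}$ to $\delta^{*}/2$. So the conclusion stands, but replace the false $\delta^{\dagger}\leq\delta^{*}/2$ with the correct $\delta^{\dagger}<\delta^{*}$ (equivalently $\delta^{\dagger}\leq\delta^{*}$), which is what the argument actually uses.
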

According to Theorem \ref{thm:consis}, a sample complexity of $O((\log N)/K)$ per task is sufficient for the recovery of the true support union by our estimator in \eqref{eq:estimator}.

We also prove the following information-theoretic lower bound on the failure of support union recovery for some family of random multivariate sub-Gaussian distributions.
\begin{theorem} \label{thm:fano}
	For some family of $N$-dimensional random multivariate sub-Gaussian distributions of size $K$ with parameter $\sigma$ and covariance matrices $\{\bar{\Sigma}^{(k)}\}_{k=1}^K$, suppose $N\ge 5$, $\bar{\Sigma}^{(k)}=(I + H \odot Q^{(k)})^{-1}$ for $1\le k\le K$ with $Q^{(k)}\in [-1/(2d), 1/(2d)]^{N\times N}$ symmetric, degree $d\in\mathbb{Z}^{+}$ even and $H\in \{0,1\}^{N\times N}$ such that $H$ is symmetric and $H_{ij}=1$ iff $(i,j)\in E$. Thus $S:=E\cup\{(i,i)\}_{i=1}^N$ is the support union of all precision matrices. Assume $E$ is randomly generated in the following way:
	
	(i) Obtain a permutation $\pi=(\pi_1,\pi_2,...,\pi_N)$ of $V=\{1,2,...,N\}$ uniformly at random.
	
	(ii) Let $\pi_{N+j}:=\pi_j$ for $1\le j\le d/2$
	
	(iii) For $i=1,...,N$, add $(\pi_i,\pi_{i+j})$ to $E$ for $1\le j\le d/2$.
	
	Thus $d$ is the degree of the precision matrices in all tasks. Suppose that for each of the $K$ distributions, we have $n$ samples randomly drawn from them. Then for any estimate $\hat{S}$ of $S$, we have
	\begin{equation} \label{eq:probfano}
	\mathbb{P}\{\hat{S}\neq S\} \geq 1 - \frac{nNK + \log 2}{N\log N - N - \log{2N}}
	\end{equation}
\end{theorem}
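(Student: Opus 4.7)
The plan is to invoke Fano's inequality treating $S$ as the (uniformly distributed) random parameter and the pooled data $X = \{X_t^{(k)}\}_{t,k}$ as the observations. Standard Fano gives $\mathbb{P}(\hat{S}\neq S) \geq 1 - (I(S;X) + \log 2)/H(S)$ for any estimator, so the task splits into bounding $H(S)$ from below and $I(S;X)$ from above. Since the statement only requires ``some'' family, I fix the simplest realization: take $Q^{(k)} = (1/(2d))\mathbf{1}\mathbf{1}^{\text{T}}$ for every $k$ together with multivariate Gaussian samples. Then $\bar{\Omega}^{(k)} = I + H/(2d)$ is identical across tasks, $\Delta^{(k)} = 0$, and Definition \ref{def:msGGM} is trivially satisfied (Gaussians are sub-Gaussian with bounded parameter since all variances are $O(1)$).

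For the entropy, note that $\pi$ is drawn uniformly from $S_N$ and $S$ is a deterministic function of $\pi$. The underlying unlabeled graph is the circulant graph $C_N(\{1,\dots,d/2\})$, whose automorphism group is the dihedral group of order $2N$ for $d$ sufficiently smaller than $N$. By the orbit-stabilizer theorem, $S$ takes exactly $N!/(2N)$ equally likely values, so
\[
H(S) \;=\; \log\frac{N!}{2N} \;\geq\; N\log N - N - \log(2N),
\]
where the last inequality uses the elementary Stirling bound $\log N! \geq N\log N - N$.

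For the mutual information, I will use the convexity of $\mathrm{KL}(p\,\|\,\cdot)$ in its second argument together with the fact that the marginal $P_X$ is a mixture over $S$ to obtain $I(S;X) \leq \max_{s,s'} \mathrm{KL}(P_{X\mid S=s}\,\|\,P_{X\mid S=s'})$. Conditional independence of the $nK$ samples then makes this equal to $nK$ times the single-sample Gaussian KL divergence. A Gershgorin argument shows that the eigenvalues of $\bar{\Omega}$ lie in $[1/2,3/2]$ (each row sum of $H/(2d)$ is at most $d\cdot 1/(2d)=1/2$), and substituting these spectral bounds into the closed form
\[
\mathrm{KL}\!\left(\mathcal{N}(0,\bar{\Sigma}_s)\,\|\,\mathcal{N}(0,\bar{\Sigma}_{s'})\right) \;=\; \tfrac{1}{2}\bigl[\mathrm{tr}(\bar{\Sigma}_{s'}^{-1}\bar{\Sigma}_s) - N + \log\det\bar{\Sigma}_{s'} - \log\det\bar{\Sigma}_s\bigr]
\]
shows that the per-sample KL is $O(N)$; by calibrating the scaling constant in $Q^{(k)}$ one brings this quantity down to at most $N$, yielding $I(S;X) \leq nNK$. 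Plugging both ingredients into Fano's inequality then gives exactly \eqref{eq:probfano}.

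The main technical obstacle I expect is sharpening the Gaussian KL bound so that the constant multiplying $nNK$ is exactly $1$ rather than some larger absolute constant; the Gershgorin estimate leaves slack, so matching the clean form in \eqref{eq:probfano} requires careful calibration of the entries of $Q^{(k)}$ (possibly rescaling to $(1/(cd))\mathbf{1}\mathbf{1}^{\text{T}}$ for an appropriate $c$). A secondary issue is the automorphism count: for $d$ close to $N$ the circulant $C_N(\{1,\dots,d/2\})$ can acquire extra symmetries and $|\mathrm{Aut}|$ may exceed $2N$, so the argument implicitly assumes the sparse regime $d \ll N$, which is anyway the interesting case.
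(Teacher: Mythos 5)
Your plan follows the same skeleton as the paper's proof: cast the problem in a Fano framework, compute the entropy of $S$ via the symmetry group of the circulant template, and bound the mutual information through Gaussian KL with a Gershgorin eigenvalue estimate ($\lambda(\bar\Omega)\in[1/2,3/2]$ giving $\mathrm{tr}(\Omega(E')\Omega(E)^{-1})\le 3N$). The entropy count is also the same: your orbit-stabilizer argument with $|\mathrm{Aut}|=2N$ and the paper's ``bijection with circular permutations'' both land on $\log\left((N-1)!/2\right)$, and both implicitly need the automorphism group to be exactly dihedral — your caveat about $d$ close to $N$ applies to the paper's proof just as much as to yours.

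The one place you have a genuine gap is in the mutual-information step, and your proposed fix (``calibrating the scaling constant'') is not the right one. You bound $I(S;\mathbf X)$ by the \emph{maximum} pairwise KL, $\max_{s,s'}\mathrm{KL}(P_{\mathbf X\mid S=s}\|P_{\mathbf X\mid S=s'})$. For a single pair $(s,s')$ the Gaussian KL contains the term $\log\det\bar\Omega(s)-\log\det\bar\Omega(s')$, which the Gershgorin bound alone does not eliminate, and shrinking $Q$ shrinks the trace term and the $\log\det$ term in tandem — it does not make the $\log\det$ term vanish. The paper avoids the issue differently: it bounds $I(S;\mathbf X)$ by the \emph{average} over all ordered pairs $(E,E')$, and the double sum is antisymmetric in the $\log\det$ terms, which therefore cancel identically for \emph{any} admissible $Q^{(k)}$. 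That is the key observation missing from your sketch. Happily, your specific instantiation $Q^{(k)}=\frac{1}{2d}\mathbf 1\mathbf 1^{\mathrm T}$ provides an even more elementary escape that you did not invoke: with constant $Q$, $\bar\Omega(\pi)=P_\pi\bar\Omega(\mathrm{id})P_\pi^{\mathrm T}$ for every permutation $\pi$, so all the precision matrices are permutation-similar and have the \emph{same} determinant, making the $\log\det$ difference exactly zero and the per-sample max-KL exactly $\le\frac{1}{2}(3N-N)=N$ with no calibration. Either closing the argument via the antisymmetric averaging (as the paper does, and which works for arbitrary $Q^{(k)}$ in the allowed cube) or via the permutation-similarity/$\det$-invariance observation (which is tied to constant $Q$) completes your proof; ``sharpen the constant by rescaling'' does not.
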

\begin{proof}[Proof sketch for Theorem \ref{thm:fano}]
	For the random set $S$, random samples $\mathbf{X}=\{X_t^{(k)}\}_{1\le t\le n,1\le k\le K}$, and $\mathbf{Q}:=\{Q^{(k)}\}_{k=1}^K$, we prove that the conditional entropy $H(S|\textbf{Q}) = \log( (N-1)!/2)$ and the conditional mutual information $\mathbb{I}(\mathbf{X};S|\mathbf{Q})\le nNK$.
	
	By the Fano’s inequality extension in \cite{ghoshal2017information}, we have
	\begin{equation*}
	\begin{aligned}
	\mathbb{P}\{\hat{S}\neq S\} \ge
	1 - \frac{\mathbb{I}(\textbf{X};S|\textbf{Q})+\log 2}{H(S|\textbf{Q})}
	\ge 1-\frac{nNK + \log 2}{\log [(N-1)!/2]}
	\end{aligned}
	\end{equation*}
	which leads to \eqref{eq:probfano}. The detailed proof is in the supplementary material.
\end{proof}
According to Theorem \ref{thm:fano}, if the sample size per distribution is $n\le(\log N)/(2K)-1/(2K)-(\log(8N))/(2NK)$, then with probability larger than $1/2$, any method will fail to recover the support union of the multiple random multivariate sub-Gaussian distributions specified in Theorem \ref{thm:fano}. Thus a sample complexity of $\Omega((\log N)/K)$ per task is necessary for the support union recovery of the $N$-dimensional multivariate sub-Gaussian distributions in $K$ tasks, which, combined with Theorem \ref{thm:consis}, indicates that our estimate \eqref{eq:estimator} is minimax optimal with a necessary and sufficient sample complexity of $\Theta((\log N)/K)$ per task.

\subsubsection{SUPPORT RECOVERY FOR NOVEL TASK} \label{sec:suppnovel}
For the novel task, the next theorem proves a probability lower bound for the sign-consistency of the estimate \eqref{eq:estimatorNovel1}.
\begin{theorem} \label{thm:novel}
	Suppose we have recovered the true support union $S$ of a family of $N$-dimensional random multivariate sub-Gaussian distributions of size $K$ with parameter $\sigma$ described in Definition \ref{def:msGGM} with $n^{(k)}=n$ for $k=1,..,K$. For a novel task of multivariate sub-Gaussian distribution with precision matrix $\bar{\Omega}^{(K+1)}$ such that $\text{supp}(\bar{\Omega}^{(K+1)})\subseteq S$ and satisfying Assumption~\ref{assump:novel}, consider the estimator $\hat{\Omega}^{(K+1)}$ obtained in \eqref{eq:estimatorNovel1} with 
		$\lambda=\frac{8\delta^{(K+1),\dagger}}{\alpha^{(K+1)}}$ where 
		\begin{equation*}
		\delta^{(K+1),\dagger}:=\left\{
		\begin{aligned}
		&\frac{(\alpha^{(K+1)})^2}{2\kappa_{\bar{\Gamma}}(\alpha^{(K+1)}+8)^2d^{(K+1)}}\min\Big\{\frac{1}{3\kappa_{\bar{\Sigma}^{(K+1)}}},
		\frac{1}{3\kappa_{\bar{\Sigma}^{(K+1)}}^3\kappa_{\bar{\Gamma}^{(K+1)}}}\Big\},\\ 
		&\quad\quad\quad \text{if}\ \omega_{\min}^{(K+1)}\ge\frac{2\alpha^{(K+1)}}{(8+\alpha^{(K+1)})d^{(K+1)}}
		\min\left\{\frac{1}{3\kappa_{\bar{\Sigma}^{(K+1)}}},
		\frac{1}{3\kappa_{\bar{\Sigma}^{(K+1)}}^3\kappa_{\bar{\Gamma}^{(K+1)}}}\right\}, \\
		&\frac{\alpha^{(K+1)}\omega^{(K+1)}_{\min}}{4(8+\alpha^{(K+1)})\kappa_{\bar{\Gamma}^{(K+1)}}},\ \ 
		\text{otherwise.}
		\end{aligned}
		\right.
		\end{equation*}
		If $\|\bar{\Sigma}^{(K+1)}\|_{\infty}\le \gamma^{(K+1)}$, then with probability at least,
		\begin{equation} \label{eq:newprobconsissub2}
		\begin{aligned}
		1 - 2|S_{\text{off}}|\exp\left(-\frac{n^{(K+1)}}{2}
		\min\left\{\frac{(\delta^{(K+1),\dagger})^2}
		{64(1+4\sigma^2)^2(\gamma^{(K+1)})^2},1\right\}\right)
		\end{aligned}
		\end{equation}
	the estimator $\hat{\Omega}^{(K+1)}$ is sign-consistent and thus $\text{supp}(\hat{\Omega}^{(K+1)})=\text{supp}\left(\bar{\Omega}^{(K+1)}\right)$.
\end{theorem}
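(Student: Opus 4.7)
The plan is to adapt the primal-dual witness argument of Ravikumar et al.\ \cite{ravikumar2011high} (as used in the proofs of Theorems \ref{thm:subset} and \ref{thm:consis}) to the constrained problem \eqref{eq:estimatorNovel1}. The crucial simplification coming from the constraints is that $\text{diag}(\Omega)$ is pinned and $\text{supp}(\Omega)\subseteq S$, so the only ``free'' coordinates of $\Omega$ are the off-diagonal entries indexed by $S_{\text{off}}$. Effectively $S_{\text{off}}$ replaces the full off-diagonal index set as the ambient space, while the true support $S^{(K+1)}_{\text{off}}$ plays the role of the active set and $S_{\text{off}}\setminus S^{(K+1)}_{\text{off}}$ plays the role of the ``inactive'' set on which strict dual feasibility must be verified. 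This is precisely why the final probability bound \eqref{eq:newprobconsissub2} has $|S_{\text{off}}|$ where the analogues in \eqref{eq:probsubsetsub} and \eqref{eq:probconsissub} had $N(N+1)$.

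First I would write down the KKT conditions of \eqref{eq:estimatorNovel1}, with a subgradient $\hat{Z}\in\partial\|\hat{\Omega}^{(K+1)}\|_1$ supported on $S_{\text{off}}$ and Lagrange multipliers enforcing the support and diagonal constraints, giving a stationarity equation of the form $\hat{\Sigma}^{(K+1)}-(\hat{\Omega}^{(K+1)})^{-1}+\lambda\hat{Z}+M=0$, where $M$ absorbs the diagonal/support multipliers. I would then construct a primal-dual witness $\tilde{\Omega}$ by solving the same optimization with the strengthened support constraint $\text{supp}(\Omega)\subseteq S^{(K+1)}$, picking $\tilde{Z}$ equal to $\text{sign}(\bar{\Omega}^{(K+1)})$ on $S^{(K+1)}_{\text{off}}$ and solving for $\tilde{Z}$ on $S_{\text{off}}\setminus S^{(K+1)}_{\text{off}}$ from the stationarity equation.

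Next, exactly in the style of the proof sketch of Theorem \ref{thm:subset}, I would Taylor-expand $(\tilde{\Omega})^{-1}$ around $(\bar{\Omega}^{(K+1)})^{-1}$ and express the restriction of the stationarity equation to $S^{(K+1)}_{\text{off}}$ as a fixed-point map on the error $\Delta:=\tilde{\Omega}-\bar{\Omega}^{(K+1)}$; Brouwer's fixed point theorem then yields an $\ell_\infty$ bound of the form $\|\Delta\|_\infty\le \kappa_{\bar{\Gamma}^{(K+1)}}(8/\alpha^{(K+1)}+1)\lambda$ once $\lambda$ dominates $\|W^{(K+1)}\|_{\infty,S_{\text{off}}}$, where $W^{(K+1)}=\hat{\Sigma}^{(K+1)}-\bar{\Sigma}^{(K+1)}$. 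Verifying strict dual feasibility $\|\tilde{Z}_{S_{\text{off}}\setminus S^{(K+1)}_{\text{off}}}\|_\infty<1$ proceeds via mutual incoherence (Assumption \ref{assump:novel}) together with bounds on the Taylor remainder in terms of $d^{(K+1)}$, $\kappa_{\bar{\Sigma}^{(K+1)}}$, and $\kappa_{\bar{\Gamma}^{(K+1)}}$. The two-branch definition of $\delta^{(K+1),\dagger}$ is chosen exactly to ensure simultaneously that (a) the fixed-point argument is valid, (b) strict dual feasibility holds, and (c) $\|\Delta\|_\infty<\omega^{(K+1)}_{\min}$ so no non-zero entry of $\bar{\Omega}^{(K+1)}$ flips sign; then $\tilde{\Omega}$ equals the unique solution $\hat{\Omega}^{(K+1)}$ by Lemma \ref{lem:convex} and sign-consistency follows.

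The probability factor $2|S_{\text{off}}|\exp(-n^{(K+1)}/2\cdot\min\{\cdot,1\})$ comes from controlling $\|W^{(K+1)}\|_{\infty,S_{\text{off}}}$ by the same sub-Gaussian/Hanson--Wright style bound on $|(\hat{\Sigma}^{(K+1)}-\bar{\Sigma}^{(K+1)})_{ij}|$ used for $Y_1$ in the proof of Theorem \ref{thm:subset}, but taking a union bound only over $(i,j)\in S_{\text{off}}$ rather than over all $N(N+1)$ pairs. The per-coordinate bound uses $\|\bar{\Sigma}^{(K+1)}\|_\infty\le\gamma^{(K+1)}$ in place of the Definition \ref{def:msGGM}(ii) bound on $\|(\bar\Omega+\Delta)^{-1}\|_\infty$. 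I expect the main obstacle to be handling the fact that the diagonal is pinned to $\text{diag}(\hat{\Omega})$ rather than $\text{diag}(\bar{\Omega}^{(K+1)})$: this mismatch contributes a deterministic perturbation to the stationarity equation that must be shown either to be absorbed into the Lagrange multipliers for the diagonal constraint (so it never propagates into the off-diagonal error analysis) or to be harmless because it only affects the diagonal block of the Hessian. Verifying that this decoupling really does hold, so that the probability bound depends only on $n^{(K+1)}$ and $|S_{\text{off}}|$ with no residual dependence on the auxiliary-task quantities, is the delicate step in the argument.
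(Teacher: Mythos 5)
Your proposal follows the paper's proof closely: the paper also runs the primal-dual witness argument for the constrained Lagrangian of \eqref{eq:estimatorNovel1}, reduces the stationarity equation to an off-diagonal system over $S_{\text{off}}$, reuses the fixed-point machinery (Lemma~\ref{lem:strictdual} with $K=1$), and closes the probability bound by a union bound over $S_{\text{off}}$ with the same sub-Gaussian concentration lemma.

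The ``delicate step'' you flag --- the diagonal being pinned to $\text{diag}(\hat{\Omega})$ rather than $\text{diag}(\bar{\Omega}^{(K+1)})$ --- is exactly what the paper resolves by its choice of multipliers. In the Lagrangian
\[
L(\Omega,\mu,\nu)=\ell^{(K+1)}(\Omega)+\lambda\|\Omega\|_1+\langle\mu,\Omega\rangle+\langle\nu,\text{diag}(\Omega-\hat{\Omega})\rangle,
\]
the paper sets $\mu=(\bar{\Sigma}^{(K+1)}_{S^c},0)$ and, crucially, $\nu=\text{diag}(\bar{\Sigma}^{(K+1)}-\hat{\Sigma}^{(K+1)})$. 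With this choice the stationarity equation becomes
\[
\hat{\Sigma}^{(K+1),S}-\tilde{\Omega}^{-1}+\lambda\tilde{Z}+\bar{\Sigma}^{(K+1),S^c}+I_N\,\text{diag}\bigl(\bar{\Sigma}^{(K+1)}-\hat{\Sigma}^{(K+1)}\bigr)=0,
\]
and after substituting $\tilde{\Omega}^{-1}=\bar{\Sigma}^{(K+1)}-\bar{\Sigma}^{(K+1)}\Psi\bar{\Sigma}^{(K+1)}+R(\Psi)$ the four diagonal contributions ($I_N\,\text{diag}(\hat{\Sigma}^{(K+1)})$ from $\hat{\Sigma}^{(K+1),S}$, $-I_N\,\text{diag}(\bar{\Sigma}^{(K+1)})$ from $-\bar{\Sigma}^{(K+1),S}$, and the two terms in $\nu$) cancel identically, leaving
\[
\bar{\Sigma}^{(K+1),S_{\text{off}}}-\hat{\Sigma}^{(K+1),S_{\text{off}}}+\bar{\Sigma}^{(K+1)}\Psi\bar{\Sigma}^{(K+1)}-R(\Psi)+\lambda\tilde{Z}=0,
\]
i.e.\ an equation driven only by $W^{(K+1)}:=[\bar{\Sigma}^{(K+1)}-\hat{\Sigma}^{(K+1)}]_{S_{\text{off}}}$. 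So your anticipated resolution (``absorbed into the Lagrange multipliers for the diagonal constraint'') is exactly what happens, and the cancellation holds for \emph{any} value of $\text{diag}(\hat{\Omega})$; the pinned diagonal therefore contributes no deterministic perturbation, and the probability bound depends only on $n^{(K+1)}$, $\gamma^{(K+1)}$, and $|S_{\text{off}}|$ as you conjectured.

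Two small mismatches worth noting. First, you propose re-deriving Brouwer afresh; the paper instead observes that the reduced stationarity equation is literally the $K=1$ instance of the auxiliary-task one and invokes Lemma~\ref{lem:strictdual}. Second, you say the union bound is over $|S_{\text{off}}|$ coordinates; in the paper's accounting the per-entry tail bound contributes a factor $4$ and the union is over $|S_{\text{off}}|/2$ unordered pairs (using symmetry of $S_{\text{off}}$), which is where the constant $2|S_{\text{off}}|$ in \eqref{eq:newprobconsissub2} comes from. Neither affects the overall structure of your argument.
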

\begin{proof}[Proof sketch for Theorem \ref{thm:novel}]
	We use the primal-dual witness approach. Since we have two constraints in \eqref{eq:estimatorNovel1}, we can consider the Lagrangian
	\begin{equation} \label{eq:lagrangian_ske}
	L(\Omega,\mu,\nu) =\ell^{(K+1)}(\Omega)+\lambda\|\Omega\|_1+\langle\mu,\Omega\rangle+\langle\nu, \text{diag}(\Omega-\hat{\Omega})\rangle
	\end{equation}
	where $\mu\in\mathbb{R}^{N\times N}, \nu\in\mathbb{R}^{N}$ are the Lagrange multipliers satisfying $\mu_S=0$. Here we set $\mu=(\bar{\Sigma}^{(K+1)}_{S^c},0)$ (i.e., entries of $\mu$ with index in $S$ equal 0 and entries of $\mu$ with index in $S^c$ equal corresponding entries of $\bar{\Sigma}$) and $\nu=\text{diag}(\bar{\Sigma}^{(K+1)}-\hat{\Sigma}^{(K+1)})$ in \eqref{eq:lagrangian_ske}. Then we can show that it suffices to bound $W^{(K+1)}:=[\hat{\Sigma}^{(K+1)}-\bar{\Sigma}^{(K+1)}]_{S_{\text{off}}}$ for the strict dual feasibility condition to hold. $W^{(K+1)}$ can be bounded by the sub-Gaussianity of the data. The detailed proof is in the supplementary material.
\qedhere
\end{proof}
This theorem shows that $n^{(K+1)}\in O(\log(|S_{\text{off}}|))$ is sufficient for recovering the true support of the novel task with our estimate \eqref{eq:estimatorNovel1}. Therefore, the overall sufficient sample complexity for the sign-consistency of the estimators in the two steps of our meta learning approach is $O(\log(N)/K)$ for each auxiliary task and $O(\log(|S_{\text{off}}|))$ for the novel task.

We also prove the following information-theoretic lower bound for the failure of support recovery for some random multivariate sub-Gaussian distribution where the support set is a subset of a known set $S_{\text{off}}$.
\begin{theorem} \label{thm:fano_novel}
	For $n$ samples generated from some $N$-dimensional multivariate sub-Gaussian distribution with $N\ge 4$,
	suppose the true covariance matrix is $\bar{\Sigma}=(I + H \odot Q)^{-1}$ with $Q\in [-\frac{1}{N\log s},\frac{1}{N\log s}]^{N\times N}$ symmetric and $H\in \{0,1\}^{N\times N}$ such that $H$ is symmetric and $H_{ij}=1$ iff $(i,j)\in E^{(K+1)}$. Thus $S^{(K+1)}:=E^{(K+1)}\cup\{(i,i)\}_{i=1}^N$ is the support set of the precision matrix of this distribution. Assume $E^{(K+1)}$ is chosen uniformly at random from the edge set family $\mathcal{E}:=\{E\subseteq S_{\text{off}}:(i,j)\in E\implies(j,i)\in E\}$ for a known edge set $S_{\text{off}}$. Define $s:=|S_{\text{off}}|$. Assume $4\le s\le N$. 
	Then for any estimate $\hat{S}^{(K+1)}$ of $S^{(K+1)}$, we have
	\begin{equation} \label{eq:probfano_novel}
	\mathbb{P}\{\hat{S}^{(K+1)}\neq S^{(K+1)}\} \geq 1-\frac{4n}{(\log2)(\log s)}-\frac{2}{s}
	\end{equation}
\end{theorem}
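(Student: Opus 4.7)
My plan is to use the Fano's inequality extension of~\cite{ghoshal2017information} used in the proof sketch of Theorem~\ref{thm:fano}: since $E^{(K+1)}$ is uniform on $\mathcal{E}$,
\begin{equation*}
\mathbb{P}\{\hat{S}^{(K+1)}\neq S^{(K+1)}\} \ge 1 - \frac{\mathbb{I}(\mathbf{X};S^{(K+1)})+\log 2}{H(S^{(K+1)})},
\end{equation*}
and the task reduces to bounding the entropy from below and the mutual information from above. Because $S_{\text{off}}$ is a symmetric off-diagonal set, it decomposes into $s/2$ unordered index pairs $\{(i,j),(j,i)\}$, and any symmetric $E\subseteq S_{\text{off}}$ is determined by its choice among these pairs; hence $|\mathcal{E}|=2^{s/2}$ and the uniform prior yields $H(S^{(K+1)})=(s/2)\log 2$. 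This already produces $\log 2/H(S^{(K+1)})=2/s$, which accounts for the second subtracted term of the stated bound.

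For the mutual information, I would exhibit a worst-case sub-Gaussian model within the stated class, e.g.\ the multivariate Gaussian with covariance $(I+H\odot Q)^{-1}$ and $Q$ at the boundary $|Q_{ij}|=1/(N\log s)$; this keeps the precision matrices close to the identity so that the KL perturbation analysis is tractable, while still satisfying the constraint on $Q$. Since the $n$ samples are i.i.d.\ given $S^{(K+1)}$, $\mathbb{I}(\mathbf{X};S^{(K+1)})=n\,\mathbb{I}(X_1;S^{(K+1)})$, and by convexity of KL together with the uniform prior on $\mathcal{E}$, $\mathbb{I}(X_1;S^{(K+1)})$ is controlled by a (possibly averaged) KL divergence between pairs of distributions indexed by $E_1,E_2\in\mathcal{E}$. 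Expanding the Gaussian KL formula to second order about $\Sigma=I$ gives a bound of order $\|\Sigma_1^{-1}-\Sigma_2^{-1}\|_F^2=\|(H_1-H_2)\odot Q\|_F^2$; combined with the fact that $(H_1-H_2)\odot Q$ has at most $s$ nonzero entries, each bounded in magnitude by $1/(N\log s)$, this should give the target $\mathbb{I}(\mathbf{X};S^{(K+1)})\le 2ns/\log s$, which when divided by $H(S^{(K+1)})=(s/2)\log 2$ produces exactly the $4n/((\log 2)\log s)$ term in~\eqref{eq:probfano_novel}.

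Combining the two estimates via the Fano extension then yields the claimed inequality. The main technical obstacle is the mutual information step: getting the advertised $\log s$ scaling (rather than the $(\log s)^2$ scaling one gets by naively plugging a worst-case pair $E_1,E_2$ into the Gaussian KL bound) likely requires averaging the KL divergences over pairs drawn from the uniform prior on $\mathcal{E}$, or else decomposing $\mathbb{I}(X_1;S^{(K+1)})$ additively over the $s/2$ independent edge-indicator variables $H_{ij}$ for $(i,j)\in S_{\text{off}}$ and bounding each per-edge contribution by the corresponding binary hypothesis KL; the rest of the proof (counting $\mathcal{E}$, applying Fano, and the final algebra) is routine.
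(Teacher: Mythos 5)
Your overall strategy --- invoking the Fano extension, computing $H(E^{(K+1)})=(s/2)\log 2$ from $|\mathcal{E}|=2^{s/2}$, and then upper-bounding the mutual information --- matches the paper, and the entropy step is exactly what the paper does. The two proofs part ways at the mutual-information step, and your worry there is actually backwards. The paper does not Taylor-expand the Gaussian KL divergence. It places the spectrum of each $\Omega(E)=I+Q\odot\mathrm{mat}(E)$ in $[1-s/(2N\log s),\,1+s/(2N\log s)]$ by Gershgorin, then applies the eigenvalue-product bound $\mathrm{tr}(\Omega(\tilde E)\Omega(E)^{-1})\le N\lambda_{\max}(\Omega(\tilde E))\lambda_{\max}(\Omega(E)^{-1})$, and observes that the $\log\det$ contributions cancel in the symmetric double sum over $(E,\tilde E)\in\mathcal{E}^2$. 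That directly yields
\begin{equation*}
\mathbb{I}(\mathbf{X};E^{(K+1)}|Q)\le\frac{ns}{2\log s}\cdot\frac{1}{1-s/(2N\log s)}\le\frac{2ns}{\log s},
\end{equation*}
with no averaging trick or per-edge decomposition: the $1/\log s$ falls out of $|Q_{ij}|\le 1/(N\log s)$ once you accept the factor-$N$ slack in the trace bound, and $4\le s\le N$ controls the final ratio.

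Your Frobenius / second-order route is a genuinely different approach and would in fact give a \emph{smaller} mutual-information estimate, roughly $n\cdot\tfrac{1}{4}\|(H_1-H_2)\odot Q\|_F^2\lesssim ns/(N^2(\log s)^2)$. Since Fano requires only an \emph{upper} bound on the mutual information, a tighter estimate is an asset, not an obstacle; the $1/(\log s)^2$ scaling you were worried about would only strengthen the theorem. What would still need filling in on your route is a rigorous control of the remainder when replacing $\mu-1-\log\mu$ by $(\mu-1)^2/2$ (the paper's exact eigenvalue/trace bound avoids this), and the short observation the paper uses that $\mathbb{P}\{\hat S^{(K+1)}\neq S^{(K+1)}\}\ge\mathbb{P}\{\hat E^{(K+1)}\neq E^{(K+1)}\}$ so that Fano applied to the edge set suffices. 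Your alternative of decomposing $\mathbb{I}(X_1;S^{(K+1)})$ additively over the $s/2$ edge indicators is not straightforward, since a single observation carries correlated information about all edges at once; the paper does not attempt that and does not need to.
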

\begin{proof}[Proof sketch for Theorem \ref{thm:fano_novel}]
	For the random set $S^{(K+1)}$, random vectors $\mathbf{X}=\{X_t\}_{t=1}^n$, and $Q$, we prove that the conditional entropy $H(S^{(K+1)}|Q) = \log|\mathcal{E}|\ge \frac{s}{2}\log2$ and the conditional mutual information $\mathbb{I}(\mathbf{X};S^{(K+1)}|Q)\le \frac{2ns}{\log s}$.
	
	By the Fano’s inequality extension in \cite{ghoshal2017information}, we have
	\begin{equation*}
	\begin{aligned}
	\mathbb{P}\{\hat{S}^{(K+1)}\neq S^{(K+1)}\} & \ge
	1 - \frac{\mathbb{I}(\textbf{X};S^{(K+1)}|Q)+\log 2}{H(S^{(K+1)}|Q)}\\ &
	\ge 1-\frac{4n}{(\log2)(\log s)}-\frac{2}{s}
	\end{aligned}
	\end{equation*}
	The detailed proof is in the supplementary material.
\end{proof}
According to Theorem~\ref{thm:fano_novel}, if $n\le \frac{\log2}{8}\log s-\frac{\log2}{2s}\log s$, then $\mathbb{P}\{S^{(K+1)}\neq \hat{S}^{(K+1)}\} \ge\frac{1}{2}$, which indicates that the necessary sample complexity for the support recovery of the novel task is $\Omega(\log s)=\Omega(\log |S_{\text{off}}|)$ and our estimate \eqref{eq:estimatorNovel1} is minimax optimal. Therefore, our two-step meta learning method is minimax optimal.

\subsection{Computational Complexity}
Several algorithms have been developed to solve the $\ell_1$-regularized log-determinant Bregman divergence minimization \cite{hsieh2012divide, hsieh2013big, johnson2012high, cai2011constrained}. We have proved in Lemma \ref{lem:convex} that the problems in \eqref{eq:estimator} and \eqref{eq:estimatorNovel1} are convex, which therefore can be solved in polynomial time with respect to the dimension of the random vector $N$ by using interior point methods \cite{boyd2004convex}. Further, state-of-the-art methods for inverse covariance estimation can potentially scale to a million variables \cite{hsieh2013big}.

\section{Validation Experiments}
We validate our theories with synthetic experiments by reporting the success rate for the recovery of the support union. We simulate Erdos-Renyi random graphs in this experiment and compare the results of our estimator in \eqref{eq:estimator} with four multi-task learning methods

For Figure \ref{fig:bysamp}, we fix the number of auxiliary tasks $K=10$ and run experiments with sample size per auxiliary task $n=(C\log N)/K$ for $C$ ranging from 5 to 200. We can see that our method sucessfully recovers the true support union with probability close to 1 when the sample size per auxiliary task is in the order of $O((\log N)/K)$ while the four multi-task learning methods fail. This result provides experimental evidence for Theorem \ref{thm:consis}.

For Figure \ref{fig:bytasks}, we run experiments for different number of auxiliary tasks $K$ that ranges from 2 to 100 with the sample size per auxiliary task $n=200(\log N)/K$. According to Figure \ref{fig:bytasks}, for our method, the support union recovery probability increases with $K$ and converges to 1 for $K$ large enough. For the four multi-task learning methods, however, the probability decreases to 0 as $K$ grows. The results indicate that even with a small number of samples per auxiliary task, we can get a sufficiently accurate estimate using our meta learning method by introducing more auxiliary tasks. 

The details of the simulation and other real-world data experiments are in the supplementary material.

\begin{figure}[ht]
	\vskip 0.1in
	\centering
	\begin{subfigure}{0.32\columnwidth}
		\centering
		\includegraphics[width=\linewidth]{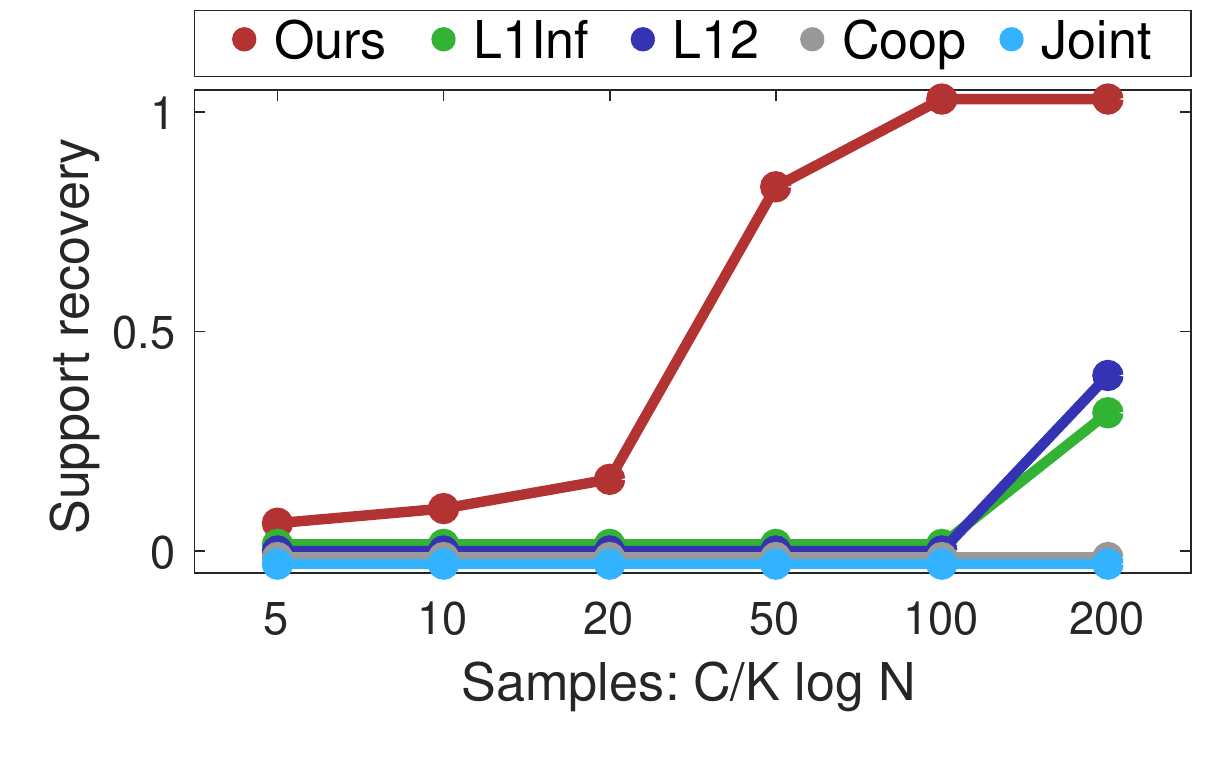}
		\centering
		\caption{Results when $N=10$}
		\label{fig:bysamp10}
	\end{subfigure}
	\begin{subfigure}{0.32\columnwidth}
		\centering
		\includegraphics[width=\linewidth]{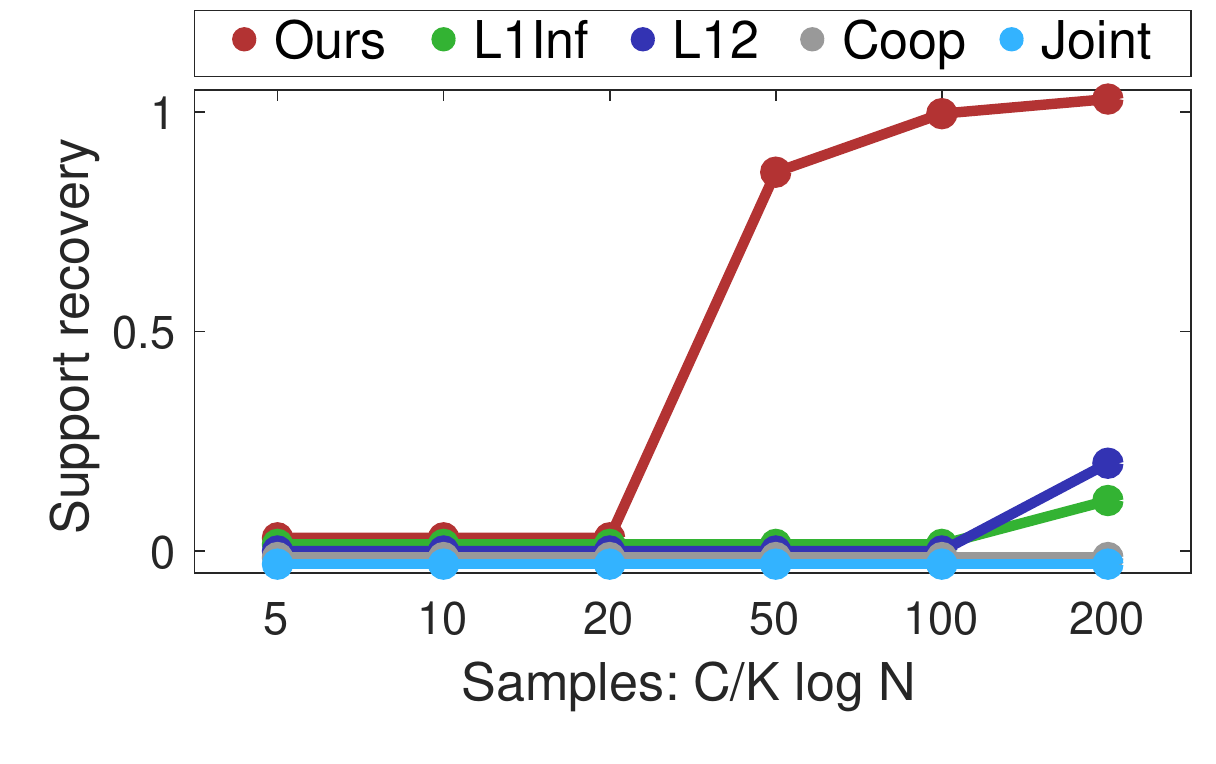}
		\centering
		\caption{Results when $N=20$}
		\label{fig:bysamp20}
	\end{subfigure}
	\begin{subfigure}{0.32\columnwidth}
		\vskip 0.1in
		\centering
		\includegraphics[width=\linewidth]{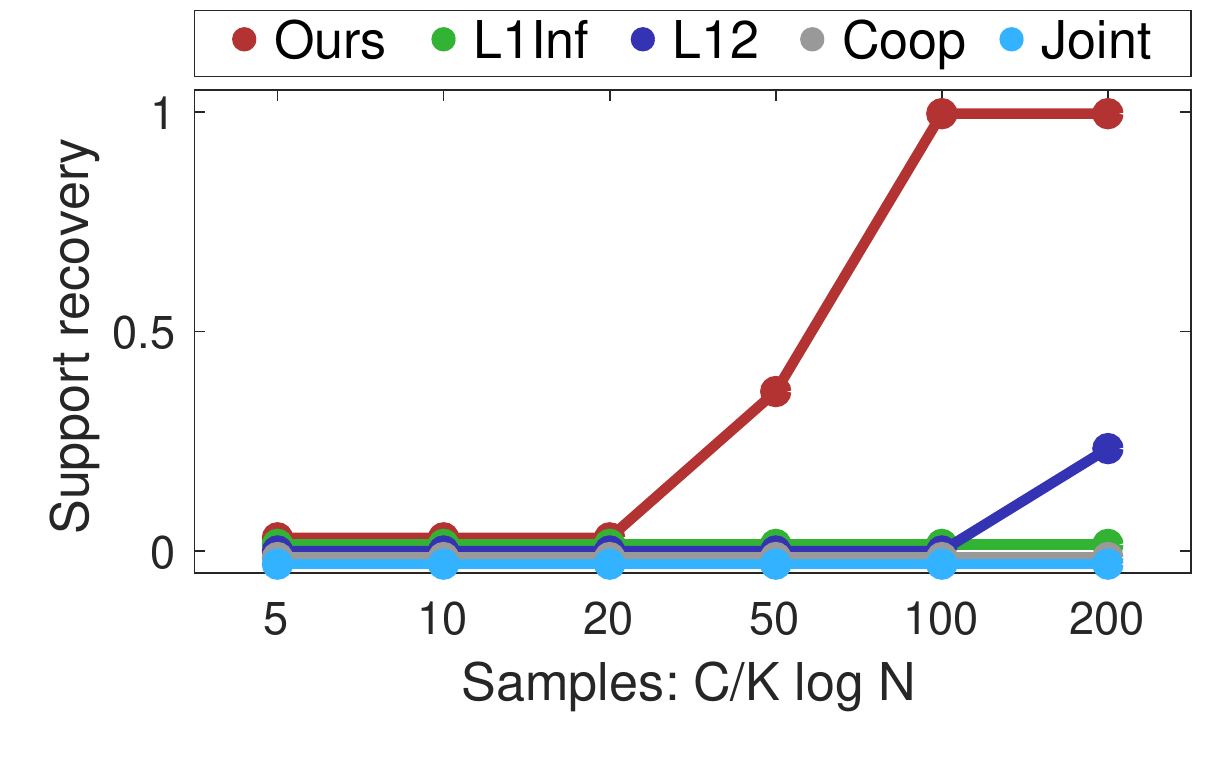}
		\centering
		\caption{Results when $N=50$}
		\label{fig:bysamp50}
	\end{subfigure}
	\caption{The success rate of support union recovery for different sample size $n=(C\log N)/K$ and task size $K=10$. Y-axis shows the success probability and X-axis shows the values of $C$. ``Ours'' is our meta learning method, which we compare against several multitask methods. ``L1Inf'' is the $\ell_{1,\infty}$-regularized method \cite{Honorio10b}. ``L12'' is the $\ell_{1,2}$-regularized method \cite{Varoquaux10}. ``Coop'' is the Cooperative-LASSO method in \cite{chiquet2011inferring}. ``Joint'' is the joint estimation method in \cite{guo2011joint}.}
	\label{fig:bysamp}
	\vskip -0.2in
\end{figure}

\begin{figure}[ht]
	\vskip 0.1in
	\centering
	\begin{subfigure}{0.32\columnwidth}
		\centering
		\includegraphics[width=\linewidth]{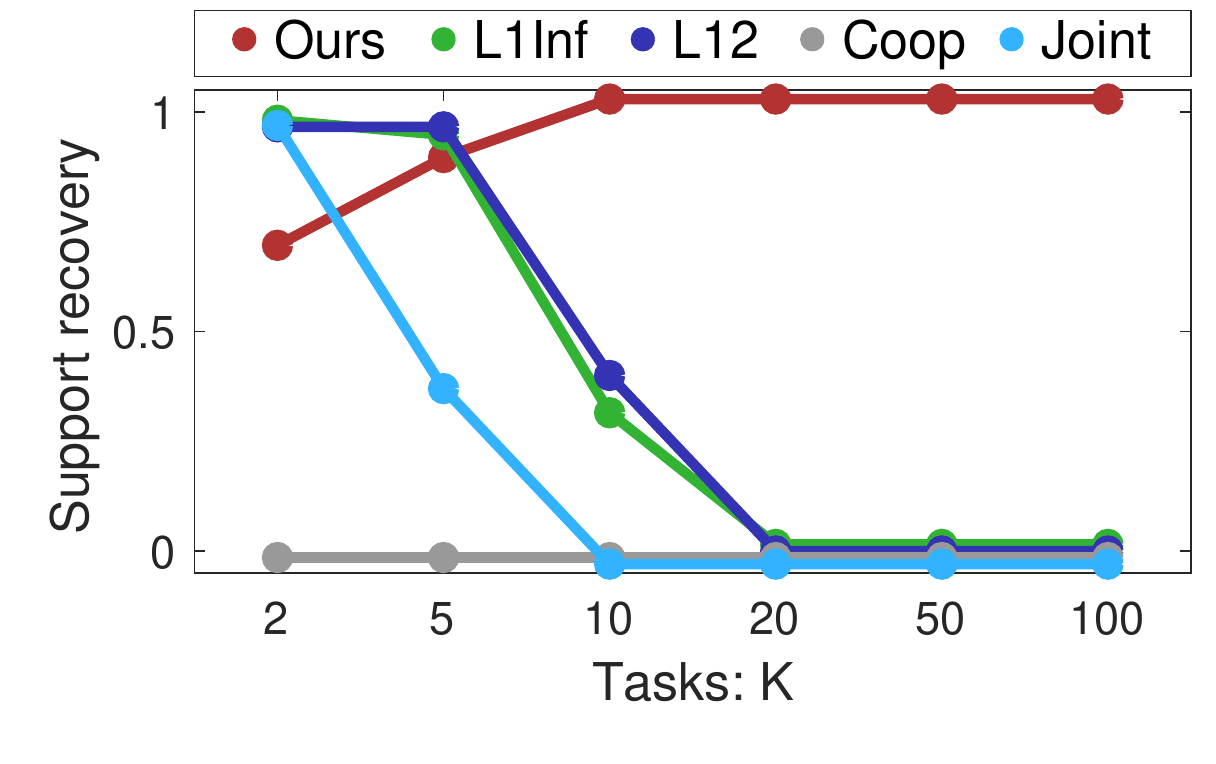}
		\centering
		\caption{Results when $N=10$}
		\label{fig:bytasks10}
	\end{subfigure}
	\begin{subfigure}{0.32\columnwidth}
		\centering
		\includegraphics[width=\linewidth]{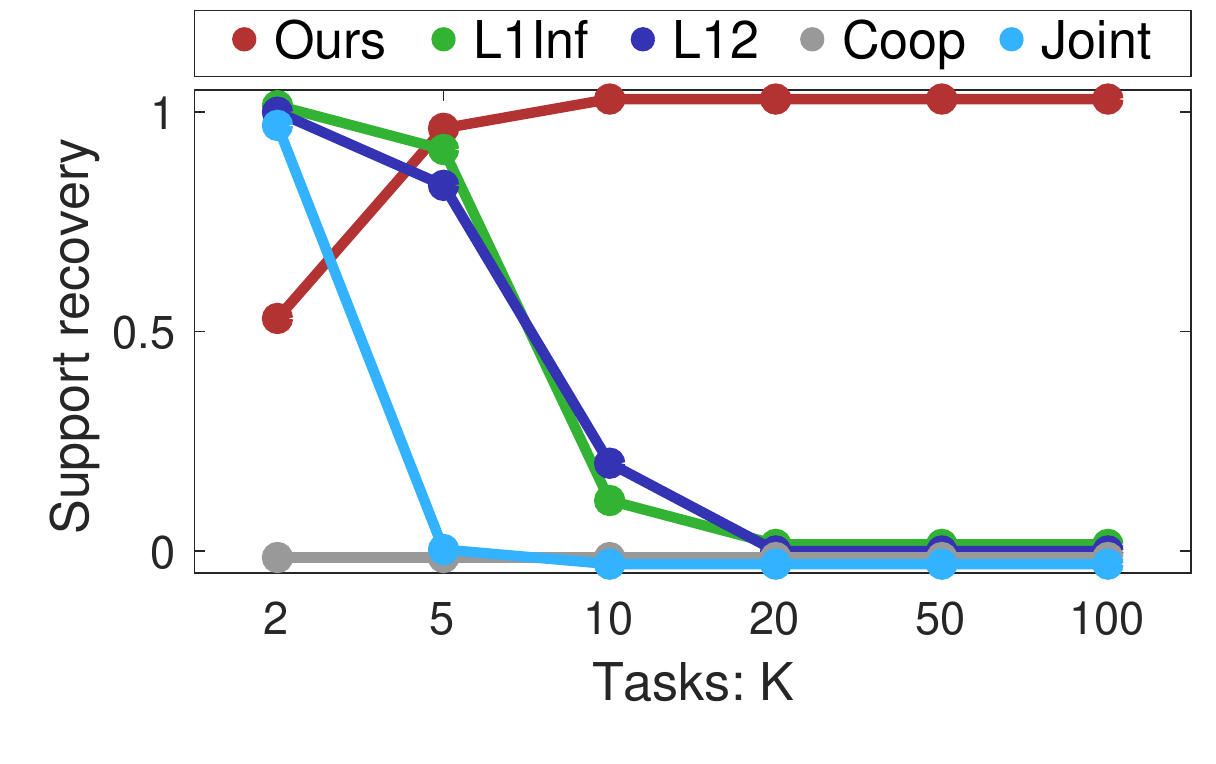}
		\centering
		\caption{Results when $N=20$}
		\label{fig:bytasks20}
	\end{subfigure}
	\begin{subfigure}{0.32\columnwidth}
		\vskip 0.1in
		\centering
		\includegraphics[width=\linewidth]{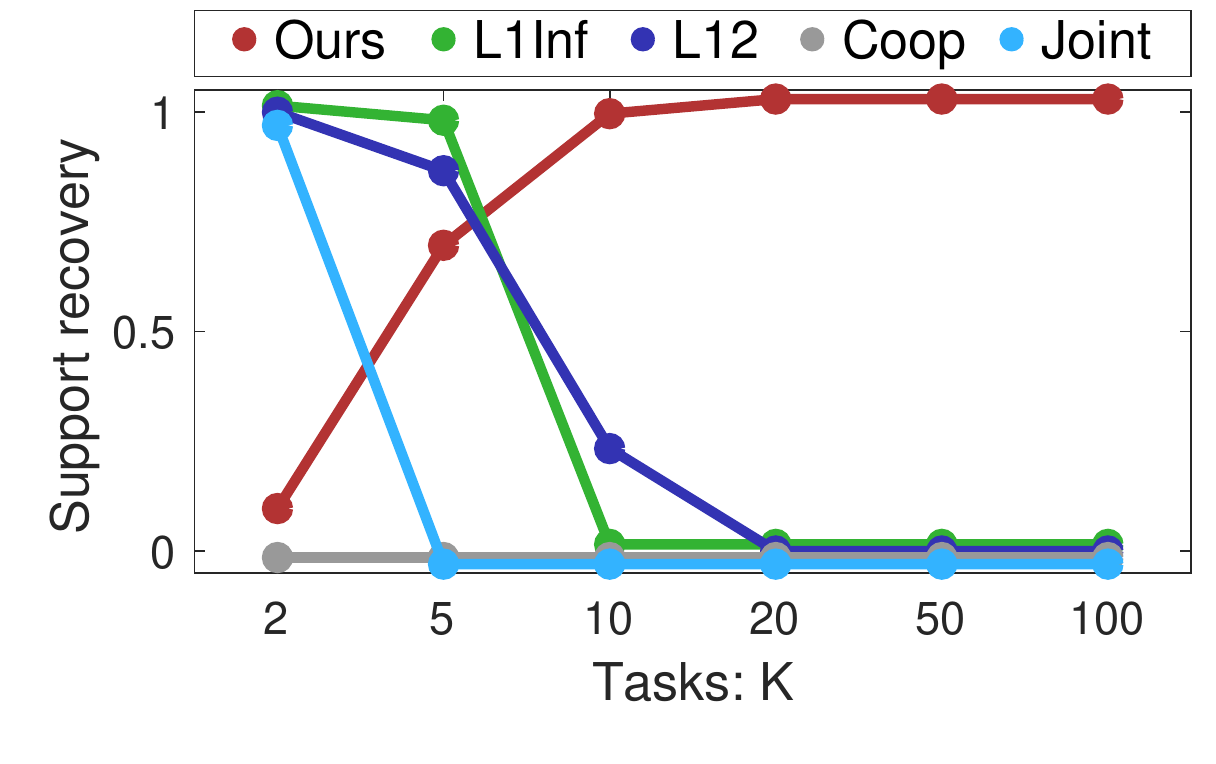}
		\centering
		\caption{Results when $N=50$}
		\label{fig:bytasks50}
	\end{subfigure}
	\caption{The success rate of support union recovery for different task size $K$ with the sample size per task $n=(200\log N)/K$. Y-axis shows the success probability and X-axis shows the value of $K$. ``Ours'' is our meta learning method, which we compare against several multitask methods. ``L1Inf'' is the $\ell_{1,\infty}$-regularized method \cite{Honorio10b}. ``L12'' is the $\ell_{1,2}$-regularized method \cite{Varoquaux10}. ``Coop'' is the Cooperative-LASSO method in \cite{chiquet2011inferring}. ``Joint'' is the joint estimation method in \cite{guo2011joint}.}
	\label{fig:bytasks}
	\vskip -0.2in
\end{figure}

\section{Conclusion}
We develop a meta learning approach for support recovery in precision matrix estimation. Specifically, we pool all the samples from $K$ auxiliary tasks with $K$ random precision matrices, and estimate a single precision matrix by $\ell_{1}$-regularized log-determinant Bregman divergence minimization to recover the support union of the auxiliary tasks. Then we estimate the precision matrix of the novel task with the constraint that its support set is a subset of the support union to reduce the sufficient sample complexity. 
We prove that the sample complexities of $O((\log N)/K)$ per auxiliary task and $O(\log(|S_{\text{off}}|))$ for the novel task are sufficient for our estimators to recover the support union and the support of the precision matrix of the novel task. We also prove that our meta learning method is minimax optimal. Synthetic experiments are conducted and validate our theoretical results.

\clearpage

\bibliographystyle{plainnat}
\bibliography{references}

\clearpage

\begin{appendices}

\section{Additional Experiments}
\subsection{Details of Validation Experiments}
In our simulation experiment in Section 5, we use the Erdos-Renyi random graphs. We first generate $\bar{\Omega}$ by assigning an edge with probability $d/(N-1)$ for each pair of nodes $(i,j)$. Then for each edge $(i,j)$, we set $\bar{\Omega}_{ij}=\bar{\Omega}_{ji}$ to $1$ with probability 0.5 and to $-1$ otherwise. For $1\le k\le K$ and $(i,j)\in S$, $\bar{\Omega}^{(k)}_{ij}$ is set to $\bar{\Omega}_{ij}X_{ij}$ with $X_{ij}\sim$Bernoulli$(0.9)$. Then we add some constants to the diagonal elements of all the precision matrices to ensure they are positive-definite. 

\subsection{Real-world Data Experiment}
We conducted an experiment with real-world data from \cite{kouno2013temporal} using our two-step meta learning method. 
The dataset contains 8 tasks, each with 120 samples. We use 10 samples of each task 1 to 7 to recover the support union and then use 10 samples of task 8 to recover its precision matrix. 
In Table \ref{Tab:realdata}, we report the negative log-determinant Bregman divergence (i.e., the log-likelihood of a multivariate Gaussian distribution) of our meta-learning method for task 8 and compare it with the results of multi-task methods.
\begin{table}[h!]
	\vskip 0.1in
	\caption{Negative log-determinant Bregman divergence of the estimated precision matrices of task 8 using different methods.}
	\label{Tab:realdata}
	\vskip 0.05in
	\centering
	\begin{tabularx}{\textwidth}{lX}
		\toprule
		Method     & Negative log-determinant Bregman divergence     \\
		\midrule
		Our meta learning method & -47    \\
		The $\ell_{1,\infty}$-regularized method \cite{Honorio10b} & -179   \\
		The $\ell_{1,2}$-regularized method \cite{Varoquaux10} & -100   \\
		The Cooperative-LASSO method in \cite{chiquet2011inferring} & -85   \\
		The joint estimation method in \cite{guo2011joint} & -534   \\
		The graphical lasso method (applied only on task 8) in \cite{friedman2008sparse}  & -324   \\
		\bottomrule
	\end{tabularx}
	\vskip -0.1in
\end{table}

According to Table \ref{Tab:realdata}, our method generalizes the best since it obtains the minimum log-determinant Bregman divergence.

\section{Proof of Lemma \ref{lem:convex}}
Define $\mathcal{S}_{++}^{N}:=\{A\in\mathbb{R}^{N\times N}|A\succ0\}$. We first prove the following result:
\begin{lemma} \label{lem:lossstrictconv}
	For $\ell(\Omega)$ defined in \eqref{eq:loss}, if $\Omega\in\mathcal{S}_{++}^{N}$, then $\ell(\Omega)$ is strictly convex.
\end{lemma}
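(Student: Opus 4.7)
The plan is to decompose $\ell(\Omega)$ into an affine part and a log-determinant part, and reduce strict convexity to the classical strict convexity of $-\log\det$ on the positive-definite cone. Writing $\tilde{\Sigma} := \sum_{k=1}^K T^{(k)} \hat{\Sigma}^{(k)}$ and $T := \sum_{k=1}^K T^{(k)}$, we have
\begin{equation*}
\ell(\Omega) = \langle \tilde{\Sigma}, \Omega\rangle - T\log\det(\Omega).
\end{equation*}
Since $T^{(k)} \propto n^{(k)} > 0$, we have $T > 0$. The first term is linear in $\Omega$ (hence convex), so it suffices to establish that $-\log\det(\Omega)$ is strictly convex on $\mathcal{S}_{++}^{N}$; multiplying by the positive constant $T$ and adding an affine function preserves strict convexity.

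Next I would prove strict convexity of $-\log\det$ on $\mathcal{S}_{++}^{N}$ by the standard reduction to one dimension. Fix $\Omega_1, \Omega_2 \in \mathcal{S}_{++}^{N}$ with $\Omega_1 \neq \Omega_2$, and define $g:[0,1]\to\mathbb{R}$ by $g(t) := -\log\det\bigl(\Omega_1 + t(\Omega_2 - \Omega_1)\bigr)$. Since $\mathcal{S}_{++}^{N}$ is convex, $\Omega_1 + t(\Omega_2-\Omega_1) \in \mathcal{S}_{++}^{N}$ for all $t\in[0,1]$, so $g$ is well-defined and smooth. Factoring out $\Omega_1^{1/2}$ yields
\begin{equation*}
\Omega_1 + t(\Omega_2-\Omega_1) = \Omega_1^{1/2}\bigl(I + t\, \Omega_1^{-1/2}(\Omega_2-\Omega_1)\Omega_1^{-1/2}\bigr)\Omega_1^{1/2}.
\end{equation*}
Diagonalize the symmetric matrix $M := \Omega_1^{-1/2}(\Omega_2-\Omega_1)\Omega_1^{-1/2}$ with real eigenvalues $\lambda_1,\dots,\lambda_N$, so that
\begin{equation*}
g(t) = -\log\det(\Omega_1) - \sum_{i=1}^N \log(1 + t\lambda_i),
\end{equation*}
which gives $g''(t) = \sum_{i=1}^N \lambda_i^2/(1+t\lambda_i)^2$. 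Because $\Omega_1 \neq \Omega_2$ implies $M \neq 0$, at least one $\lambda_i$ is nonzero, so $g''(t) > 0$ on $(0,1)$. Hence $g$ is strictly convex on $[0,1]$, i.e. $-\log\det$ is strictly convex along every nontrivial line segment in $\mathcal{S}_{++}^{N}$, proving strict convexity of $-\log\det$ on $\mathcal{S}_{++}^{N}$.

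Combining the two pieces, $\ell(\Omega)$ is the sum of a linear function and $T$ times a strictly convex function (with $T>0$), and is therefore strictly convex on $\mathcal{S}_{++}^{N}$. The main subtlety is only bookkeeping: verifying that $T>0$ from the assumption $T^{(k)} \propto n^{(k)}$, and noting that no hypothesis on the rank or sign of $\tilde{\Sigma}$ is required because strict convexity is carried entirely by the $-\log\det$ term. There is no substantive obstacle; the result follows from the standard strict concavity of $\log\det$ on the positive-definite cone.
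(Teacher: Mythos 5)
Your proof is correct, but it takes a different route from the paper's. The paper proves strict convexity by computing the Hessian $\nabla^2\ell(\Omega) = T\,\Omega^{-1}\otimes\Omega^{-1}$ and invoking the fact that the eigenvalues of a Kronecker product $A\otimes A$ are products of pairs of eigenvalues of $A$, hence the Hessian is positive definite for $\Omega \in \mathcal{S}_{++}^N$. You instead split $\ell$ into the affine term $\langle\tilde\Sigma,\Omega\rangle$ plus $-T\log\det(\Omega)$ and prove strict convexity of $-\log\det$ directly via the classical one-dimensional restriction $g(t) = -\log\det(\Omega_1 + t(\Omega_2-\Omega_1))$, diagonalizing $\Omega_1^{-1/2}(\Omega_2-\Omega_1)\Omega_1^{-1/2}$ to get $g''(t) = \sum_i \lambda_i^2/(1+t\lambda_i)^2 > 0$. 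Both arguments are sound. Your decomposition has the advantage of being more elementary and self-contained (no Kronecker eigenvalue fact needed), and it makes explicit that the data matrices $\hat\Sigma^{(k)}$ play no role in convexity. The paper's choice is natural in context because the Hessian identity $\nabla^2\ell(\bar\Omega) = T\bar\Gamma$ with $\bar\Gamma = \bar\Omega^{-1}\otimes\bar\Omega^{-1}$ is already needed for the mutual incoherence condition and the primal-dual witness analysis, so establishing it here costs nothing and reuses machinery. One small point in your write-up worth tightening: you assert $T^{(k)} \propto n^{(k)} > 0$, which is correct given how the paper sets $T^{(k)}$, and it is indeed the only hypothesis on the weights needed; stating that $T>0$ explicitly as a needed hypothesis, as you did, is good bookkeeping.
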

\begin{proof}
	The gradient of $\ell(\Omega)$ is:
	\begin{equation} \label{eq:gradLoss}
	\nabla\ell\left(\Omega\right)=\sum_{k=1}^KT^{(k)}\left(\hat{\Sigma}^{(k)}-\Omega^{-1}\right)
	\end{equation}
	The Hessian of $\ell(\Omega)$ is:
	$$
	\nabla^2\ell\left(\Omega\right)=T\Gamma\left(\Omega\right)
	$$
	where $\Gamma\left(\Omega\right)=\Omega^{-1}\otimes\Omega^{-1}\in \mathbb{R}^{N^2\times N^2}$.
	
	Since $\Omega\in\mathcal{S}_{++}^{N}$, we have $\Omega\succ0$ and thus $\Omega^{-1}\succ0$. 
	According to Theorem 4.2.12 in \cite{horn1994topics}, any eigenvalue of $\Gamma\left(\Omega\right)=\Omega^{-1}\otimes\Omega^{-1}$ is the product of two eigenvalues of $\Omega^{-1}$, hence positive. Therefore,
	$$
	\Gamma\left(\Omega\right)\succ0
	$$
	$$
	\nabla^2\ell\left(\Omega\right)\succ0
	$$
	$\ell(\Omega)$ is strictly convex.
\end{proof}
Now consider $\ell(\Omega)+\lambda\|\Omega\|_1$. Since $\lambda>0$, by Lemma \ref{lem:lossstrictconv}, we know $\ell(\Omega)+\lambda\|\Omega\|_1$ is strictly convex for $\Omega\in\mathcal{S}_{++}^{N}$. Therefore, the problem in \eqref{eq:estimator} is strict convex and has a unique solution $\hat{\Omega}$.

For $\hat{\Omega}^{(K+1)}$ in \eqref{eq:estimatorNovel1}, we have
\begin{equation*}
\nabla\ell^{(K+1)}(\Omega)=\hat{\Sigma}^{(K+1)}-\Omega^{-1}
\end{equation*}
and
\begin{equation*}
\nabla^2\ell^{(K+1)}(\Omega)=\Gamma(\Omega)=\Omega^{-1}\otimes\Omega^{-1}
\end{equation*}
Thus according to the proof of Lemma \ref{lem:lossstrictconv}, we know $\ell^{(K+1)}(\Omega)$ is strictly convex. Then $\ell^{(K+1)}(\Omega)+\lambda\|\Omega\|_1$ is strictly convex for $\lambda>0$ on $\mathcal{S}_{++}^{N}$. Notice that the constraints $\text{supp}(\Omega)\subseteq \text{supp}(\hat{\Omega})$ and $\text{diag}(\Omega)=\text{diag}(\hat{\Omega})$ in \eqref{eq:estimatorNovel1} can be expressed as $\Omega_{ij} = 0$ for $(i,j) \notin S$ and $\Omega_{ii} = \hat{\Omega}_{ii}$ for $i \in \{1,\dots,n\}$. Therefore the constraints are linear. Furthermore, \eqref{eq:estimatorNovel1} is strictly convex for $\lambda>0$ on $\mathcal{S}_{++}^{N}$.

\section{Proof of Theorem \ref{thm:subset}}
Our proof follows the primal-dual witness approach \cite{ravikumar2011high} which uses Karush-Kuhn Tucker conditions (from optimization) together with concentration inequalities (from statistical learning theory). 
\subsection{Preliminaries}
Before the formal proof, we first introduce two inequalities with respect to the matrix $\ell_{\infty}$-operator-norm  $\vertiii{\cdot}_{\infty}$:
\begin{lemma} \label{lem:normIneq}
	For a pair of matrices $A\in\mathbb{R}^{m\times n}$, $B\in\mathbb{R}^{n\times p}$ and a vector $x\in\mathbb{R}^{n}$, we have:
	\begin{equation} \label{eq:normIneq1}
	\|Ax\|_{\infty}\le\vertiii{A}_{\infty}\|x\|_{\infty}
	\end{equation}
	\begin{equation} \label{eq:normIneq2}
	\vertiii{AB}_{\infty}\le\vertiii{A}_{\infty}\vertiii{B}_{\infty}
	\end{equation}
\end{lemma}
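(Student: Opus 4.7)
The plan is to prove both inequalities by direct entrywise computation, exploiting the characterization of $\vertiii{\cdot}_\infty$ from Table~\ref{Tab:notations} as the maximum absolute row sum. These are classical submultiplicativity facts for the induced $\ell_\infty$-operator-norm; the lemma's role here is purely to record them as a reusable tool for the primal--dual witness arguments, which repeatedly manipulate quantities such as $\vertiii{\bar{\Gamma}_{S^cS}(\bar{\Gamma}_{SS})^{-1}}_\infty$ appearing in Assumption~\ref{assump1}.

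First I would prove \eqref{eq:normIneq1}. Fix a row index $i$ and expand $(Ax)_i = \sum_{j=1}^n A_{ij} x_j$. The triangle inequality gives $|(Ax)_i| \le \sum_{j} |A_{ij}|\,|x_j|$, and factoring out $\|x\|_\infty = \max_j |x_j|$ yields $|(Ax)_i| \le \|x\|_\infty \sum_{j=1}^n |A_{ij}|$. Taking the maximum over $i$ on both sides then gives $\|Ax\|_\infty \le \vertiii{A}_\infty \|x\|_\infty$, as required.

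For \eqref{eq:normIneq2}, I would expand $(AB)_{ij} = \sum_{k=1}^n A_{ik} B_{kj}$, apply the triangle inequality inside the absolute value, and then sum over $j$ while swapping the order of summation to isolate $\sum_{j=1}^p |B_{kj}| \le \vertiii{B}_\infty$. This yields $\sum_{j=1}^p |(AB)_{ij}| \le \vertiii{B}_\infty \sum_{k=1}^n |A_{ik}| \le \vertiii{A}_\infty \vertiii{B}_\infty$, and a final maximum over $i$ produces the claim. An alternative route would be to apply \eqref{eq:normIneq1} column by column to $B$ (using that the $j$-th column of $AB$ is $Ab_j$), but the direct row-sum manipulation is equally short.

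Neither step poses any real obstacle; these are textbook facts about induced operator norms, and the only care required is keeping track of the summation indices when swapping the order of summation in the second bound. I expect the formal writeup to occupy just a handful of lines.
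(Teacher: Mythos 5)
Your proof is correct and follows essentially the same entrywise row-sum manipulation that the paper uses: for \eqref{eq:normIneq1} the paper maximizes $|\langle a_i,x\rangle|$ over rows and factors out $\|x\|_\infty$, and for \eqref{eq:normIneq2} it expands $(AB)_{ij}$, applies the triangle inequality, swaps the summation order, and bounds the inner row sum by $\vertiii{B}_\infty$, exactly as you do. No substantive difference in approach.
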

\begin{proof}
	Note that
	\begin{align*}
	\|Ax\|_{\infty}&=\max_{1\le i\le m}|\langle a_{i},x\rangle|\\ &\le\max_{1\le i\le m}\|a_{i}\|_1\|x\|_{\infty}\\&
	=\vertiii{A}_{\infty}\|x\|_{\infty}
	\end{align*}
	where $a_i$ is the vector corresponding to the $i$-th row of $A$ and $\langle \cdot,\cdot\rangle$ is the inner product. Similarly, we have
	\begin{align*}
	\|AB\|_{\infty}&=\max_{1\le i\le m}\| a_{i}B\|_1\\ &=\max_{1\le i\le m}\sum_{k=1}^q\Big|\sum_{j=1}^nA_{ij}B_{jk}\Big|\\&
	\le \max_{1\le i\le m}\sum_{j=1}^n|A_{ij}|\sum_{k=1}^q|B_{jk}|\\ & \le
	\max_{1\le i\le m}\sum_{j=1}^n|A_{ij}|\max_{1\le l\le n}\sum_{k=1}^q|B_{lk}| \\ & =
	\max_{1\le i\le m}\sum_{j=1}^n|A_{ij}|\vertiii{B}_{\infty} \\ & =
	\vertiii{A}_{\infty}\vertiii{B}_{\infty}
	\end{align*}
\end{proof}
Then we prove Theorem \ref{thm:subset} with the five steps in the primal-dual witness approach.

\subsection{Step 1} \label{subsection:step1}
Let $\left(\Omega_{S},0\right)$ denote the $N\times N$ matrix such that $\Omega_{S^c}=0$.
For any $\Omega=\left(\Omega_{S},0\right)\in\mathcal{S}_{++}^{N}$, we need to verify that  
$\left[\nabla^2\ell\left(\left(\Omega_{S},0\right)\right)\right]_{SS}\succ0$. 

According to Lemma \ref{lem:lossstrictconv}, since $\left(\Omega_{S},0\right)\in\mathcal{S}_{++}^{N}$, we have
\begin{equation} \label{eq:D2Ssucc}
\nabla^2\ell\left(\left(\Omega_{S},0\right)\right)\succ 0
\end{equation}
Denote the vectorization of a matrix $A$ with $\text{vec}(A)$ or $\overrightarrow{A}$. We use $|S|$ to denote the number of elements in $S$. Then we have $\left[\nabla^2\ell\left(\left(\Omega_{S},0\right)\right)\right]_{SS}\in\mathbb{R}^{|S|\times|S|}$. For $\forall x\in\mathbb{R}^{|S|}$, $v\neq0$, there exists a matrix $A\in\mathbb{R}^{N\times N}$, $A\neq0$, such that $\overrightarrow{A_{S}}=x$. Thus we have
\begin{align*}
x^{\text{T}}\left[\nabla^2\ell\left(\left(\Omega_{S},0\right)\right)\right]_{SS}x&=
\left[\overrightarrow{A_S}\right]^{\text{T}}\left[\nabla^2\ell\left(\left(\Omega_{S},0\right)\right)\right]_{SS}\overrightarrow{A_S}
\\&=\left[\overrightarrow{(A_S,0)}\right]^{\text{T}}\nabla^2\ell\left(\left(\Omega_{S},0\right)\right)\overrightarrow{(A_S,0)}
\\&>0
\end{align*}
where the inequality follows from \eqref{eq:D2Ssucc}. Hence $\left[\nabla^2\ell\left(\left(\Omega_{S},0\right)\right)\right]_{SS}\succ0$. 
Thus the step 1 in primal-dual witness is verified.

\subsection{Step 2}
Construct the primal variable $\tilde{\Omega}$ by making $\tilde{\Omega}_{S^c}=0$ and solving the restricted problem:
\begin{equation} \label{eq:step2}
\tilde{\Omega}_{S}=\arg\min_{\left(\Omega_{S},0\right)\in\mathcal{S}_{++}^N}\ell\left(\left(\Omega_{S},0\right)\right)+\lambda\|\Omega_{S}\|_1
\end{equation}

\subsection{Step 3}
Choose the dual variable $\tilde{Z}$ in order to fulfill the complementary slackness condition of \eqref{eq:estimator}:
\begin{equation} \label{eq:step3}
\left\{
\begin{aligned}
&\tilde{Z}_{ij}=1,\ \ \text{if } \tilde{\Omega}_{ij}>0\\
&\tilde{Z}_{ij}=-1,\ \ \text{if } \tilde{\Omega}_{ij}<0\\
&\tilde{Z}_{ij}\in[-1,1],\ \ \text{if } \tilde{\Omega}_{ij}=0
\end{aligned}
\right.
\end{equation}
Therefore we have
\begin{equation} \label{eq:step3ineq}
\begin{aligned}
\|\tilde{Z}\|_{\infty}\le 1
\end{aligned}
\end{equation}

\subsection{Step 4}
$\tilde{Z}$ is the subgradient of $\|\tilde{\Omega}\|_1$. Solve for the dual variable $\tilde{Z}_{S^c}$ in order that $(\tilde{\Omega},\tilde{Z})$ fulfills the stationarity condition of \eqref{eq:estimator}:
\begin{equation}
\left[\nabla\ell\left(\left(\tilde{\Omega}_S,0\right)\right)\right]_S+\lambda\tilde{Z}_{S}=0
\end{equation}
\begin{equation} \label{eq:station2}
\left[\nabla\ell\left(\left(\tilde{\Omega}_S,0\right)\right)\right]_{S^c}+\lambda\tilde{Z}_{S^c}=0
\end{equation}

\subsection{Step 5}
Now we need to verify that the dual variable solved by Step 4 satisfied the strict dual feasibility condition:
\begin{equation}
\|\tilde{Z}_{S^c}\|_{\infty}<1
\end{equation}
which, according to the stationarity condition, is equivalent to
\begin{equation}
\frac{1}{\lambda}\|\left[\nabla\ell\left(\left(\tilde{\Omega}_{S},0\right)\right)\right]_{S^c}\|_{\infty}<1
\end{equation}
This is the crucial part in the primal-dual witness approach. If we can show the strict dual feasibility condition holds, we can claim that the solution in \eqref{eq:step2} is equal to the solution in \eqref{eq:estimator}, i.e., $\tilde{\Omega}=\hat{\Omega}$. Thus we will have
\begin{equation*}
\text{supp}\left(\hat{\Omega}\right)=\text{supp}\left(\tilde{\Omega}\right)\subseteq S=\text{supp}\left(\bar{\Omega}\right)
\end{equation*}

\subsection{Proof of the strict dual feasibility condition}
Plug the gradient of loss function \eqref{eq:gradLoss} in the stationarity condition of  \eqref{eq:estimator}, we have
\begin{equation} \label{eq:station.1}
\sum_{k=1}^KT^{(k)}\left(\hat{\Sigma}^{(k)}-\tilde{\Omega}^{-1}\right)+\lambda\tilde{Z}=0
\end{equation}
Define $\bar{\Sigma}=\bar{\Omega}^{-1}$, $W^{(k)}:=\hat{\Sigma}^{(k)}-\bar{\Sigma}$, $\Psi:=\tilde{\Omega}-\bar{\Omega}$, 
$R(\Psi):=\tilde{\Omega}^{-1}-\bar{\Sigma}+\bar{\Omega}^{-1}\Psi\bar{\Omega}^{-1}$.
Then we can rewrite \eqref{eq:station.1} as
\begin{equation} \label{eq:station.2}
\sum_kT^{(k)}W^{(k)}+T\left(\bar{\Omega}^{-1}\Psi\bar{\Omega}^{-1}-R(\Psi)\right)+\lambda\tilde{Z}=0
\end{equation}  
From vectorization of product of matrices, we have:
\begin{equation} \label{eq:vecGamma}
\overrightarrow{\bar{\Omega}^{-1}\Psi\bar{\Omega}^{-1}}=\bar{\Gamma}\overrightarrow{\Psi}
\end{equation}
where $\bar{\Gamma}:=\bar{\Omega}^{-1}\otimes\bar{\Omega}^{-1}$. Then vectorize both sides of \eqref{eq:station.2} and we can get:
\begin{equation} \label{eq:station.21}
T\left(\bar{\Gamma}_{SS}\overrightarrow{\Psi_{S}}-\overrightarrow{R_{S}}\right)+\sum_{k=1}^KT^{(k)}\overrightarrow{W^{(k)}_{S}}
+\lambda\overrightarrow{\tilde{Z}_{S}}=0
\end{equation}
\begin{equation} \label{eq:station.22}
T\left(\bar{\Gamma}_{S^cS}\overrightarrow{\Psi_{S}}-\overrightarrow{R_{S^c}}\right)
+\sum_{k=1}^KT^{(k)}\overrightarrow{W^{(k)}_{S^c}}+\lambda\overrightarrow{\tilde{Z}_{S^c}}=0
\end{equation}
where we write $R(\Psi)$ as $R$ for simplicity. By solving \eqref{eq:station.21} for $\overrightarrow{\Psi_{S}}$, we get:
\begin{equation} \label{eq:vecDeltaS}
\overrightarrow{\Psi_{S}}=\frac{1}{T}\bar{\Gamma}_{SS}^{-1}\left(T\overrightarrow{R_S}
-\sum_{k=1}^KT^{(k)}\overrightarrow{W_{S}^{(k)}}-\lambda\overrightarrow{\tilde{Z}_S}\right)
\end{equation}
where we write $(\bar{\Gamma}_{SS})^{-1}$ as $\bar{\Gamma}_{SS}^{-1}$ for simplicity.
Plug \eqref{eq:vecDeltaS} in \eqref{eq:station.22} to solve for $\overrightarrow{\tilde{Z}_{S^c}}$:
\begin{equation*}
\begin{aligned}
\overrightarrow{\tilde{Z}_{S^c}} & =-\frac{1}{\lambda}T\bar{\Gamma}_{S^cS}\overrightarrow{\Psi_{S}}
+\frac{1}{\lambda}T\overrightarrow{R_{S^c}}-\frac{1}{\lambda}\sum_{k=1}^KT^{(k)}\overrightarrow{W^{(k)}_{S^c}}\\ &
=-\frac{1}{\lambda}\bar{\Gamma}_{S^cS}\bar{\Gamma}_{SS}^{-1}\left(T\overrightarrow{R_S}
-\sum_{k=1}^KT^{(k)}\overrightarrow{W_{S}^{(k)}}-\lambda\overrightarrow{\tilde{Z}_S}\right)
+ \frac{1}{\lambda}T\overrightarrow{R_{S^c}}-\frac{1}{\lambda}\sum_{k=1}^KT^{(k)}\overrightarrow{W^{(k)}_{S^c}}\\ &
=-\frac{1}{\lambda}\bar{\Gamma}_{S^cS}\bar{\Gamma}_{SS}^{-1}\left(T\overrightarrow{R_S}
-\sum_{k=1}^KT^{(k)}\overrightarrow{W_{S}^{(k)}}\right)
+\bar{\Gamma}_{S^cS}\bar{\Gamma}_{SS}^{-1}\overrightarrow{\tilde{Z}_S}
+\frac{1}{\lambda}\left(T\overrightarrow{R_{S^c}}-\sum_{k=1}^KT^{(k)}\overrightarrow{W^{(k)}_{S^c}}\right)
\end{aligned}
\end{equation*}
According to \eqref{eq:normIneq1} and the expression above, we have:
\begin{equation*}
\begin{aligned}
\|\overrightarrow{\tilde{Z}_{S^c}}\|_{\infty}\le &\frac{1}{\lambda}
\|\bar{\Gamma}_{S^cS}\bar{\Gamma}_{SS}^{-1}\left(T\overrightarrow{R_S}
-\sum_{k=1}^KT^{(k)}\overrightarrow{W_{S}^{(k)}}\right)\|_{\infty}
+\|\bar{\Gamma}_{S^cS}\bar{\Gamma}_{SS}^{-1}\overrightarrow{\tilde{Z}_S}\|_{\infty}\\ &
+\frac{1}{\lambda}\left(T\|\overrightarrow{R_{S^c}}\|_{\infty}+\|\sum_{k=1}^KT^{(k)}\overrightarrow{W_{S}^{(k)}}\|_{\infty}\right)
\\ \le &\frac{1}{\lambda}\vertiii{\bar{\Gamma}_{S^cS}\bar{\Gamma}_{SS}^{-1}}_{\infty}
\left(T\|\overrightarrow{R_S}\|_{\infty}+\|\sum_{k=1}^KT^{(k)}\overrightarrow{W_{S}^{(k)}}\|_{\infty}\right)\\ &
+\vertiii{\bar{\Gamma}_{S^cS}\bar{\Gamma}_{SS}^{-1}}_{\infty}+\frac{1}{\lambda}
\left(T\|\overrightarrow{R_{S^c}}\|_{\infty}+\|\sum_{k=1}^KT^{(k)}\overrightarrow{W_{S}^{(k)}}\|_{\infty}\right)
\end{aligned}
\end{equation*}
where we have used $\|\overrightarrow{\tilde{Z}_{S}}\|_{\infty}\le1$ by \eqref{eq:step3ineq}.

Therefore under Assumption \ref{assump1}, we have:
\begin{equation*}
\|\tilde{Z}_{S^c}\|_{\infty}=\|\overrightarrow{\tilde{Z}_{S^c}}\|_{\infty}\le \frac{2-\alpha}{\lambda}
\left(T\|\overrightarrow{R}\|_{\infty}+\|\sum_{k=1}^KT^{(k)}\overrightarrow{W^{(k)}}\|_{\infty}\right)+1-\alpha
\end{equation*}
If we can bound the two terms: $T\|\overrightarrow{R}\|_{\infty},\|\sum_{k=1}^KT^{(k)}\overrightarrow{W^{(k)}}\|_{\infty}\le \frac{\alpha\lambda}{8}$, then we will have:
$$
\|\tilde{Z}_{S^c}\|_{\infty}\le1-\frac{\alpha}{2}<1
$$
From all the reasoning so far, we have the following Lemma:
\begin{lemma} \label{lem:twoBounds}
	If we have $T\|\overrightarrow{R(\Psi)}\|_{\infty},\|\sum_{k=1}^KT^{(k)}\overrightarrow{W^{(k)}}\|_{\infty}\le \frac{\alpha\lambda}{8}$, then
	$$
	\|\tilde{Z}_{S^c}\|_{\infty}<1\text{,}
	$$
	i.e., the strict-dual feasibility condition is fulfilled.
\end{lemma}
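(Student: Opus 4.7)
The plan is to combine the bound on $\|\tilde{Z}_{S^c}\|_\infty$ already derived in the paragraph immediately preceding the lemma with the two hypothesized bounds. Starting from the stationarity condition and using the vectorization identity $\overrightarrow{\bar{\Omega}^{-1}\Psi\bar{\Omega}^{-1}}=\bar{\Gamma}\,\overrightarrow{\Psi}$, one solves the $S$-block for $\overrightarrow{\Psi_S}$ and substitutes into the $S^c$-block, which isolates $\overrightarrow{\tilde{Z}_{S^c}}$ as a linear combination of three terms involving $R(\Psi)$, $\sum_k T^{(k)} W^{(k)}$, and $\tilde{Z}_S$. Applying the matrix-norm inequalities of Lemma \ref{lem:normIneq} together with $\|\tilde{Z}_S\|_\infty\le 1$ and the mutual incoherence assumption $\vertiii{\bar{\Gamma}_{S^cS}\bar{\Gamma}_{SS}^{-1}}_\infty \le 1-\alpha$, the excerpt has already established
\[
\|\tilde{Z}_{S^c}\|_\infty \;\le\; \frac{2-\alpha}{\lambda}\Bigl(T\|\overrightarrow{R(\Psi)}\|_\infty + \|\sum_{k=1}^K T^{(k)}\overrightarrow{W^{(k)}}\|_\infty\Bigr) + (1-\alpha).
\]
So the proof of the lemma is essentially a direct substitution into this inequality.

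First I would plug the two hypotheses $T\|\overrightarrow{R(\Psi)}\|_\infty \le \alpha\lambda/8$ and $\|\sum_k T^{(k)}\overrightarrow{W^{(k)}}\|_\infty \le \alpha\lambda/8$ into the right-hand side, giving the upper bound $\frac{2-\alpha}{\lambda}\cdot\frac{\alpha\lambda}{4} + (1-\alpha) = \frac{\alpha(2-\alpha)}{4} + 1 - \alpha$. Then I would use $\alpha\in(0,1]$, so that $2-\alpha\le 2$, to conclude $\alpha(2-\alpha)/4 \le \alpha/2$. This collapses the bound to $1 - \alpha/2$, which is strictly less than $1$, yielding $\|\tilde{Z}_{S^c}\|_\infty \le 1-\alpha/2 < 1$, i.e., the strict dual feasibility condition.

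There is no genuine obstacle at this step, because the entire machinery (the primal-dual witness construction, the vectorized stationarity equations, the Kronecker identity for the Hessian of $\log\det$, and the incoherence-based matrix-norm chain) has already been assembled in the preceding paragraphs; Lemma \ref{lem:twoBounds} is essentially a packaging of that derivation into a clean if--then form. The real difficulty lies downstream, in controlling the two random quantities $T\|\overrightarrow{R(\Psi)}\|_\infty$ and $\|\sum_k T^{(k)}\overrightarrow{W^{(k)}}\|_\infty$ with high probability: the former requires a Taylor expansion of the matrix inverse together with a Brouwer fixed-point argument on $\Psi_S$ to show it is bounded in terms of $\|\Psi\|_\infty$ and hence ultimately by $\lambda$, while the latter requires the $Y_1$/$Y_2$ decomposition sketched in the proof of Theorem \ref{thm:subset}, sub-Gaussian concentration for $Y_1$, and the matrix Bernstein-type bound (Tropp's Corollary 7.5) for $Y_2$. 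Once those two high-probability bounds are in hand, Lemma \ref{lem:twoBounds} is applied verbatim to conclude the strict dual feasibility and hence the primal-dual witness gives $\mathrm{supp}(\hat\Omega)\subseteq S$.
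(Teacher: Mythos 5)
Your proposal is correct and follows exactly the paper's own argument: the inequality on $\|\tilde{Z}_{S^c}\|_\infty$ is already established immediately before the lemma, and substituting the two hypothesized bounds (using $2-\alpha\le 2$) yields $\|\tilde{Z}_{S^c}\|_\infty \le 1-\alpha/2 < 1$. Nothing is missing; the lemma is just a packaged restatement of that computation, as you correctly note.
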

Thus the key step is to bound $T\|\overrightarrow{R}\|_{\infty}$ and $\|\sum_{k=1}^KT^{(k)}\overrightarrow{W^{(k)}}\|_{\infty}$ by 
$\frac{\alpha\lambda}{8}$. We will first consider $T\|\overrightarrow{R}\|_{\infty}$.

We have the following Lemma in \cite{ravikumar2011high} (Lemma 5):
\begin{lemma} \label{lem:1}
	For any $\rho\in \mathbb{R}^{N\times N}$, If we have $\|\rho\|_{\infty}\le \frac{1}{3}\kappa_{\bar{\Sigma}}d$, then the matrix 
	$J(\rho):=\sum_{k=0}^{\infty}(-1)^k(\bar{\Omega}^{-1}\rho)^k$ will satisfy $\vertiii{J^{\text{T}}}_{\infty}\le\frac{3}{2}$ and the matrix $R(\rho):=(\bar{\Omega}+\rho)^{-1}-\bar{\Omega}^{-1}+\bar{\Omega}^{-1}\rho\bar{\Omega}^{-1}$ will satisfy:
	\begin{equation} \label{eq:lem1.1}
	R(\rho)=\bar{\Omega}^{-1}\rho\bar{\Omega}^{-1}\rho J(\rho)\bar{\Omega}^{-1}
	\end{equation}
	and
	\begin{equation} \label{eq:lem1.2}
	\|R(\rho)\|_{\infty}\le \frac{3}{2}d\|\rho\|_{\infty}^2\kappa_{\bar{\Sigma}}^3
	\end{equation}
	Here
	$\kappa_{\bar{\Sigma}}:=\vertiii{\bar{\Sigma}}_{\infty}=\vertiii{\bar{\Omega}^{-1}}_{\infty}$, 
	$d:=\max_{1\le i\le N}\#\left\{j:1\le j\le N,\bar{\Omega}_{ij}\neq0\right\}$.
\end{lemma}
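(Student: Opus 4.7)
My plan is to invert $\bar{\Omega}+\rho$ via a Neumann series, read off $R(\rho)$ as the tail of that series, and then push the sparsity of $\rho$ through a careful entrywise estimate. First factor $\bar{\Omega}+\rho=\bar{\Omega}(I+\bar{\Omega}^{-1}\rho)$, so $(\bar{\Omega}+\rho)^{-1}=(I+\bar{\Omega}^{-1}\rho)^{-1}\bar{\Omega}^{-1}$. Because $\rho$ is $d$-row-sparse (its support is contained in that of $\bar{\Omega}$, as in the primal--dual witness where $\rho=\Psi=\tilde{\Omega}-\bar{\Omega}$), one has $\vertiii{\rho}_\infty\le d\|\rho\|_\infty$, and the smallness hypothesis on $\|\rho\|_\infty$ combined with Lemma \ref{lem:normIneq} yields $\vertiii{\bar{\Omega}^{-1}\rho}_\infty\le\kappa_{\bar{\Sigma}}d\|\rho\|_\infty\le 1/3<1$. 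Hence $J(\rho)=\sum_{k\ge 0}(-\bar{\Omega}^{-1}\rho)^k$ converges in operator norm and $(\bar{\Omega}+\rho)^{-1}=J(\rho)\bar{\Omega}^{-1}$.

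The bound on $\vertiii{J^T}_\infty$ is then immediate: using symmetry of $\bar{\Omega}^{-1}$, $J^T=\sum_{k\ge 0}(-\rho\bar{\Omega}^{-1})^k$, so the triangle inequality and Lemma \ref{lem:normIneq} give $\vertiii{J^T}_\infty\le\sum_{k\ge 0}(1/3)^k=3/2$. For the product formula for $R(\rho)$, I peel off the first two terms of the Neumann expansion, noting $J(\rho)-I+\bar{\Omega}^{-1}\rho=\sum_{k\ge 2}(-\bar{\Omega}^{-1}\rho)^k=(\bar{\Omega}^{-1}\rho)^2 J(\rho)$, whence
\begin{equation*}
R(\rho)=\bigl(J(\rho)-I+\bar{\Omega}^{-1}\rho\bigr)\bar{\Omega}^{-1}=\bar{\Omega}^{-1}\rho\,\bar{\Omega}^{-1}\rho\,J(\rho)\,\bar{\Omega}^{-1}.
\end{equation*}

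The remaining and most delicate step is the entrywise bound $\|R(\rho)\|_\infty\le\tfrac{3}{2}d\,\|\rho\|_\infty^2\kappa_{\bar{\Sigma}}^3$, which carries only one factor of $d$; this is what I expect to be the main obstacle, since the naive route of replacing $\vertiii{\rho}_\infty\le d\|\rho\|_\infty$ for both copies of $\rho$ overshoots by a factor of $d$. To save that factor I regroup $R(\rho)=\bar{\Omega}^{-1}(\rho\bar{\Omega}^{-1}\rho)(\bar{\Omega}+\rho)^{-1}$ and bound the middle block via the elementary extension $\|AB\|_\infty\le\vertiii{A}_\infty\|B\|_\infty$ of Lemma \ref{lem:normIneq} (obtained by applying that lemma column-by-column):
\begin{equation*}
\|\rho\bar{\Omega}^{-1}\rho\|_\infty\le\vertiii{\rho}_\infty\,\|\bar{\Omega}^{-1}\rho\|_\infty\le d\|\rho\|_\infty\cdot\kappa_{\bar{\Sigma}}\|\rho\|_\infty=d\kappa_{\bar{\Sigma}}\|\rho\|_\infty^2,
\end{equation*}
where the inner estimate $\|\bar{\Omega}^{-1}\rho\|_\infty\le\kappa_{\bar{\Sigma}}\|\rho\|_\infty$ uses only the entrywise norm of $\rho$, not its sparsity, so the sparsity is charged only once. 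The outer factors $\bar{\Omega}^{-1}$ and $(\bar{\Omega}+\rho)^{-1}$ are then absorbed by the same sub-multiplicativity, using $\vertiii{\bar{\Omega}^{-1}}_\infty=\kappa_{\bar{\Sigma}}$ and $\vertiii{(\bar{\Omega}+\rho)^{-1}}_\infty=\vertiii{J(\rho)\bar{\Omega}^{-1}}_\infty\le\vertiii{J(\rho)}_\infty\kappa_{\bar{\Sigma}}\le\tfrac{3}{2}\kappa_{\bar{\Sigma}}$, which chains together to give the stated bound.
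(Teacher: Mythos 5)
The paper itself does not prove this lemma; it is quoted verbatim from Ravikumar et al.\ (2011), Lemma~5, with a citation in place of a proof. So you are reconstructing the original argument from scratch, and your reconstruction is essentially correct and follows the same architecture: factor $\bar{\Omega}+\rho=\bar{\Omega}(I+\bar{\Omega}^{-1}\rho)$, establish convergence of the Neumann series from the hypothesis $\|\rho\|_\infty\le 1/(3\kappa_{\bar{\Sigma}}d)$ (you correctly repaired the typo in the lemma's condition, and you correctly surfaced the implicit requirement that $\rho$ is $d$-row-sparse because $\text{supp}(\rho)\subseteq S$ wherever the lemma is invoked), peel off the $k=0,1$ terms to identify $R(\rho)$ as the tail $(\bar{\Omega}^{-1}\rho)^2J(\rho)\bar{\Omega}^{-1}$, and then charge the sparsity factor $d$ exactly once. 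The one genuine variation is in the regrouping for the elementwise bound: Ravikumar et al.\ split as $(\bar{\Omega}^{-1}\rho)\cdot(\bar{\Omega}^{-1}\rho J\bar{\Omega}^{-1})$, bounding the left factor entrywise and the right factor by the operator norm of its transpose (which is why the lemma advertises $\vertiii{J^T}_\infty$ rather than $\vertiii{J}_\infty$); you instead isolate $\rho\bar{\Omega}^{-1}\rho$ in the middle and absorb the outer inverses by operator norms. Both routes give the same $\tfrac{3}{2}d\|\rho\|_\infty^2\kappa_{\bar{\Sigma}}^3$, and your grouping makes it a bit more transparent why $d$ only appears once.

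Two places where your chaining is stated loosely and should be tightened. First, in computing $J^T=\sum_k(-\rho\bar{\Omega}^{-1})^k$ you invoke only the symmetry of $\bar{\Omega}^{-1}$; you also need $\rho=\rho^T$, which does hold here (a difference of symmetric matrices) but should be said. Second, and more importantly, you write that the right factor $(\bar{\Omega}+\rho)^{-1}$ is ``absorbed by the same sub-multiplicativity,'' but the mixed rule you proved, $\|AB\|_\infty\le\vertiii{A}_\infty\|B\|_\infty$, only absorbs a matrix on the \emph{left}; absorbing on the right requires $\|AB\|_\infty\le\|A\|_\infty\vertiii{B^T}_\infty$ (max \emph{column} sum). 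This does close, because $(\bar{\Omega}+\rho)^{-1}$ is symmetric so $\vertiii{(\bar{\Omega}+\rho)^{-1}}_\infty=\vertiii{((\bar{\Omega}+\rho)^{-1})^T}_\infty$, but that symmetry (or the switch to $\vertiii{\cdot}_1$) has to be made explicit for the chain
\begin{equation*}
\|R(\rho)\|_\infty\le\vertiii{\bar{\Omega}^{-1}}_\infty\,\|\rho\bar{\Omega}^{-1}\rho\|_\infty\,\vertiii{\bigl((\bar{\Omega}+\rho)^{-1}\bigr)^T}_\infty\le\kappa_{\bar{\Sigma}}\cdot d\kappa_{\bar{\Sigma}}\|\rho\|_\infty^2\cdot\tfrac{3}{2}\kappa_{\bar{\Sigma}}
\end{equation*}
to be valid as written.
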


For $R(\rho)$ defined in the above Lemma, we vectorize $R(\rho)_S$ and then we have
\begin{equation} \label{eq:Rrho}
\begin{aligned} 
\overrightarrow{R(\rho)_S}=&\text{vec}\left(\left[(\bar{\Omega}+\rho)^{-1}-\bar{\Omega}^{-1}\right]_S\right)+
\text{vec}\left([\bar{\Omega}^{-1}\rho\bar{\Omega}^{-1}]_S\right)\\=&
\text{vec}\left(\left[(\bar{\Omega}+\rho)^{-1}]_S-[\bar{\Omega}^{-1}\right]_S\right)+\bar{\Gamma}_{SS}\overrightarrow{\rho_S}
\end{aligned}
\end{equation}
where the first line follows from the definition of $R(\rho)$ in Lemma \ref{lem:1} and the second line follows from \eqref{eq:vecGamma}

Define $\kappa_{\bar{\Gamma}}:=\vertiii{\bar{\Gamma}_{SS}^{-1}}_{\infty}$. For $\Omega\in\mathbb{R}^{N\times N}$, define the subgradient of \eqref{eq:step2} as $G(\Omega_{S})$, i.e., $G(\Omega_{S}):=-T[\Omega^{-1}]_S+\sum_{k=1}^KT^{(k)}\hat{\Sigma}^{(k)}_S+\lambda\tilde{Z}_S$.
Since we have proved in Step 1 that $\ell$ is strictly convex, $\tilde{\Omega}_S$ is the only solution of the restricted problem of \eqref{eq:step2}. Therefore $\tilde{\Omega}_S$ is the only solution that satisfies the stationary condition $G(\Omega_S)=0$.

Next for $\rho\in\mathbb{R}^{N\times N}$, define 
$F(\overrightarrow{\rho_{S}})=-\frac{1}{T}\bar{\Gamma}_{SS}^{-1}\overrightarrow{G}(\bar{\Omega}_S+\rho_S)+\overrightarrow{\rho_S}$. Then:
$$
F(\overrightarrow{\rho_{S}})=\overrightarrow{\rho_{S}} \Leftrightarrow G(\bar{\Omega}_S+\rho_S)=0
\Leftrightarrow \bar{\Omega}_S+\rho_S=\tilde{\Omega}_S
$$
Thus the fixed point of $F(\cdot)$ is $\Psi_{S}=\tilde{\Omega}_S-\bar{\Omega}_S$ and it is unique. 

Now define 
$r:=2\kappa_{\bar{\Gamma}}\left(\frac{\lambda}{T}+\|\sum_{k=1}^K\frac{T^{(k)}}{T}W^{(k)}\|_{\infty}\right)$. 
Suppose $r\le\min\left\{\frac{1}{3\kappa_{\bar{\Sigma}}d},\frac{1}{3\kappa_{\bar{\Sigma}}^3\kappa_{\bar{\Gamma}}d}\right\}$. 
Define the $\ell_{\infty}$ radius-$r$ ball $\mathbb{B}(r):=\left\{\rho_S:\|\rho_S\|_{\infty}\le r\right\}$. For $\forall \rho_S\in\mathbb{B}(r)$, define $\rho=(\rho_S,0)$, i.e., $[\rho]_S=\rho_S$ and $[\rho]_{S^c}=0$. We have:
$$
G(\bar{\Omega}_S+\rho_S)=T\left(-[(\bar{\Omega}+\rho)^{-1}]_S+[\bar{\Omega}^{-1}]_S\right)+\sum_{k=1}^KT^{(k)}W^{(k)}_S
+\lambda\tilde{Z}_S
$$
Then,
\begin{equation} \label{eq:defV1V2}
\begin{aligned}
F(\overrightarrow{\rho_S})=&-\frac{1}{T}\bar{\Gamma}_{SS}^{-1}\text{vec}
\left(T\left(-[(\bar{\Omega}+\rho)^{-1}]_S+[\bar{\Omega}^{-1}]_S\right)+\sum_{k=1}^KT^{(k)}W^{(k)}_S
+\lambda\tilde{Z}_S\right) + \overrightarrow{\rho_{S}}\\=&
\bar{\Gamma}_{SS}^{-1}\left\{\text{vec}
\left([(\bar{\Omega}+\rho)^{-1}]_S-[\bar{\Omega}^{-1}]_S\right)+\bar{\Gamma}_{SS}\overrightarrow{\rho_{S}}\right\}
-\frac{1}{T}\bar{\Gamma}_{SS}^{-1}\text{vec}\left(\sum_{k=1}^KT^{(k)}W^{(k)}_S+\lambda\tilde{Z}_S\right) \\=&
\underbrace{\bar{\Gamma}_{SS}^{-1}\overrightarrow{R(\rho)_S}}_{V_1}-
\underbrace{\frac{1}{T}\bar{\Gamma}_{SS}^{-1}
	\left(\sum_{k=1}^KT^{(k)}\overrightarrow{W^{(k)}_S}+\lambda\overrightarrow{\tilde{Z}_{S}}\right)}_{V_2}
\end{aligned}
\end{equation}
where the third line follows from \eqref{eq:Rrho}.
For $V_2$ defined above we have:
\begin{equation*}
\begin{aligned}
\|V_2\|_{\infty}\le &\vertiii{\bar{\Gamma}_{SS}^{-1}}_{\infty}\|\frac{\lambda}{T}\overrightarrow{\tilde{Z}_{S}}+
\sum_{k=1}^K\frac{T^{(k)}}{T}W^{(k)}\|_{\infty} \\
\le &\kappa_{\bar{\Gamma}}\left(\frac{\lambda}{T}+\|\sum_{k=1}^K\frac{T^{(k)}}{T}W^{(k)}\|_{\infty}\right)\\=&\frac{r}{2}
\end{aligned}
\end{equation*}
where the first inequality follows from \eqref{eq:normIneq1}, the second inequality follows from \eqref{eq:step3ineq} and the third line follows from the definition of $r$.

For $V_1$ defined in \eqref{eq:defV1V2} we have:
\begin{equation} \label{eq:V1}
\begin{aligned}
\|V_1\|_{\infty}&\le \vertiii{\bar{\Gamma}_{SS}^{-1}}_{\infty}\|R(\rho)_S\|_{\infty}\\ & \le
\kappa_{\bar{\Gamma}}\|R(\rho)\|_{\infty}\\&\le \kappa_{\bar{\Gamma}}
\left(\frac{3}{2}d\kappa_{\bar{\Sigma}}^3\right)\|\rho\|_{\infty}^2\\&\le 
\frac{3}{2}d\kappa_{\bar{\Sigma}}^3\kappa_{\bar{\Gamma}}r^2\\&\le \frac{r}{2}
\end{aligned}
\end{equation}
where the first inequality is due to \eqref{eq:normIneq1} and the second inequality is due to Lemma \ref{lem:1} and $\|\rho\|_{\infty}=\|\rho_S\|_{\infty}\le r$.

Thus $\|F(\overrightarrow{\rho_S})\|_{\infty}\le r$, $F(\overrightarrow{\rho_S})\in\mathbb{B}(r)$, which indicates $F(\mathbb{B}(r))\subset\mathbb{B}(r)$. 
By Brouwer's fixed point theorem (see e.g., \cite{ortega2000iterative}), there exists some fixed point of $F(\cdot)$ in $\mathbb{B}(r)$. We have proved that the fixed point of $F(\cdot)$ is $\Psi_S$ and it is unique, therefore $\Psi_S\in\mathbb{B}(r)$, i.e., $\|\Psi\|_{\infty}=\|\Psi_S\|_{\infty}\le r$. Thus by Lemma \ref{lem:1}, $\|R(\Psi)\|_{\infty}\le\frac{3}{2}d\|\Psi\|_{\infty}^2\kappa_{\bar{\Sigma}}^3$.  

From all the reasoning so far, we have the following Lemma:
\begin{lemma} \label{lem:r_condition}
	If $r=2\kappa_{\bar{\Gamma}}\left(\frac{\lambda}{T}+\|\sum_{k=1}^K\frac{T^{(k)}}{T}W^{(k)}\|_{\infty}\right)\le\min\left\{\frac{1}{3\kappa_{\bar{\Sigma}}d},\frac{1}{3\kappa_{\bar{\Sigma}}^3\kappa_{\bar{\Gamma}}d}\right\}$, then
	$$
	\|\Psi\|_{\infty}\le r
	$$
	and
	$$
	\|R(\Psi)\|_{\infty}\le\frac{3}{2}d\|\Psi\|_{\infty}^2\kappa_{\bar{\Sigma}}^3
	$$
\end{lemma}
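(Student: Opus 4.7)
My plan is to characterize $\Psi_S = \tilde{\Omega}_S - \bar{\Omega}_S$ as the unique fixed point of a continuous self-map $F$ on the closed $\ell_\infty$-ball $\mathbb{B}(r) := \{\rho_S : \|\rho_S\|_\infty \le r\}$, and then to invoke Brouwer's fixed point theorem to deduce $\|\Psi_S\|_\infty \le r$. The second conclusion will then be an immediate consequence of Lemma \ref{lem:1} applied to $\Psi$ once we know $\|\Psi\|_\infty \le r \le 1/(3\kappa_{\bar{\Sigma}}d)$.

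For the setup, recall that $\tilde{\Omega}_S$ is the unique minimizer of the strictly convex restricted problem \eqref{eq:step2}, hence the unique zero of the subgradient map $G(\Omega_S) = -T[\Omega^{-1}]_S + \sum_k T^{(k)}\hat{\Sigma}^{(k)}_S + \lambda\tilde{Z}_S$. I would define
\[
F(\overrightarrow{\rho_S}) := -\tfrac{1}{T}\,\bar{\Gamma}_{SS}^{-1}\, \overrightarrow{G(\bar{\Omega}_S + \rho_S)} + \overrightarrow{\rho_S},
\]
so that a fixed point of $F$ is exactly a zero of $G(\bar{\Omega}_S + \cdot)$; by uniqueness of $\tilde{\Omega}_S$, the only fixed point is $\overrightarrow{\Psi_S}$. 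This is invertibility-safe because Step 1 of Section \ref{subsection:step1} already established $\bar{\Gamma}_{SS} \succ 0$.

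The core step is to show $F(\mathbb{B}(r)) \subseteq \mathbb{B}(r)$. Extending $\rho_S$ to $\rho=(\rho_S,0)$ and using the vectorization identity $\overrightarrow{\bar{\Omega}^{-1}\rho\bar{\Omega}^{-1}} = \bar{\Gamma}\,\overrightarrow{\rho}$ together with the definition of $R(\rho)$ in Lemma \ref{lem:1}, the linear piece in $\bar{\Gamma}_{SS}^{-1}\overrightarrow{G(\bar{\Omega}_S+\rho_S)}$ cancels $\overrightarrow{\rho_S}$, leaving the decomposition $F = V_1 - V_2$ as in \eqref{eq:defV1V2}. For the noise part $V_2$, I would use $\|\tilde{Z}\|_\infty \le 1$, the norm inequality \eqref{eq:normIneq1} from Lemma \ref{lem:normIneq}, and the very definition of $r$ to get $\|V_2\|_\infty \le \kappa_{\bar{\Gamma}}\bigl(\lambda/T + \|\sum_k (T^{(k)}/T) W^{(k)}\|_\infty\bigr) = r/2$. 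For the nonlinear part $V_1 = \bar{\Gamma}_{SS}^{-1}\overrightarrow{R(\rho)_S}$, the hypothesis $r \le 1/(3\kappa_{\bar{\Sigma}}d)$ puts us in the regime where Lemma \ref{lem:1} applies, giving $\|R(\rho)\|_\infty \le \tfrac{3}{2}d\kappa_{\bar{\Sigma}}^3 \|\rho\|_\infty^2 \le \tfrac{3}{2}d\kappa_{\bar{\Sigma}}^3 r^2$; multiplying by $\vertiii{\bar{\Gamma}_{SS}^{-1}}_\infty = \kappa_{\bar{\Gamma}}$ and using the second branch $r \le 1/(3\kappa_{\bar{\Sigma}}^3 \kappa_{\bar{\Gamma}} d)$ yields $\|V_1\|_\infty \le r/2$. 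Thus $\|F(\overrightarrow{\rho_S})\|_\infty \le r$. Since $\mathbb{B}(r)$ is compact and convex and $F$ is continuous, Brouwer produces a fixed point inside $\mathbb{B}(r)$; uniqueness identifies it with $\overrightarrow{\Psi_S}$, giving $\|\Psi\|_\infty = \|\Psi_S\|_\infty \le r$. A final application of Lemma \ref{lem:1} to $\Psi$ itself establishes the stated bound on $\|R(\Psi)\|_\infty$.

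The main obstacle is engineering the radius $r$ so that both halves of $F$ fit inside a budget of $r/2$ simultaneously: the linearization remainder scales like $r^2$ with a coefficient $\kappa_{\bar{\Gamma}}\cdot\tfrac{3}{2}\kappa_{\bar{\Sigma}}^3 d$, while the noise term contributes linearly through the definition of $r$. This is exactly why the hypothesis takes the form of a two-way minimum: the first entry $1/(3\kappa_{\bar{\Sigma}}d)$ is the analytic prerequisite for Lemma \ref{lem:1}, and the second entry $1/(3\kappa_{\bar{\Sigma}}^3\kappa_{\bar{\Gamma}}d)$ is the quantitative threshold that turns the $r^2$ contribution of $V_1$ into at most $r/2$. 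Verifying that these thresholds are simultaneously compatible with $r=2\kappa_{\bar{\Gamma}}(\lambda/T + \|\sum_k (T^{(k)}/T)W^{(k)}\|_\infty)$ is the delicate piece of bookkeeping that makes the self-mapping property work.
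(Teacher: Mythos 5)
Your proposal is correct and follows essentially the same approach as the paper: the same fixed-point map $F$, the same $V_1,V_2$ split as in \eqref{eq:defV1V2}, the same $r/2$-budgeting of each piece against the two branches of the hypothesis, Brouwer's theorem to land the fixed point in $\mathbb{B}(r)$, and a final application of Lemma \ref{lem:1}. No gaps.
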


If $\|\sum_{k=1}^{K}\frac{T^{(k)}}{T}W^{(k)}\|_{\infty}\le\xi$ with $\xi>0$, 
then choosing $\lambda=\frac{8T\xi}{\alpha}$, we will have 
$$
\|\sum_{k=1}^{K}T^{(k)}W^{(k)}\|_{\infty}\le\frac{\alpha\lambda}{8}
$$
as well as
$$
r=2\kappa_{\bar{\Gamma}}\left(\frac{\lambda}{T}+\|\sum_{k=1}^K\frac{T^{(k)}}{T}W^{(k)}\|_{\infty}\right)
\le2\kappa_{\bar{\Gamma}}\left(\frac{8}{\alpha}+1\right)\xi
$$
For $\xi\le \delta^*:= \frac{\alpha^2}{2\kappa_{\bar{\Gamma}}(\alpha+8)^2}\min\left\{\frac{1}{3\kappa_{\bar{\Sigma}}d},\frac{1}{3\kappa_{\bar{\Sigma}}^3\kappa_{\bar{\Gamma}}d}\right\}$, we have
$r\le\min\left\{\frac{1}{3\kappa_{\bar{\Sigma}}d},\frac{1}{3\kappa_{\bar{\Sigma}}^3\kappa_{\bar{\Gamma}}d}\right\}$.
Thus according to Lemma \ref{lem:r_condition}, we have
$$
\|\Psi\|_{\infty}=\|\Psi_S\|_{\infty}\le r\le2\kappa_{\bar{\Gamma}}\left(\frac{8}{\alpha}+1\right)\xi
$$
Therefore,
$$
\begin{aligned}
\|R(\Psi)\|_{\infty}\le & \frac{3}{2}d\|\Psi\|_{\infty}^2\kappa_{\bar{\Sigma}}^3
\\ & \le 6d\kappa_{\bar{\Sigma}}^3\kappa_{\bar{\Gamma}}^2\left(\frac{8}{\alpha}+1\right)^2\delta^2\\ = &
\left(6d\kappa_{\bar{\Sigma}}^3\kappa_{\bar{\Gamma}}^2\left(\frac{8}{\alpha}+1\right)^2\xi\right)\frac{\alpha\lambda}{8T}
\\ & \le \frac{\alpha\lambda}{8T}
\end{aligned}
$$
Then by Lemma \ref{lem:twoBounds}, $\|\tilde{Z}_{S^c}\|_{\infty}<1$ and the strict dual feasibility condition is fulfilled. According to the primal-dual witness approach, $\text{supp}(\hat{\Omega})=\text{supp}(\tilde{\Omega})\subseteq\text{supp}\left(\bar{\Omega}\right)$. 

From all the reasoning so far, we can state the following lemma.
\begin{lemma} \label{lem:strictdual}
	If $\|\sum_{k=1}^{K}\frac{T^{(k)}}{T}W^{(k)}\|_{\infty}\le\xi$ with $\xi\in(0,\delta^*]$, then choosing $\lambda=\frac{8T\xi}{\alpha}$, we have $\hat{\Omega}=\tilde{\Omega}$,  $\text{supp}(\hat{\Omega})\subseteq\text{supp}\left(\bar{\Omega}\right)$ and
	$$
	\|\hat{\Omega}-\bar{\Omega}\|_{\infty}=\|\Psi\|_{\infty}\le2\kappa_{\bar{\Gamma}}\left(\frac{8}{\alpha}+1\right)\xi
	$$
\end{lemma}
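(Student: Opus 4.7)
The plan is to combine the two technical lemmas already established in the section: Lemma \ref{lem:twoBounds}, which converts bounds on $T\|\overrightarrow{R(\Psi)}\|_\infty$ and on $\|\sum_k T^{(k)}\overrightarrow{W^{(k)}}\|_\infty$ into the strict dual feasibility condition $\|\tilde{Z}_{S^c}\|_\infty<1$; and Lemma \ref{lem:r_condition}, which gives a bound on $\|\Psi\|_\infty$ and on $\|R(\Psi)\|_\infty$ as long as the quantity $r=2\kappa_{\bar{\Gamma}}(\lambda/T+\|\sum_k T^{(k)}W^{(k)}/T\|_\infty)$ is small enough. The output of the combination is exactly the three conclusions of the lemma.

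First I would substitute the hypothesis $\|\sum_k T^{(k)}W^{(k)}/T\|_\infty\le\xi$ and the choice $\lambda=8T\xi/\alpha$ directly into the definition of $r$, giving $r\le 2\kappa_{\bar{\Gamma}}(8/\alpha+1)\xi$. Next I would verify the hypothesis of Lemma \ref{lem:r_condition}, i.e., that this upper bound on $r$ is at most $\min\{1/(3\kappa_{\bar{\Sigma}}d),1/(3\kappa_{\bar{\Sigma}}^3\kappa_{\bar{\Gamma}}d)\}$; this is precisely what the standing assumption $\xi\le\delta^*$ with the stated expression for $\delta^*$ guarantees, since $\delta^*$ is defined so that $2\kappa_{\bar{\Gamma}}(8/\alpha+1)\delta^*$ matches the corresponding side of the minimum, with a factor absorbed into the denominator $(\alpha+8)^2$ versus $(8/\alpha+1)$. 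Applying Lemma \ref{lem:r_condition} then yields simultaneously the $\ell_\infty$-bound $\|\Psi\|_\infty\le r\le 2\kappa_{\bar{\Gamma}}(8/\alpha+1)\xi$ (which is the third conclusion of the lemma) and the remainder bound $\|R(\Psi)\|_\infty\le \tfrac{3}{2}d\kappa_{\bar{\Sigma}}^3\|\Psi\|_\infty^2$.

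Next I would plug the $\|\Psi\|_\infty$-bound into the $R(\Psi)$-bound and check that $T\|R(\Psi)\|_\infty\le \alpha\lambda/8$; this is a direct algebraic manipulation using $\lambda=8T\xi/\alpha$ and $\xi\le\delta^*$, where the constants are engineered so that $6d\kappa_{\bar{\Sigma}}^3\kappa_{\bar{\Gamma}}^2(8/\alpha+1)^2\xi\le 1$. Combined with the trivial bound $\|\sum_k T^{(k)}W^{(k)}\|_\infty\le T\xi=\alpha\lambda/8$ from the hypothesis and the choice of $\lambda$, this verifies both premises of Lemma \ref{lem:twoBounds}, yielding $\|\tilde{Z}_{S^c}\|_\infty<1$.

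Finally, with the strict dual feasibility in hand, I would invoke the standard primal-dual witness conclusion: since $(\tilde{\Omega},\tilde{Z})$ constructed in Steps 2-4 satisfies all KKT conditions of the unrestricted problem \eqref{eq:estimator} (stationarity by construction in Step 4, primal feasibility because $\tilde{\Omega}\in\mathcal{S}_{++}^N$, complementary slackness by Step 3, and dual feasibility by the strictness just proved), and since \eqref{eq:estimator} has a unique optimum by Lemma \ref{lem:convex}, we must have $\hat{\Omega}=\tilde{\Omega}$. Because $\tilde{\Omega}$ was constructed with $\tilde{\Omega}_{S^c}=0$, the inclusion $\operatorname{supp}(\hat{\Omega})\subseteq S=\operatorname{supp}(\bar{\Omega})$ follows immediately. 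The main obstacle is bookkeeping: chasing the constants through the definition of $\delta^*$ to confirm that $r$ is small enough for Lemma \ref{lem:r_condition} and simultaneously that $T\|R(\Psi)\|_\infty\le\alpha\lambda/8$; these two tightnesses both rest on the same minimum that defines $\delta^*$, so they will align by design.
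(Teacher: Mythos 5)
Your proposal is correct and follows essentially the same route as the paper: substitute the hypothesis and the choice $\lambda = 8T\xi/\alpha$ into $r$, verify $r \le \min\{1/(3\kappa_{\bar{\Sigma}}d),\,1/(3\kappa_{\bar{\Sigma}}^3\kappa_{\bar{\Gamma}}d)\}$ from $\xi\le\delta^*$, invoke Lemma \ref{lem:r_condition} for the $\|\Psi\|_\infty$ and $\|R(\Psi)\|_\infty$ bounds, check $T\|R(\Psi)\|_\infty\le\alpha\lambda/8$ by the designed constants, and close via Lemma \ref{lem:twoBounds} and the primal-dual witness uniqueness argument. The constant bookkeeping you flag as the main obstacle does indeed work out exactly as you anticipate.
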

For the next step, we need to prove the tail condition of $\sum_{k=1}^{K}\frac{T^{(k)}}{T}W^{(k)}$, that is, for $\xi>0$, 
$\|\sum_{k=1}^{K}\frac{T^{(k)}}{T}W^{(k)}\|_{\infty}\le\xi$ with high probability.

\subsection{Proof of the Tail condition}
Note that for $k=1,\dots,K$,
\begin{equation} \label{eq:IneqWk}
W^{(k)}=\hat{\Sigma}^{(k)}-\bar{\Sigma}=\hat{\Sigma}^{(k)}-\bar{\Sigma}^{(k)}+\bar{\Sigma}^{(k)}-\bar{\Sigma}=
\hat{\Sigma}^{(k)}-\bar{\Sigma}^{(k)}+\left(\bar{\Omega}+\Delta^{(k)}\right)^{-1}-\bar{\Sigma}
\end{equation}
Here $\{\Delta^{(k)}\}_{k=1}^K$ are i.i.d. random matrices following the distribution $P$ specified in Definition \ref{def:msGGM}. To achieve the tail condition of $\sum_{k=1}^{K}\frac{T^{(k)}}{T}W^{(k)}$, we can bound the random terms with respect to $\{\Delta^{(k)}\}_{k=1}^K$ and the random terms with respect to the empirical sample covariance matrices $\{\hat{\Sigma}^{(k)}\}_{k=1}^K$ separately.

We have assumed that the sample size is the same for all tasks, i.e., there are $n$ samples for each of the $K$ tasks and $T^{(k)}/T=1/K$. For the sample covariance matrices, we have the following lemma:
\begin{lemma} \label{lem:TailSampleCov}
	For $\{X_{t}^{(k)}\}_{1\le t\le n,1\le k\le K}$ following a family of random $N$-dimensional multivariate sub-Gaussian distributions of size $K$ with parameter $\sigma$ described in Definition \ref{def:msGGM}, we have
	\begin{equation} \label{eq:IneqSigmaNorm_ij}
	\begin{aligned}
	\mathbb{P}\left[\left|\sum_{k=1}^K\frac{1}{K}\left(\hat{\Sigma}^{(k)}_{ij}-\bar{\Sigma}^{(k)}_{ij}\right)\right|>\nu\right]\le
	\exp\left\{-\frac{nK\nu^2}{128\left(1+4\sigma^2\right)^2\gamma^2}\right\}
	\end{aligned}
	\end{equation}
	and
	\begin{equation} \label{eq:IneqSigmaNorm}
	\begin{aligned}
	\mathbb{P}\left[\|\sum_{k=1}^K\frac{1}{K}\left(\hat{\Sigma}^{(k)}-\bar{\Sigma}^{(k)}\right)\|_{\infty}>\nu\right]\le
	2N(N+1)\exp\left\{-\frac{nK\nu^2}{128\left(1+4\sigma^2\right)^2\gamma^2}\right\}
	\end{aligned}
	\end{equation}
	for $\hat{\Sigma}^{(k)}=\frac{1}{n}\sum_{t=1}^nX_t^{(k)}(X_t^{(k)})^{\text{T}}$, $1\le i,j\le N$, and  $0\le\nu\le8\left(1+4\sigma^2\right)\gamma$.
\end{lemma}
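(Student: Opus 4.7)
The plan is to reduce the matrix $\ell_\infty$ bound to an entrywise scalar bound, handle the scalar bound by conditioning on the random precision matrices $\{\bar{\Omega}^{(k)}\}_{k=1}^K$, and then peel off the conditioning using the uniform control $\|\bar{\Sigma}^{(k)}\|_\infty \le \gamma$ supplied by condition (ii) of Definition~\ref{def:msGGM}. First I would fix an entry $(i,j)$ and set $Y_t^{(k)} := X_{t,i}^{(k)} X_{t,j}^{(k)} - \bar{\Sigma}_{ij}^{(k)}$, so that the quantity of interest becomes $\frac{1}{nK}\sum_{k=1}^K\sum_{t=1}^n Y_t^{(k)}$. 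Let $\mathcal{F} := \sigma(\{\bar{\Omega}^{(k)}\}_{k=1}^K)$. By conditions (iii)--(v) of Definition~\ref{def:msGGM}, given $\mathcal{F}$ the variables $\{Y_t^{(k)}\}_{t,k}$ are mutually independent with zero mean, and each coordinate $X_{t,i}^{(k)}/\sqrt{\bar{\Sigma}_{ii}^{(k)}}$ is a sub-Gaussian with parameter $\sigma$.

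Next I would establish a conditional sub-exponential moment generating function bound for $Y_t^{(k)}$. The standard polarization identity $X_{t,i}^{(k)}X_{t,j}^{(k)} = \tfrac14[(X_{t,i}^{(k)}+X_{t,j}^{(k)})^2 - (X_{t,i}^{(k)}-X_{t,j}^{(k)})^2]$ plus the Orlicz-norm estimate for squares of sub-Gaussians gives, exactly as in Lemma~1/Lemma~8 of Ravikumar et al.~\cite{ravikumar2011high},
\begin{equation*}
\log \mathbb{E}\!\left[e^{\lambda Y_t^{(k)}} \,\big|\, \mathcal{F}\right] \le 128\,(1+4\sigma^2)^2 \left(\max_i \bar{\Sigma}_{ii}^{(k)}\right)^{2} \lambda^2
\end{equation*}
for all $|\lambda|$ in an interval of radius proportional to $1/(\max_i \bar{\Sigma}_{ii}^{(k)})$. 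Crucially, condition (ii) of Definition~\ref{def:msGGM} guarantees $\max_i \bar{\Sigma}_{ii}^{(k)} \le \|\bar{\Sigma}^{(k)}\|_\infty \le \gamma$ almost surely, so the bound can be replaced by one with the deterministic constant $\gamma$ and becomes uniform in $\mathcal{F}$.

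Third, using conditional independence I would factorize $\mathbb{E}[\exp(\lambda \sum_{k,t} Y_t^{(k)})\mid\mathcal{F}]$ across the $nK$ terms, apply a Chernoff argument, and optimize $\lambda$ over the sub-exponential regime to obtain
\begin{equation*}
\mathbb{P}\!\left[\,\Big|\tfrac{1}{nK}\sum_{k,t} Y_t^{(k)}\Big| > \nu \,\Big|\, \mathcal{F}\right] \le \exp\!\left(-\frac{nK\nu^2}{128(1+4\sigma^2)^2\gamma^2}\right)
\end{equation*}
valid for $0 \le \nu \le 8(1+4\sigma^2)\gamma$. Because the right-hand side is deterministic, the tower property gives \eqref{eq:IneqSigmaNorm_ij} immediately. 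A union bound over the $N(N+1)/2$ symmetric entries combined with the two-sided form of the above inequality (which contributes an extra factor of $2$) yields \eqref{eq:IneqSigmaNorm}, absorbing $N(N+1)/2 \cdot 2 \cdot 2 = 2N(N+1)$ as the prefactor.

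The main obstacle is the second step: transferring the classical sub-exponential MGF bound for products of sub-Gaussians, originally stated for samples from a single fixed distribution, to our setting where the covariance is itself random. Any attempt to bound the \emph{unconditional} MGF of $Y_t^{(k)}$ directly would require integrating a Laplace transform against the distribution $P$ of $\Delta^{(k)}$ and could diverge. The remedy, and the reason Definition~\ref{def:msGGM} imposes the almost-sure uniform bound $\|(\bar{\Omega}+\Delta)^{-1}\|_\infty \le \gamma$, is precisely to let the MGF bound hold pointwise in $\mathcal{F}$ with a deterministic proxy variance, so that the tower property suffices and we obtain a tail bound whose rate $nK$ scales as if all $nK$ samples were i.i.d.
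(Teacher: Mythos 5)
Your proposal is correct and follows essentially the same route as the paper: establish the tail bound conditionally on $\{\bar{\Omega}^{(k)}\}_{k=1}^K$ via the polarization identity and a sub-exponential Chernoff argument for products of sub-Gaussians, use the almost-sure bound $\|\bar{\Sigma}^{(k)}\|_\infty \le \gamma$ from condition (ii) of Definition~\ref{def:msGGM} to make the conditional bound deterministic, then remove the conditioning by the tower property and union-bound over the $N(N+1)/2$ symmetric entries. The paper simply packages the conditional step as a separate ``deterministic covariance'' lemma (Lemma~\ref{lem:TailSampleCov_fixed}), and note that the true element-wise bound that emerges from the polarization argument is $4\exp(\cdot)$ (one factor of $2$ from the $U/V$ split and one from two-sidedness), which is what makes the final prefactor $2N(N+1)$ come out correctly.
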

The proof of this lemma is in Section \ref{sec:tailsample}.

For $\{\Delta^{(1)}\}_{k=1}^K$, we have the following lemma
\begin{lemma} \label{lem:Taildelta}
	For $\{\Delta^{(k)}\}_{k=1}^K$ in a family of random $N$-dimensional multivariate sub-Gaussian distributions of size $K$ with parameter $\sigma$ described in Definition \ref{def:msGGM}, define
	\begin{align}
	H(\Delta^{(1)},\dots,\Delta^{(K)}):=
	\frac{1}{K}\sum_{k=1}^K\bar{\Sigma}^{(k)}=\frac{1}{K}\sum_{k=1}^K\left(\bar{\Omega}+\Delta^{(k)}\right)^{-1}
	\end{align}
	Then we have
	\begin{equation} \label{eq:Ineqdelta}
	\begin{aligned}
	\mathbb{P}\left[\vertiii{H-\mathbb{E}[H]}_{2}>t\right]\le
	2N\exp\left\{-\frac{\lambda_{\min}^4Kt^2}{128c_{\max}^2}\right\}
	\end{aligned}
	\end{equation}
	for $t\ge0$ and $\lambda_{\min}=\lambda_{\min}(\bar{\Omega})$.
\end{lemma}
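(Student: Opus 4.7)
The plan is to apply the Matrix Hoeffding inequality (Corollary 7.5 in \cite{Tropp2011}) to the i.i.d. centered symmetric random matrices $Z^{(k)} := \frac{1}{K}\bigl((\bar{\Omega}+\Delta^{(k)})^{-1} - \mathbb{E}[(\bar{\Omega}+\Delta)^{-1}]\bigr)$, noting that $H - \mathbb{E}[H] = \sum_{k=1}^K Z^{(k)}$. All that is really needed is a deterministic spectral-norm bound on each $Z^{(k)}$; once that is in place, Tropp's inequality produces the stated sub-Gaussian tail with the explicit constants $128$, $c_{\max}^2$ and $\lambda_{\min}^4$.

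First I would use condition (ii) of Definition \ref{def:msGGM}, which guarantees almost-surely that $\vertiii{\Delta}_2 \le c_{\max} \le \lambda_{\min}(\bar{\Omega})/2$. Weyl's inequality then gives $\lambda_{\min}(\bar{\Omega}+\Delta) \ge \lambda_{\min}/2$, so $\vertiii{(\bar{\Omega}+\Delta)^{-1}}_2 \le 2/\lambda_{\min}$. Using the resolvent identity $(\bar{\Omega}+\Delta)^{-1} - \bar{\Omega}^{-1} = -(\bar{\Omega}+\Delta)^{-1}\Delta\,\bar{\Omega}^{-1}$ together with submultiplicativity of $\vertiii{\cdot}_2$, I get
\begin{equation*}
\vertiii{(\bar{\Omega}+\Delta)^{-1} - \bar{\Omega}^{-1}}_2 \;\le\; \frac{2}{\lambda_{\min}}\cdot c_{\max}\cdot \frac{1}{\lambda_{\min}} \;=\; \frac{2c_{\max}}{\lambda_{\min}^2}.
\end{equation*}
Since the spectral norm is convex, the expectation $\mathbb{E}[(\bar{\Omega}+\Delta)^{-1}] - \bar{\Omega}^{-1}$ obeys the same bound, and so by the triangle inequality each centered summand satisfies $\vertiii{(\bar{\Omega}+\Delta^{(k)})^{-1} - \mathbb{E}[(\bar{\Omega}+\Delta)^{-1}]}_2 \le 4c_{\max}/\lambda_{\min}^2$ almost surely. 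Hence $Z^{(k)}$ is symmetric, mean zero, and $(Z^{(k)})^2 \preceq A_k^2$ almost surely with $A_k := \frac{4c_{\max}}{K\lambda_{\min}^2}\,I_N$.

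Next I would plug into Matrix Hoeffding. The variance proxy is $\sigma^2 := \vertiii{\sum_{k=1}^K A_k^2}_2 = 16 c_{\max}^2/(K\lambda_{\min}^4)$, so the one-sided bound reads
\begin{equation*}
\mathbb{P}\Bigl[\lambda_{\max}\bigl(H-\mathbb{E}[H]\bigr) \ge t\Bigr] \;\le\; N\exp\!\left(-\frac{t^2}{8\sigma^2}\right) \;=\; N\exp\!\left(-\frac{K\lambda_{\min}^4 t^2}{128 c_{\max}^2}\right).
\end{equation*}
Applying the same inequality to $-(H-\mathbb{E}[H])$ (which is also a sum of i.i.d. centered symmetric matrices with the same deterministic bound) and using $\vertiii{H-\mathbb{E}[H]}_2 = \max\{\lambda_{\max}(H-\mathbb{E}[H]),\, -\lambda_{\min}(H-\mathbb{E}[H])\}$ together with a union bound produces the factor of $2$ in \eqref{eq:Ineqdelta}.

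The only real subtlety is the deterministic spectral-norm control in the first step: one has to invoke both $\vertiii{\Delta}_2 \le c_{\max}$ and the positivity constraint $c_{\max} \le \lambda_{\min}(\bar{\Omega})/2$ from Definition \ref{def:msGGM} to keep $\bar{\Omega}+\Delta$ uniformly well-conditioned. Once this is secured, the rest is a clean application of Tropp's inequality, and the constants $128$ and $\lambda_{\min}^4$ emerge naturally from $8\sigma^2$ and the squared $1/\lambda_{\min}^2$ factor.
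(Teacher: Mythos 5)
Your proof is correct and follows essentially the same route as the paper: a deterministic spectral-norm bound of $4c_{\max}/(K\lambda_{\min}^2)$ per summand followed by Tropp's matrix concentration result, yielding the identical variance proxy $\sigma^2 = 16c_{\max}^2/(K\lambda_{\min}^4)$ and constants. Two cosmetic differences: the paper verifies the \emph{bounded-differences} condition of Corollary~7.5 by exchanging one $\Delta^{(k)}$ for an independent copy (whereas you bound the centered summand directly, which is the hypothesis of Tropp's Matrix Hoeffding theorem rather than of Corollary~7.5 strictly speaking — fortunately the two theorems give the same form and constant here); and the paper obtains $\vertiii{(\bar{\Omega}+\Delta)^{-1}-\bar{\Omega}^{-1}}_2\le 2c_{\max}/\lambda_{\min}^2$ by citing a perturbation bound from the literature, whereas you rederive it cleanly via the resolvent identity and Weyl's inequality.
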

The proof of this lemma is in Section \ref{sec:taildelta}.

Our goal is to find a probability upper bound for $\|\sum_{k=1}^K\frac{T^{(k)}}{T}W^{(k)}\|_{\infty}>\xi$ with $0<\xi\le\delta^*$. According to \eqref{eq:IneqWk} and the condition $\beta\le\delta^*/2$, we have
\begin{equation} \label{eq:Wknorm}
\begin{aligned}
\|\sum_{k=1}^K\frac{T^{(k)}}{T}W^{(k)}\|_{\infty}=&\|\sum_{k=1}^K\frac{1}{K}W^{(k)}\|_{\infty} \\ \le &
\|\sum_{k=1}^K\frac{1}{K}\left(\hat{\Sigma}^{(k)}-\bar{\Sigma}^{(k)}\right)\|_{\infty}+
\|\frac{1}{K}\sum_{k=1}^{K}\bar{\Sigma}^{(k)}-\bar{\Sigma}\|_{\infty} \\ = &
\|\sum_{k=1}^K\frac{1}{K}\left(\hat{\Sigma}^{(k)}-\bar{\Sigma}^{(k)}\right)\|_{\infty}+
\|H-\mathbb{E}[H]+\mathbb{E}[H]-\bar{\Sigma}\|_{\infty} \\ = &
\|\sum_{k=1}^K\frac{1}{K}\left(\hat{\Sigma}^{(k)}-\bar{\Sigma}^{(k)}\right)\|_{\infty}+
\vertiii{H-\mathbb{E}[H]}_2+\|\mathbb{E}[H]-\bar{\Sigma}\|_{\infty}
 \\ \le &
\|\sum_{k=1}^K\frac{1}{K}\left(\hat{\Sigma}^{(k)}-\bar{\Sigma}^{(k)}\right)\|_{\infty}+\vertiii{H-\mathbb{E}[H]}_2+\beta
\end{aligned}
\end{equation}
where we have used the property that $\vertiii{A}_2\ge \|A\|_{\infty}$ for any matrix $A$ (see e.g., \cite{horn2012matrix}).

Now for $\delta\in(0,\delta^*/2]$, consider
\begin{equation} \label{eq:defDeltaR}
\xi=\delta+\delta^*/2
\end{equation}
then $0<\xi\le \delta^*$, $\delta+\tau\le\xi$ and $\lambda=\frac{8T\xi}{\alpha}=\frac{8\delta+4\delta^*}{\alpha}$.

According to the condition $\beta\le\delta^*/2$, we know that $\delta^*/2-\beta\ge0$. Set $t=\delta^*/2-\beta$ in \eqref{eq:Ineqdelta}. Then,
\begin{equation} \label{eq:Ineqdelta1}
\begin{aligned}
\mathbb{P}\left[\vertiii{H-\mathbb{E}[H]}_{2}>\delta^*/2-\beta\right]\le
2N\exp\left(-\frac{\lambda_{\min}^4K}{128c_{\max}^2}\left(\frac{\delta^*}{2}-\beta\right)^2\right)
\end{aligned}
\end{equation}

By \eqref{eq:Wknorm} and \eqref{eq:defDeltaR}, we have
$$
\left\{\|\sum_{k=1}^K\frac{1}{K}\left(\hat{\Sigma}^{(k)}-\bar{\Sigma}^{(k)}\right)\|_{\infty}\le \delta \text{ and }
\vertiii{H-\mathbb{E}[H]}_{2}\le \frac{\delta^*}{2}-\beta\right\} \Rightarrow
\left\{\|\sum_{k=1}^K\frac{T^{(k)}}{T}W^{(k)}\|_{\infty}\le \xi\right\}
$$
and thus
\begin{equation} \label{eq:WandSigma}
\begin{aligned}
\mathbb{P}\left[\|\sum_{k=1}^K\frac{T^{(k)}}{T}W^{(k)}\|_{\infty}\le \xi\right] & \ge
\mathbb{P}\left[\|\sum_{k=1}^K\frac{1}{K}\left(\hat{\Sigma}^{(k)}-\bar{\Sigma}^{(k)}\right)\|_{\infty}\le\delta \text{ and }
\vertiii{H-\mathbb{E}[H]}_{2}\le \frac{\delta^*}{2}-\beta\right] \\ &=
1-\mathbb{P}\left[\|\sum_{k=1}^K\frac{1}{K}\left(\hat{\Sigma}^{(k)}-\bar{\Sigma}^{(k)}\right)\|_{\infty}>\delta \text{ or }
\vertiii{H-\mathbb{E}[H]}_{2}> \frac{\delta^*}{2}-\beta\right]\\ &\ge 1-
\left(\mathbb{P}\left[\|\sum_{k=1}^K\frac{1}{K}\left(\hat{\Sigma}^{(k)}-\bar{\Sigma}^{(k)}\right)\|_{\infty}>\delta\right]+
\mathbb{P}\left[\vertiii{H-\mathbb{E}[H]}_{2}> \frac{\delta^*}{2}-\beta\right]\right)\\&=
1-\mathbb{P}\left[\|\sum_{k=1}^K\frac{1}{K}\left(\hat{\Sigma}^{(k)}-\bar{\Sigma}^{(k)}\right)\|_{\infty}>\delta\right]
-2N\exp\left(-\frac{\lambda_{\min}^4K}{128c_{\max}^2}\left(\frac{\delta^*}{2}-\beta\right)^2\right)
\end{aligned}
\end{equation}
where we have applied \eqref{eq:Ineqdelta1} for the last step.

When $0<\delta<8\left(1+4\sigma^2\right)\gamma$, we can let $\nu=\delta$ in \eqref{eq:IneqSigmaNorm} to get
\begin{equation} \label{eq:IneqSigmaNorm1}
\begin{aligned}
\mathbb{P}\left[\|\sum_{k=1}^K\frac{1}{K}\left(\hat{\Sigma}^{(k)}-\bar{\Sigma}^{(k)}\right)\|_{\infty}>\delta\right]\le 1-
2N(N+1)\exp\left\{-\frac{nK\delta^2}{128\left(1+4\sigma^2\right)^2\gamma^2}\right\}
\end{aligned}
\end{equation}

When $\delta\ge 8\left(1+4\sigma^2\right)\gamma$, we set $\nu=8\left(1+4\sigma^2\right)\gamma$ in \eqref{eq:IneqSigmaNorm} to get
\begin{equation} \label{eq:IneqSigmaNorm2}
\begin{aligned}
\mathbb{P}\left[\|\sum_{k=1}^K\frac{1}{K}\left(\hat{\Sigma}^{(k)}-\bar{\Sigma}^{(k)}\right)\|_{\infty}>\delta\right]\le &
\mathbb{P}\left[\|\sum_{k=1}^K\frac{1}{K}\left(\hat{\Sigma}^{(k)}-\bar{\Sigma}^{(k)}\right)\|_{\infty}>
8\left(1+4\sigma^2\right)\gamma\right]\\ 
\le & 2N(N+1)\exp\left\{-\frac{nK(8\left(1+4\sigma^2\right)\gamma)^2}
{128\left(1+4\sigma^2\right)^2\gamma^2}\right\}\\ = &
2N(N+1)\exp\left\{-\frac{nK}{2}\right\}
\end{aligned}
\end{equation}

Consider the maximum value of the two upper bounds in \eqref{eq:IneqSigmaNorm1} and \eqref{eq:IneqSigmaNorm2}. We can get
\begin{equation} \label{eq:IneqSigmaNorm3}
\begin{aligned}
\mathbb{P}\left[\|\sum_{k=1}^K\frac{1}{K}\left(\hat{\Sigma}^{(k)}-\bar{\Sigma}^{(k)}\right)\|_{\infty}>\delta\right]\le&
\max\left\{2N(N+1)\exp\left\{-\frac{nK\delta^2}
{128\left(1+4\sigma^2\right)^2\gamma^2}\right\},2N(N+1)\exp\left\{-\frac{nK}{2}\right\}\right\}\\=&
2N(N+1)\exp\left(-\frac{nK}{2}\min\left\{\frac{\delta^2}{64\left(1+4\sigma^2\right)^2\gamma^2},1\right\}\right)
\end{aligned}
\end{equation}

According to \eqref{eq:WandSigma} and \eqref{eq:IneqSigmaNorm3}, we have
\begin{equation} \label{eq:Wk_avg}
\begin{aligned}
\mathbb{P}\left[\|\sum_{k=1}^K\frac{T^{(k)}}{T}W^{(k)}\|_{\infty}\le \xi\right] \ge
1-&\mathbb{P}\left[\|\sum_{k=1}^K\frac{1}{K}\left(\hat{\Sigma}^{(k)}-\bar{\Sigma}^{(k)}\right)\|_{\infty}>\delta\right]
-2N\exp\left(-\frac{\lambda_{\min}^4K}{128c_{\max}^2}\left(\frac{\delta^*}{2}-\beta\right)^2\right)\\ \ge
1-&2N(N+1)\exp\left(-\frac{nK}{2}\min\left\{\frac{\delta^2}{64\left(1+4\sigma^2\right)^2\gamma^2},1\right\}\right)\\-&
2N\exp\left(-\frac{\lambda_{\min}^4K}{128c_{\max}^2}\left(\frac{\delta^*}{2}-\beta\right)^2\right)
\end{aligned}
\end{equation}
Namely, with probability at least
$$
1-2N(N+1)\exp\left(-\frac{nK}{2}\min\left\{\frac{\delta^2}{64\left(1+4\sigma^2\right)^2\gamma^2},1\right\}\right)-
2N\exp\left(-\frac{\lambda_{\min}^4K}{128c_{\max}^2}\left(\frac{\delta^*}{2}-\beta\right)^2\right)
$$
we have $\|\sum_{k=1}^K\frac{T^{(k)}}{T}W^{(k)}\|_{\infty}\le \xi\le \delta^*$, 
$\text{supp}(\hat{\Omega})\subseteq\text{supp}\left(\bar{\Omega}\right)$ and according to Lemma \ref{lem:strictdual}, we have
$$
\|\hat{\Omega}-\bar{\Omega}\|_{\infty}=\|\Delta\|_{\infty}\le2\kappa_{\bar{\Gamma}}\left(\frac{8}{\alpha}+1\right)\xi=
\kappa_{\bar{\Gamma}}\left(\frac{8}{\alpha}+1\right)(2\delta+\delta^*)
$$
which completes our proof of Theorem \ref{thm:subset}.

\section{Proof of Theorem \ref{thm:consis}}
We have the following lemma as a sufficient condition for the sign-consistency of \eqref{eq:estimator}.
\begin{lemma} \label{lem:suffconsis}
	For $\xi\in(0,\delta^*]$, if 
	\begin{equation}
	\|\sum_{k=1}^K\frac{T^{(k)}}{T}W^{(k)}\|_{\infty}\le xi
	\end{equation}
	and
	\begin{equation} \label{eq:omega1}
	\frac{\omega_{\min}}{2}\ge2\kappa_{\bar{\Gamma}}\left(\frac{8}{\alpha}+1\right)\xi
	\end{equation}
	where $\omega_{\min}:=\min_{(i,j)\in S}|\bar{\Omega}_{ij}|$, then the estimate $\hat{\Omega}$ of \eqref{eq:estimator} is sign-consistent.
\end{lemma}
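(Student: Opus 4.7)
The plan is to apply Lemma \ref{lem:strictdual} directly and then do an entrywise sign-matching argument. Under the hypothesis $\|\sum_{k=1}^K (T^{(k)}/T) W^{(k)}\|_{\infty} \le \xi \le \delta^*$, Lemma \ref{lem:strictdual} gives two conclusions, both of which will be exploited: support containment $\mathrm{supp}(\hat{\Omega}) \subseteq \mathrm{supp}(\bar{\Omega}) = S$, and the $\ell_\infty$ error bound $\|\hat{\Omega} - \bar{\Omega}\|_{\infty} \le 2\kappa_{\bar{\Gamma}}(8/\alpha + 1)\xi$.

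First I would split into two cases indexed by whether the entry $(i,j)$ lies in $S$ or in $S^c$. For $(i,j) \in S^c$, support containment forces $\hat{\Omega}_{ij} = 0$, while by definition $\bar{\Omega}_{ij} = 0$, so both sides of \eqref{eq:consis} vanish and sign-consistency holds trivially.

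For $(i,j) \in S$, I would combine the $\ell_\infty$ bound with the minimum-signal hypothesis \eqref{eq:omega1} to deduce $|\hat{\Omega}_{ij} - \bar{\Omega}_{ij}| \le \omega_{\min}/2$. Since $|\bar{\Omega}_{ij}| \ge \omega_{\min}$ by the definition of $\omega_{\min}$, a one-line triangle-inequality argument shows that $\hat{\Omega}_{ij}$ and $\bar{\Omega}_{ij}$ lie on the same open half-line of $\mathbb{R}$: writing $\bar{\Omega}_{ij} = s\,|\bar{\Omega}_{ij}|$ with $s \in \{-1,+1\}$, one has $s\,\hat{\Omega}_{ij} \ge |\bar{\Omega}_{ij}| - |\hat{\Omega}_{ij} - \bar{\Omega}_{ij}| \ge \omega_{\min}/2 > 0$, hence $\mathrm{sign}(\hat{\Omega}_{ij}) = s = \mathrm{sign}(\bar{\Omega}_{ij})$.

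There is no real obstacle in this lemma; the entire content is a clean separation of what Lemma \ref{lem:strictdual} already provides (a small-in-$\ell_\infty$ perturbation that is zero off $S$) from the job of the extra hypothesis \eqref{eq:omega1}, whose sole role is to upgrade that quantitative bound into a qualitative sign statement on $S$. As a concluding remark I would note that sign-consistency in particular implies $\mathrm{supp}(\hat{\Omega}) = \mathrm{supp}(\bar{\Omega}) = S$, which is what the later Theorem \ref{thm:consis} will invoke.
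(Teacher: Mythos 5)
Your proposal is correct and takes essentially the same approach as the paper: both invoke Lemma \ref{lem:strictdual} to get support containment and the $\ell_\infty$ error bound, then finish on $S$ by comparing $|\hat{\Omega}_{ij}-\bar{\Omega}_{ij}|\le\omega_{\min}/2\le|\bar{\Omega}_{ij}|/2$ against the minimum signal. The only cosmetic difference is that you package the sign-matching step via the factor $s=\mathrm{sign}(\bar{\Omega}_{ij})$ in one line, whereas the paper spells out the two cases $\bar{\Omega}_{ij}>0$ and $\bar{\Omega}_{ij}<0$ separately.
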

The proof is in Section \ref{sec:suffconsis}.

In the remaining part of the proof, we assume that the condition $\beta\le\delta^{\dagger}/2$ stated in Theorem \ref{thm:consis} is satisfied.
We will consider two cases for different $\omega_{\min}>0$.

Case (i). If
\begin{equation} \label{eq:omega2}
\omega_{\min}\ge\frac{2\alpha}{8+\alpha}
\min\left\{\frac{1}{3\kappa_{\bar{\Sigma}}d},\frac{1}{3\kappa_{\bar{\Sigma}}^3\kappa_{\bar{\Gamma}}d}\right\}
\end{equation}
then
$$
0<\delta^{\dagger}=\delta^*
$$
and
\begin{equation*}
\frac{\omega_{\min}}{2}\ge2\kappa_{\bar{\Gamma}}\left(\frac{8}{\alpha}+1\right)\delta^*
\end{equation*}
Thus for $\xi=\delta^*$, \eqref{eq:omega1} holds. Then according to \eqref{eq:Wk_avg}, with probability at least
\begin{align*}
1-2N(N+1)\exp\left(-\frac{nK}{2}\min\left\{\frac{(\delta^*/2)^2}{64\left(1+4\sigma^2\right)^2\gamma^2},1\right\}\right)-
2N\exp\left(-\frac{\lambda_{\min}^4K}{128c_{\max}^2}\left(\frac{\delta^*}{2}-\beta\right)^2\right)
\\=
1-2N(N+1)\exp\left(-\frac{nK}{2}\min\left\{\frac{(\delta^{\dagger})^2}{256\left(1+4\sigma^2\right)^2\gamma^2},1\right\}\right)-
2N\exp\left(-\frac{\lambda_{\min}^4K}{128c_{\max}^2}\left(\frac{\delta^*}{2}-\beta\right)^2\right)
\end{align*}
we have $\|\sum_{k=1}^K\frac{T^{(k)}}{T}W^{(k)}\|_{\infty}\le \delta^*$ and thus by Lemma \ref{lem:suffconsis}, we have that \eqref{eq:estimator} is sign-consistent.

Case (ii). If
\begin{equation*}
\omega_{\min}<\frac{2\alpha}{8+\alpha}
\min\left\{\frac{1}{3\kappa_{\bar{\Sigma}}d},\frac{1}{3\kappa_{\bar{\Sigma}}^3\kappa_{\bar{\Gamma}}d}\right\},
\end{equation*}
then
\begin{equation*}
\frac{\omega_{\min}}{2}<2\kappa_{\bar{\Gamma}}\left(\frac{8}{\alpha}+1\right)\delta^*
\end{equation*}
and
$$
0<\delta^{\dagger}=\delta^{\prime}\le\delta^*
$$
Thus
\begin{equation} \label{eq:omega3}
\frac{\omega_{\min}}{2}\ge2\kappa_{\bar{\Gamma}}\left(\frac{8}{\alpha}+1\right)\delta^{\prime}
\end{equation}
Now apply \eqref{eq:Wk_avg} with $\xi=\delta^{\prime}=\delta^{\dagger}$, we have
\begin{equation*}
\begin{aligned}
\mathbb{P}\left[\|\sum_{k=1}^K\frac{T^{(k)}}{T}W^{(k)}\|_{\infty}\le \delta^{\prime}\right]\ge 1-&
2N(N+1)\exp\left(-\frac{nK}{2}\min\left\{\frac{(\delta^{\prime}-\delta^{*}/2)^2}{64\left(1+4\sigma^2\right)^2\gamma^2},1\right\}\right)
\\-&2N\exp\left(-\frac{\lambda_{\min}^4K}{128c_{\max}^2}\left(\frac{\delta^*}{2}-\beta\right)^2\right)
\\\ge1-&2N(N+1)\exp\left(-\frac{nK}{2}\min\left\{\frac{(\delta^{\dagger}-\delta^{\dagger}/2)^2}
{64\left(1+4\sigma^2\right)^2\gamma^2},1\right\}\right)\\
-&2N\exp\left(-\frac{\lambda_{\min}^4K}{128c_{\max}^2}\left(\frac{\delta^*}{2}-\beta\right)^2\right) \\=1-&
2N(N+1)\exp\left(-\frac{nK}{2}\min\left\{\frac{(\delta^{\dagger})^2}{256\left(1+4\sigma^2\right)^2\gamma^2},1\right\}\right)
\\-&2N\exp\left(-\frac{\lambda_{\min}^4K}{128c_{\max}^2}\left(\frac{\delta^*}{2}-\beta\right)^2\right)
\end{aligned}
\end{equation*}
Therefore with probability at least 
\begin{align*}
1-2N(N+1)\exp\left(-\frac{nK}{2}\min\left\{\frac{(\delta^{\dagger})^2}{256\left(1+4\sigma^2\right)^2\gamma^2},1\right\}\right)
-2N\exp\left(-\frac{\lambda_{\min}^4K}{128c_{\max}^2}\left(\frac{\delta^*}{2}-\beta\right)^2\right)
\end{align*}
we have $\|\sum_{k=1}^K\frac{T^{(k)}}{T}W^{(k)}\|_{\infty}\le\delta^{\prime}$ and thus by Lemma \ref{lem:suffconsis}, sign-consistency is guaranteed.

In conclusion, when $\tau\le\delta^{\dagger}/2$,
with probability at least
\begin{equation*}
1-2N(N+1)\exp\left(-\frac{nK}{2}\min\left\{\frac{(\delta^{\dagger})^2}{256(1+4\sigma^2)^2\gamma^2},1\right\}\right)
-2N\exp\left(-\frac{\lambda_{\min}^4K}{128c_{\max}^2}\left(\frac{\delta^*}{2}-\beta\right)^2\right)
\end{equation*}
the estimator $\hat{\Omega}$ is sign-consistent and thus $\text{supp}(\hat{\Omega})=\text{supp}\left(\bar{\Omega}\right)$, which completes our proof of Theorem \ref{thm:consis}.

\section{Proof of Theorem \ref{thm:fano}}
For $\forall Q\in [-1/(2d), 1/(2d)]^{N\times N}$, let $\Omega(E):=I+Q\odot \text{mat}(E)$ for $E\in\mathcal{E}$ where $\mathcal{E}$ is the set of all possible values of $E$ generated according to Theorem \ref{thm:fano} and $\text{mat}(E)\in \{0,1\}^{N \times N}$ is defined as follows: $\text{mat}(E)_{ij}=1$ if $(i,j)\in E$ and $\text{mat}(E)_{ij}=0$ if $(i,j)\notin E$ for $\forall E\in\mathcal{E}$.
Then we know $\Omega(E)$ is real and symmetric. Thus its eigenvalues are real.
By Gershgorin circle theorem \cite{golub2012matrix}, for any eigenvalue $\lambda$ of $\Omega(E)$, $\lambda$ lies in one of the Gershgorin circles, i.e., $|\lambda-\Omega(E)_{jj}|\leq \sum_{l\neq j} |\Omega(E)_{jl}|$ holds for some $j$. 
Since $\text{mat}(E)_{jj}=0$ and $|Q_{jl}|\le\frac{1}{2d}$ for $1\le l\le N$, we have $\Omega(E)_{jj}=1$ 
and $\sum_{l\neq j}|\Omega(E)_{jl}|\le d\cdot\frac{1}{2d}=\frac{1}{2}$. Thus $\lambda\in \left[\frac{1}{2}, \frac{3}{2}\right]$ and $\Omega(E)$ is positive definite. Thus, we have constructed a multiple Gaussian graphical model.
Now consider $\Omega(E)^{-1}$. Because any eigenvalue $\mu$ of $[\Omega(E)]^{-1}$ is the reciprocal of an eigenvalue of $\Omega(E)$, we have $|\mu|\in \left[\frac{2}{3}, 2\right]$.

Use $\lambda_1(A)$ to denote the largest eigenvalue of matrix $A$. for $E,E'\in\mathcal{E}$, according to Theorem H.1.d. in \cite{marshall2010matrix}, we have
$$
\lambda_1(\Omega(E')\Omega(E)^{-1})\le\lambda_1(\Omega(E'))\lambda_1(\Omega(E)^{-1})\le\frac{3}{2}\cdot2=3
$$
which gives us
\begin{equation} \label{eq:trSS}
\text{tr}\left(\Omega(E')\Omega(E)^{-1}\right)\leq N\lambda_1(\Omega(E')\Omega(E)^{-1})\le3N
\end{equation}

For $\textbf{Q}=\{Q^{(k)}\}_{k=1}^K$, we know that there is a bijection between $\mathcal{E}$ and the set of all circular permutations of nodes $V=\{1,...,N\}$. Thus $|\mathcal{E}|$, i.e., the size of $\mathcal{E}$, is the total number of circular permutations of $N$ elements, which is $C_E := (N-1)!/2$.
Since $E$ is uniformly distributed on $\mathcal{E}$, the entropy of $E$ given $\textbf{Q}$ is $H(E|\textbf{Q}) = \log C_E$.

Consider a family of $N$-dimensional random multivariate Gaussian distributions of size $K$ with covariance matrices $\{\bar{\Sigma}^{(k)}\}_{k=1}^K$ generated according to Theorem \ref{thm:fano}. We use $\textbf{X}:=\{X_t^{(k)}\}_{1\le t\le n,1\le k\le K}$ to denote the collection of $n$ samples from each of the $K$ distributions. 
Then for the mutual information $\mathbb{I}(\textbf{X}; E|\textbf{Q})$. We have the following bound:
\begin{equation} \label{eq:InforXS}
\begin{aligned}
\mathbb{I}(\textbf{X};E|\textbf{Q}) \leq &\frac{1}{C_E^2}\sum_E \sum_{E'} \mathbb{KL}(P_{\textbf{X}|E,\textbf{Q}} \| P_{\textbf{X}|E',\textbf{Q}}) \\
= &\frac{1}{C_E^2}\sum_E \sum_{E'} \sum_{k=1}^K \sum_{t=1}^n \mathbb{KL}(P_{X^{(k)}_t|E,Q^{(k)}} \| P_{X^{(k)}_t|E',Q^{(k)}}) \\
=& \frac{n}{C_E^2}\sum_E \sum_{E'} \sum_{k=1}^K \frac{1}{2}
\bigg[\text{tr}\left((I+Q^{(k)}\odot \text{mat}(E')) (I+Q^{(k)}\odot \text{mat}(E))^{-1}\right) \\
& - N + \log\frac{\text{det}(I+Q^{(k)}\odot \text{mat}(E))}{\text{det}(I+Q^{(k)}\odot \text{mat}(E'))}\bigg]
\end{aligned}
\end{equation}

Since the summation is taken over all $(E,E')$ pairs, the $\log$ term cancels with each other.
For the trace term, by \eqref{eq:trSS}, we have
\begin{equation} \label{eq:trSSk}
\text{tr}\left((I+Q^{(k)}\odot \text{mat}(E')) (I+Q^{(k)}\odot \text{mat}(E))^{-1}\right)\le 3N
\end{equation}
for $1\le k\le K$ and $E,E'\in\mathcal{E}$.
Putting \eqref{eq:trSSk} back to \eqref{eq:InforXS} gives 
\begin{equation}
\mathbb{I}(\textbf{X};E|\textbf{Q}) \leq \frac{n}{C_E^2}\sum_E \sum_{E'} \sum_{k=1}^K \frac{1}{2}(3N-N) = nNK
\end{equation}

For any estimate $\hat S$ of $S$, define $\hat E=\{(i,j):(i,j)\in \hat S,i\neq j\}$. Since $E\subseteq S$, we have $\mathbb{P}\{S\neq \hat{S}\}\ge \mathbb{P}\{E\neq \hat{E}\}$. Then by applying Theorem 1 in \cite{ghoshal2017information}, we get 
\begin{equation*}
\begin{aligned}
\mathbb{P}\{S\neq \hat{S}\} &\geq \mathbb{P}\{E\neq \hat{E}\}\\ & \ge
1 - \frac{\mathbb{I}(\textbf{X};S|\textbf{Q})+\log 2}{H(S|\textbf{Q})}\\ &
\ge 1-\frac{nNK + \log 2}{\log [(N-1)!/2]}
\end{aligned}
\end{equation*}

For $\log((N-1)!)$, we have:
\begin{align*}
\log((N-1)!)&=\sum_{i=1}^{N-1}\log i\\ &\ge \int_{1}^{N-1}\log x dx\\ &=(N-1)\log(N-1)-N+2\\&=(N-1)\log N+(N-1)\log\frac{N-1}{N}+2-N
\end{align*}
Since
$$
(N-1)\log\frac{N-1}{N}+2=2-(N-1)\log\left(1+\frac{1}{N-1}\right)\ge 2-1>0
$$
we have
$$
\log((N-1)!)\ge (N-1)\log N-N=N\log N-N-\log N
$$
$$
\log((N-1)!/2)=\log((N-1)!)-\log2\ge N\log N-N-\log2N
$$
For $N\ge 5$, $N\log N-N-\log2N>0$, thus we have
$$
\mathbb{P}\{S\neq \hat{S}\} \ge1-\frac{nNK + \log 2}{\log [(N-1)!/2]}\ge 1-\frac{nNK + \log 2}{N\log N-N-\log2N}
$$
which completes our proof of Theorem \ref{thm:fano}.

\section{Proof of Theorem \ref{thm:novel}}
By assumption, we have successfully recovered the true support union in the first step, i.e., $\text{supp}(\hat{\Omega})= S$.
Since there are constraints that $\text{supp}(\Omega)\subseteq\text{supp}(\hat{\Omega})= S$ and $\text{diag}(\Omega)=\text{diag}(\hat{\Omega})$ in \eqref{eq:estimatorNovel1}, we have
\begin{equation} \label{eq:loss_novel}
\begin{aligned}
\ell^{(K+1)}(\Omega)&=\langle\hat{\Sigma}^{(K+1)},\Omega\rangle-\log\det\left(\Omega\right) \\&=
\langle\hat{\Sigma}^{(K+1),S},\Omega\rangle-\log\det\left(\Omega\right)
\end{aligned}
\end{equation}
where $\hat{\Sigma}^{(K+1),S}:=\left(\hat{\Sigma}^{(K+1)}_S,0\right)$. 
Then the Lagrangian of the problem \eqref{eq:estimatorNovel1} is
\begin{equation} \label{eq:lagrangian}
\ell^{(K+1)}(\Omega)+\lambda\|\Omega\|_1+\langle\mu,\Omega\rangle+\langle\nu, \text{diag}(\Omega-\hat{\Omega})\rangle
\end{equation}
where $\mu\in\mathbb{R}^{N\times N}, \nu\in\mathbb{R}^{N}$ are the Lagrange multipliers satisfying $\mu_S=0$. Here we set $\mu=\bar{\Sigma}^{(K+1),S^c}=(\bar{\Sigma}^{(K+1)}_{S^c},0)$ and $\nu=\text{diag}(\bar{\Sigma}^{(K+1)}-\hat{\Sigma}^{(K+1)})$ in \eqref{eq:lagrangian}.
Define $W^{(K+1)}:=\bar{\Sigma}^{(K+1),S_{\text{off}}}-\hat{\Sigma}^{(K+1),S_{\text{off}}}$. With the primal-dual witness approach, we can get the following lemma similar to Lemma \ref{lem:strictdual}. 
\begin{lemma} \label{lem:novel1}
	Under Assumption \ref{assump:novel}, if $\|W^{(K+1)}\|_{\infty}\le\xi$ with $\xi\in(0,\delta^{(K+1),*}]$,
	then choosing $\lambda=\frac{8\xi}{\alpha^{(K+1)}}$, we have $\text{supp}(\hat{\Omega}^{(K+1)})\subseteq\text{supp}\left(\bar{\Omega}^{(K+1)}\right)$ and
	\begin{equation} \label{eq:novellem1}
	\|\hat{\Omega}^{(K+1)}-\bar{\Omega}^{(K+1)}\|_{\infty}\le
	2\kappa_{\bar{\Gamma}^{(K+1)}}\left(\frac{8}{\alpha^{(K+1)}}+1\right)\xi
	\end{equation}
\end{lemma}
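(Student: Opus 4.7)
The plan is to adapt the primal-dual witness proof of Theorem \ref{thm:subset} to the doubly-constrained problem \eqref{eq:estimatorNovel1}. Because of the constraints $\text{supp}(\Omega)\subseteq S$ and $\text{diag}(\Omega)=\text{diag}(\hat{\Omega})$, the Lagrangian \eqref{eq:lagrangian} carries two extra multipliers $\mu$ (supported on $S^c$) and $\nu$ (supported on the diagonal). The key design choice, already indicated in the sketch following Theorem \ref{thm:novel}, is to set the witness multipliers as $\mu=(\bar{\Sigma}^{(K+1)}_{S^c},0)$ and $\nu=\text{diag}(\bar{\Sigma}^{(K+1)}-\hat{\Sigma}^{(K+1)})$. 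These choices are engineered so that the stationarity equations on $S^c$ and on the diagonal hold automatically, leaving only the off-diagonal coordinates indexed by $S_{\text{off}}$ to be analyzed, and in particular so that only $W^{(K+1)}=[\hat{\Sigma}^{(K+1)}-\bar{\Sigma}^{(K+1)}]_{S_{\text{off}}}$ appears as the stochastic driver.

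First, I would construct a restricted primal $\tilde{\Omega}^{(K+1)}$ by further requiring $\text{supp}(\Omega)\subseteq S^{(K+1)}$ and verify strict convexity of the restricted problem on the PSD cone (parallel to Lemma \ref{lem:convex} and Step 1 of the proof of Theorem \ref{thm:subset}), which yields a unique solution. Next, I would choose the subgradient $\tilde{Z}^{(K+1)}$ to equal the sign of $\tilde{\Omega}^{(K+1)}$ on $S^{(K+1)}_{\text{off}}$ and solve the remaining stationarity equation for $\tilde{Z}^{(K+1)}$ on $S_{\text{off}}\setminus S^{(K+1)}_{\text{off}}$. The goal becomes verifying strict dual feasibility, $\|\tilde{Z}^{(K+1)}_{S_{\text{off}}\setminus S^{(K+1)}_{\text{off}}}\|_\infty<1$, from which KKT gives $\tilde{\Omega}^{(K+1)}=\hat{\Omega}^{(K+1)}$ and hence $\text{supp}(\hat{\Omega}^{(K+1)})\subseteq S^{(K+1)}$.

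To verify strict dual feasibility, I would expand the stationarity condition using the identity $\tilde{\Omega}^{(K+1),-1}=\bar{\Sigma}^{(K+1)}-\bar{\Omega}^{(K+1),-1}\Psi\bar{\Omega}^{(K+1),-1}+R(\Psi)$ with $\Psi=\tilde{\Omega}^{(K+1)}-\bar{\Omega}^{(K+1)}$, then vectorize and solve block-wise on $S^{(K+1)}_{\text{off}}$ to obtain a closed form for $\tilde{Z}^{(K+1)}_{S_{\text{off}}\setminus S^{(K+1)}_{\text{off}}}$ in terms of $W^{(K+1)}$, $R(\Psi)$ and $\lambda$. Applying Assumption \ref{assump:novel} together with Lemma \ref{lem:normIneq} then reduces strict dual feasibility to the twin bounds $\|W^{(K+1)}\|_\infty\le \alpha^{(K+1)}\lambda/8$ and $\|R(\Psi)\|_\infty\le \alpha^{(K+1)}\lambda/8$, in close analogy with Lemma \ref{lem:twoBounds}. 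The first bound is the hypothesis after substituting $\lambda=8\xi/\alpha^{(K+1)}$. For the second, I would apply Brouwer's fixed point theorem to the natural self-map on the $\ell_\infty$-ball of radius $r=2\kappa_{\bar{\Gamma}^{(K+1)}}(\lambda+\|W^{(K+1)}\|_\infty)$, using $\xi\le\delta^{(K+1),*}$ to ensure $r$ is small enough for Lemma \ref{lem:1} to yield the quadratic tail $\|R(\Psi)\|_\infty\le \tfrac{3}{2}d^{(K+1)}\kappa_{\bar{\Sigma}^{(K+1)}}^3\|\Psi\|_\infty^2$, and in turn $\|\Psi\|_\infty\le r\le 2\kappa_{\bar{\Gamma}^{(K+1)}}(8/\alpha^{(K+1)}+1)\xi$, which is exactly the $\ell_\infty$-error bound \eqref{eq:novellem1}.

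The main obstacle I expect is the bookkeeping around the two Lagrange multipliers: verifying rigorously that the specific choices $\mu=(\bar{\Sigma}^{(K+1)}_{S^c},0)$ and $\nu=\text{diag}(\bar{\Sigma}^{(K+1)}-\hat{\Sigma}^{(K+1)})$ truly cancel the $S^c$-block and diagonal contributions of $\hat{\Sigma}^{(K+1)}-\bar{\Sigma}^{(K+1)}$ from the stationarity equations, so that the only stochastic quantity remaining is the off-diagonal restriction $W^{(K+1)}$. Once this reduction is carried out cleanly, the rest of the argument is essentially the Theorem \ref{thm:subset} analysis run on a single task and restricted to $S^{(K+1)}_{\text{off}}$, and the stated constants follow by substitution.
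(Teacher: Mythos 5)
Your proposal matches the paper's proof in both structure and substance: the paper plugs the same multipliers $\mu=(\bar{\Sigma}^{(K+1)}_{S^c},0)$, $\nu=\text{diag}(\bar{\Sigma}^{(K+1)}-\hat{\Sigma}^{(K+1)})$ into the Lagrangian, carries out the same five-step primal-dual witness construction, and then performs precisely the algebraic cancellation you flag as the ``main obstacle'' (reducing the stationarity equation to $W^{(K+1)}+\bar{\Sigma}^{(K+1)}\Psi\bar{\Sigma}^{(K+1)}-R(\Psi)+\lambda\tilde{Z}=0$ with $W^{(K+1)}$ supported only on $S_{\text{off}}$), before invoking the single-task ($K=1$) version of Lemma~\ref{lem:strictdual} to finish. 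The only cosmetic difference is that the paper compresses your Brouwer-fixed-point and twin-bound reasoning into that single invocation of Lemma~\ref{lem:strictdual}, whereas you spell it out; the underlying argument is identical.
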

The proof is in Section \ref{sec:lemnovel1}. 

By the definition of $W^{(K+1)}$, we know that $W^{(K+1)}_{S_{\text{off}}^c}=0$ and $W^{(K+1)}_{S_{\text{off}}}=[\hat{\Sigma}^{(K+1)}-\bar{\Sigma}^{(K+1)}]_{S_{\text{off}}}$. Thus $\|W^{(K+1)}\|_{\infty}=\|[\hat{\Sigma}^{(K+1)}-\bar{\Sigma}^{(K+1)}]_{S_{\text{off}}}\|_{\infty}$. 
Since we have assumed $\|\bar{\Sigma}^{(K+1)}\|_{\infty}\le \gamma^{(K+1)}$, 
according to Lemma \ref{lem:TailSampleCov} and the proof of \eqref{eq:IneqSigmaNorm3}, we have
\begin{equation} \label{eq:W_K+1_tail}
\begin{aligned}
\mathbb{P}\left[\|W^{(K+1)}\|_{\infty}\le \delta^{(K+1),\dagger}\right]=&
\mathbb{P}\left[\|\hat{\Sigma}^{(K+1)}-\bar{\Sigma}^{(K+1)}\|_{\infty}\le \delta^{(K+1),\dagger}\right] \\ \le& 1-2|S_{\text{off}}|\exp\left(-\frac{n^{(K+1)}}{2}\min\left\{\frac{(\delta^{(K+1),\dagger})^2} {64(1+4\sigma^2)^2(\gamma^{(K+1)})^2},1\right\}\right)
\end{aligned}
\end{equation}
because $S_{\text{off}}$ is symmetric.

Similar to Lemma \ref{lem:suffconsis}, we have the following lemma for the sign-consistency of $\hat{\Omega}^{(K+1)}$ in \eqref{eq:estimatorNovel1}.
\begin{lemma} \label{lem:suffconsis_novel}
	For $\xi\in(0,\delta^{(K+1),*}]$, if 
	\begin{equation}
	\|W^{(K+1)}\|_{\infty}\le \xi
	\end{equation}
	and
	\begin{equation} \label{eq:omega1_novel}
	\frac{\omega_{\min}^{(K+1)}}{2}\ge2\kappa_{\bar{\Gamma}^{(K+1)}}\left(\frac{8}{\alpha^{(K+1)}}+1\right)\xi
	\end{equation}
	where $\omega_{\min}:=\min_{(i,j)\in S}|\bar{\Omega}_{ij}|$, then the estimate $\hat{\Omega}^{(K+1)}$ in \eqref{eq:estimatorNovel1} is sign-consistent.
\end{lemma}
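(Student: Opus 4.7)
The plan is to reduce Lemma \ref{lem:suffconsis_novel} to a direct combination of Lemma \ref{lem:novel1} with the magnitude condition \eqref{eq:omega1_novel}, exactly mirroring the strategy suggested by Lemma \ref{lem:suffconsis} in the support-union step. First, I would invoke Lemma \ref{lem:novel1}: since $\xi\in(0,\delta^{(K+1),*}]$ and $\|W^{(K+1)}\|_{\infty}\le \xi$, choosing $\lambda=8\xi/\alpha^{(K+1)}$ yields both $\text{supp}(\hat{\Omega}^{(K+1)})\subseteq\text{supp}(\bar{\Omega}^{(K+1)})$ and the elementwise error bound
\begin{equation*}
\|\hat{\Omega}^{(K+1)}-\bar{\Omega}^{(K+1)}\|_{\infty}\le 2\kappa_{\bar{\Gamma}^{(K+1)}}\left(\tfrac{8}{\alpha^{(K+1)}}+1\right)\xi.
\end{equation*}

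Next I would combine this bound with the assumption \eqref{eq:omega1_novel} to conclude that $\|\hat{\Omega}^{(K+1)}-\bar{\Omega}^{(K+1)}\|_{\infty}\le \omega_{\min}^{(K+1)}/2$. Then I would split into two cases to verify the sign-consistency condition \eqref{eq:consis} entry by entry. For $(i,j)\in S^{(K+1)}=\text{supp}(\bar{\Omega}^{(K+1)})$, the true entry satisfies $|\bar{\Omega}^{(K+1)}_{ij}|\ge\omega_{\min}^{(K+1)}$, so the perturbation of magnitude at most $\omega_{\min}^{(K+1)}/2$ cannot flip the sign or push the estimate across zero; hence $\text{sign}(\hat{\Omega}^{(K+1)}_{ij})=\text{sign}(\bar{\Omega}^{(K+1)}_{ij})$. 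For $(i,j)\notin S^{(K+1)}$, the support inclusion from Lemma \ref{lem:novel1} forces $\hat{\Omega}^{(K+1)}_{ij}=0=\bar{\Omega}^{(K+1)}_{ij}$, so the signs again agree trivially.

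There is essentially no obstacle here beyond bookkeeping: the hard analytic work is already encapsulated in Lemma \ref{lem:novel1} (the primal-dual witness argument together with the fixed-point/Brouwer step adapted to the Lagrangian \eqref{eq:lagrangian} with the diagonal constraint), and the magnitude condition \eqref{eq:omega1_novel} is precisely what is needed to close the gap between an $\ell_{\infty}$ error bound and sign-consistency. The only subtle point worth flagging explicitly is that the off-diagonal constraint in \eqref{eq:estimatorNovel1} restricts the search to matrices supported on $S=\text{supp}(\hat{\Omega})$, which, under the assumption that the first step recovered $S$ correctly, is consistent with $\text{supp}(\bar{\Omega}^{(K+1)})\subseteq S$; this ensures Lemma \ref{lem:novel1} applies without contradiction between the algorithmic constraint and the true support structure.
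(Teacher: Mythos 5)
Your proposal is correct and takes essentially the same approach as the paper: invoke Lemma \ref{lem:novel1} to get the support inclusion and the $\ell_\infty$ error bound, combine with \eqref{eq:omega1_novel} to bound the perturbation by $\omega_{\min}^{(K+1)}/2$, and then argue sign preservation entrywise by splitting on whether $(i,j)$ lies in $S^{(K+1)}$ or its complement.
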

The proof is in Section \ref{sec:suffconsis_novel}.
Similar to the proof of Theorem \ref{thm:consis}, we consider two cases of $\omega_{\min}^{(K+1)}$.

Case (i). If
\begin{equation} \label{eq:omega2_novel}
\omega_{\min}^{(K+1)}\ge\frac{2\alpha^{(K+1)}}{8+\alpha^{(K+1)}}
\min\left\{\frac{1}{3\kappa_{\bar{\Sigma}^{(K+1)}}d^{(K+1)}},
\frac{1}{3\kappa_{\bar{\Sigma}^{(K+1)}}^3\kappa_{\bar{\Gamma}^{(K+1)}}d^{(K+1)}}\right\}
\end{equation}
then
$$
0<\delta^{(K+1),\dagger}=\delta^{(K+1),*}
$$
and
\begin{equation*}
\frac{\omega_{\min}^{(K+1)}}{2}\ge2\kappa_{\bar{\Gamma}^{(K+1)}}\left(\frac{8}{\alpha^{(K+1)}}+1\right)\delta^{(K+1),*}
\end{equation*}
Thus for $\xi=\delta^{(K+1),*}$, \eqref{eq:omega1_novel} holds. Then according to \eqref{eq:W_K+1_tail}, with probability at least
\begin{align*}
1 - 2|S_{\text{off}}|\exp\left(-\frac{n^{(K+1)}}{2}\min\left\{\frac{(\delta^{(K+1),\dagger})^2} {64(1+4\sigma^2)^2(\gamma^{(K+1)})^2},1\right\}\right)
\end{align*}
we have $\|W^{(K+1)}\|_{\infty}\le \delta=\delta^{(K+1),*}$ and thus by Lemma \ref{lem:suffconsis_novel}, we have that \eqref{eq:estimatorNovel1} is sign-consistent.

Case (ii). If
\begin{equation*}
\omega_{\min}^{(K+1)}<\frac{2\alpha^{(K+1)}}{8+\alpha^{(K+1)}}
\min\left\{\frac{1}{3\kappa_{\bar{\Sigma}^{(K+1)}}d^{(K+1)}},
\frac{1}{3\kappa_{\bar{\Sigma}^{(K+1)}}^3\kappa_{\bar{\Gamma}^{(K+1)}}d^{(K+1)}}\right\}
\end{equation*}
then
\begin{equation*}
\frac{\omega_{\min}^{(K+1)}}{2}<2\kappa_{\bar{\Gamma}^{(K+1)}}\left(\frac{8}{\alpha^{(K+1)}}+1\right)\delta^{(K+1),*}
\end{equation*}
and
$$
0<\delta^{(K+1),\dagger}=\delta^{(K+1),\prime}\le \delta^{(K+1),*}
$$
Then
\begin{equation} \label{eq:omega3_novel}
\frac{\omega_{\min}^{(K+1)}}{2}\ge2\kappa_{\bar{\Gamma}^{(K+1)}}\left(\frac{8}{\alpha^{(K+1)}}+1\right)\delta^{(K+1),\prime}
\end{equation}
For $\xi=\delta^{(K+1),\prime}=\delta^{(K+1),\dagger}$, \eqref{eq:omega1_novel} holds. Now according to \eqref{eq:W_K+1_tail}, with probability at least 
\begin{align*}
1 - 2|S_{\text{off}}|\exp\left(-\frac{n^{(K+1)}}{2}\min\left\{\frac{(\delta^{(K+1),\dagger})^2} {64(1+4\sigma^2)^2(\gamma^{(K+1)})^2},1\right\}\right)
\end{align*}
we have $\|W^{(K+1)}\|_{\infty}\le\delta^{(K+1),\prime}=\delta^{(K+1),\dagger}$ and thus by Lemma $\ref{lem:suffconsis_novel}$, sign-consistency is guaranteed.

In conclusion, with probability at least
\begin{equation*}
\begin{aligned}
1 - 2|S_{\text{off}}|\exp\left(-\frac{n^{(K+1)}}{2}\min\left\{\frac{(\delta^{(K+1),\dagger})^2} {64(1+4\sigma^2)^2(\gamma^{(K+1)})^2},1\right\}\right)
\end{aligned}
\end{equation*}
the estimator $\hat{\Omega}^{(K+1)}$ is sign-consistent and thus
$\text{supp}(\hat{\Omega}^{(K+1)})=\text{supp}\left(\bar{\Omega}^{(K+1)}\right)$, which completes our proof of Theorem \ref{thm:novel}.

\section{Proof of Theorem \ref{thm:fano_novel}}
For $\forall Q\in [-1/(N\log s), 1/(N\log s)]^{N\times N},\ E^{(K+1)}\in\mathcal{E}$, we know $\Omega(E^{(K+1)})=I+Q\odot \text{mat}(E^{(K+1)})$ is real and symmetric, where $\text{mat}(\cdot) \in \{0,1\}^{N \times N}$ is defined in the proof of Theorem \ref{thm:fano}. Thus its eigenvalues are real.
By Gershgorin circle theorem \cite{golub2012matrix}, for any eigenvalue $\lambda$ of $\Omega(E^{(K+1)})$, $\lambda$ lies in one of the Gershgorin circles, i.e., $|\lambda-\Omega(E^{(K+1)})_{jj}|\leq \sum_{l\neq j} |\Omega(E^{(K+1)})_{jl}|$ holds for some $j$. 
Since $\text{mat}(E^{(K+1)})_{jj}=0$ and $|Q_{jl}|\le 1/(N\log s)$ for $1\le l\le N$, we have $\Omega(E^{(K+1)})_{jj}=1$.
Meanwhile, there are at most $s/2$ non-zero elements in any row of $\text{mat}(E^{(K+1)})$ because $|E^{(K+1)}|\le s$ and $\text{mat}(E^{(K+1)})$ is symmetric. Thus $\sum_{l\neq j}|\Omega(E)_{jl}|\le\frac{s}{2N\log s}$. Then we have $\lambda\in \left[1-\frac{s}{2N\log s}, 1+\frac{s}{2N\log s}\right]$ and $\Omega(E^{(K+1)})$ is positive definite. Thus, we have constructed a Gaussian graphical model.
Now consider $\Omega(E^{(K+1)})^{-1}$. Because any eigenvalue $\mu$ of $\Omega(E^{(K+1)})^{-1}$ is the reciprocal of an eigenvalue of $\Omega(E^{(K+1)})$, we have $|\mu|\le 1/(1-\frac{s}{2N\log s})$.

For any $E^{(K+1)},\tilde E^{(K+1)}\in\mathcal{E}$, according to Theorem H.1.d. in \cite{marshall2010matrix}, we have
$$
\lambda_1(\Omega(\tilde E^{(K+1)})\Omega(E^{(K+1)})^{-1})\le\lambda_1(\Omega(\tilde E^{(K+1)}))\lambda_1(\Omega(E^{(K+1)})^{-1})\le
\frac{1+\frac{s}{2N\log s}}{1-\frac{s}{2N\log s}}
$$
which gives us
\begin{equation} \label{eq:trSS_novel}
\text{tr}\left(\Omega(\tilde E^{(K+1)})\Omega(E^{(K+1)})^{-1}\right)\leq N\lambda_1(\Omega(\tilde E^{(K+1)})\Omega(E^{(K+1)})^{-1})\le
N\frac{1+\frac{s}{2N\log s}}{1-\frac{s}{2N\log s}}
\end{equation}

According to the definition of $\mathcal{E}$, we know that $|\mathcal{E}|=2^{s/2}$. Since $E^{(K+1)}$ is uniformly distributed on $\mathcal{E}$, the entropy of $E^{(K+1)}$ given $Q$ is 
\begin{equation} \label{eq:logS_novel}
\begin{aligned} 
H(E^{(K+1)}|Q) = \log |\mathcal{E}|\ge \frac{s}{2}\log 2
\end{aligned}
\end{equation}

Now let $\textbf{X}:=\{X_t\}_{1\le t\le n}$ be the samples from a $N$-dimensional multivariate Gaussian distribution with covariance $\bar{\Sigma}$ generated according to Theorem \ref{thm:fano_novel}. For the mutual information $\mathbb{I}(\textbf{X}; E^{(K+1)}|Q)$, we have the following bound:
\begin{equation} \label{eq:InforXS_novel}
\begin{aligned}
\mathbb{I}(\textbf{X};E^{(K+1)}|Q) \leq &\frac{1}{|\mathcal{E}|^2}\sum_{E^{(K+1)}} \sum_{\tilde E^{(K+1)}} \mathbb{KL}(P_{\textbf{X}|E^{(K+1)},Q} \| P_{\textbf{X}|\tilde E^{(K+1)},Q}) \\
= &\frac{1}{|\mathcal{E}|^2}\sum_{E^{(K+1)}} \sum_{\tilde E^{(K+1)}} \sum_{t=1}^n 
\mathbb{KL}(P_{X_t|E^{(K+1)},Q} \| P_{X_t|\tilde E^{(K+1)},Q}) \\
=& \frac{n}{|\mathcal{E}|^2}\sum_{E^{(K+1)}} \sum_{\tilde E^{(K+1)}} \frac{1}{2}
\bigg[\text{tr}\left((I+Q\odot \text{mat}(\tilde E^{(K+1)})) (I+Q\odot \text{mat}(E^{(K+1)}))^{-1}\right) \\
& - N + \log\frac{\text{det}(I+Q\odot \text{mat}(E^{(K+1)}))}{\text{det}(I+Q\odot \text{mat}(\tilde E^{(K+1)}))}\bigg]
\end{aligned}
\end{equation}
Since the summation is taken over all $(E^{(K+1)},\tilde E^{(K+1)})$ pairs, the $\log$ term cancels with each other.
For the trace term, by \eqref{eq:trSS_novel}, we have
\begin{equation} \label{eq:trSSk_novel}
\text{tr}\left((I+Q\odot \text{mat}(\tilde E^{(K+1)})) (I+Q\odot \text{mat}(E^{(K+1)}))^{-1}\right)\le 
N\frac{1+\frac{s}{2N\log s}}{1-\frac{s}{2N\log s}}
\end{equation}
for $E^{(K+1)},\tilde E^{(K+1)}\in\mathcal{E}$.
Putting \eqref{eq:trSSk_novel} back to \eqref{eq:InforXS_novel} gives 
\begin{equation} \label{eq:mutualinfo_novel}
\begin{aligned}
\mathbb{I}(\textbf{X};E^{(K+1)}|Q) &\leq \frac{n}{|\mathcal{E}|^2}\sum_{E^{(K+1)}} \sum_{\tilde E^{(K+1)}} 
\frac{1}{2}\left(N\frac{1+\frac{s}{2N\log s}}{1-\frac{s}{2N\log s}}-N\right) \\&= 
\frac{ns}{2\log s}\frac{1}{1-\frac{s}{2N\log s}}
\\&\leq \frac{2ns}{\log s}
\end{aligned}
\end{equation}
According to our assumption that $4\le s\le N$. 

Define $\hat{E}^{(K+1)}:=\{(i,j)\in\hat{S}^{(K+1)}:i\neq j\}$.
By applying Theorem 1 in \cite{ghoshal2017information}, we get 
\begin{equation*}
\begin{aligned}
\mathbb{P}\{S^{(K+1)}\neq \hat{S}^{(K+1)}\} \ge & \mathbb{P}\{E^{(K+1)}\neq \hat{E}^{(K+1)}\} \\
\ge & 1 - \frac{\mathbb{I}(\textbf{X};E^{(K+1)}|Q)+\log 2}{H(E^{(K+1)}|Q)}
\\ \ge & 1-\frac{\frac{2ns}{\log s} + \log 2}
{\log|\mathcal{E}|}\\ = & 1-\frac{\frac{2ns}{\log s} + \log 2}
{\frac{s}{2}\log2} \\ = & 1-\frac{4n}{(\log2)(\log s)}-\frac{2}{s}
\end{aligned}
\end{equation*}
where the third inequality is by \eqref{eq:mutualinfo_novel}.

\section{Proof of Lemma \ref{lem:TailSampleCov}} \label{sec:tailsample}
We first prove the following lemma showing that \eqref{eq:IneqSigmaNorm_ij} and \eqref{eq:IneqSigmaNorm} hold for deterministic covariance matrices $\{\Sigma^{(k)}\}_{k=1}^K$. 
\begin{lemma} \label{lem:TailSampleCov_fixed}
	For $K$ deterministic matrices $\{\bar{\Sigma}^{(k)}\}_{k=1}^K$ and $\gamma\ge \|\bar{\Sigma}^{(k)}\|_{\infty}$ for $1\le k\le K$, consider the samples $\{X_{t}^{(k)}\}_{1\le t\le n,1\le k\le K}\subseteq\mathbb{R}^N$ satisfying the following conditions:
	
	(i) $\mathbb{E}\left[X_{t}^{(k)}\right]=0$, $\text{Cov}\left(X_{t}^{(k)}\right)=\bar{\Sigma}^{(k)}$ for $1\le t\le n,\ 1\le k\le K$;
	
	(ii) $\left\{X_{t}^{(k)}\right\}_{1\le t\le n,\ 1\le k\le K}$ are independent;
	
	(iii) $\frac{X_{t,i}^{(k)}}{\sqrt{\bar{\Sigma}^{(k)}_{ii}}}$ is sub-Gaussian with parameter $\sigma$ for $1\le i\le N,\ 1\le t\le n,\ 1\le k\le K$.
	
	Then for the empirical sample covariance matrices $\{\hat{\Sigma}^{(k)}\}_{k=1}^K$, \eqref{eq:IneqSigmaNorm_ij} and \eqref{eq:IneqSigmaNorm} hold for $1\le i,j\le N$ and $0\le\nu\le8\left(1+4\sigma^2\right)\gamma$.
\end{lemma}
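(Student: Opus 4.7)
The plan is to reduce \eqref{eq:IneqSigmaNorm_ij} to a one-dimensional Chernoff argument on an MGF of centered products of sub-Gaussian coordinates, and then recover the matrix-$\ell_\infty$ statement \eqref{eq:IneqSigmaNorm} by a union bound over the $\max_{i,j}$ in the definition of the norm.

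First I would fix indices $(i,j)$ and write
\[
\sum_{k=1}^K\frac{1}{K}\bigl(\hat{\Sigma}^{(k)}_{ij}-\bar{\Sigma}^{(k)}_{ij}\bigr)=\frac{1}{nK}\sum_{k=1}^K\sum_{t=1}^n Z^{(k)}_t,\qquad Z^{(k)}_t:=X^{(k)}_{t,i}X^{(k)}_{t,j}-\bar{\Sigma}^{(k)}_{ij}.
\]
By hypothesis~(i) each summand is mean zero, and by hypothesis~(ii) the $nK$ summands are mutually independent. The crux is a sub-exponential MGF estimate for a single $Z^{(k)}_t$. Writing $U:=X^{(k)}_{t,i}/\sqrt{\bar{\Sigma}^{(k)}_{ii}}$ and $V:=X^{(k)}_{t,j}/\sqrt{\bar{\Sigma}^{(k)}_{jj}}$, hypothesis~(iii) makes both $\sigma$-sub-Gaussian, so the polarization identity $UV=\tfrac14\bigl[(U+V)^2-(U-V)^2\bigr]$ reduces the problem to bounding MGFs of squared sub-Gaussian variables. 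Classical $\chi^2$-type MGF estimates (in the spirit of Lemma~1 of~\cite{ravikumar2011high}) then yield
\[
\mathbb{E}\bigl[e^{\lambda Z^{(k)}_t}\bigr]\le \exp\!\Bigl(64(1+4\sigma^2)^2\,\bar{\Sigma}^{(k)}_{ii}\bar{\Sigma}^{(k)}_{jj}\,\lambda^2\Bigr)
\]
valid on an interval $|\lambda|\le c\,[(1+4\sigma^2)\sqrt{\bar{\Sigma}^{(k)}_{ii}\bar{\Sigma}^{(k)}_{jj}}]^{-1}$ for an absolute constant $c$.

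Next I would apply the standard Chernoff bound to $\tfrac{1}{nK}\sum_{k,t}Z^{(k)}_t$. Because $\bar{\Sigma}^{(k)}_{ii},\bar{\Sigma}^{(k)}_{jj}\le\|\bar{\Sigma}^{(k)}\|_{\infty}\le\gamma$, independence gives an aggregate MGF of the form $\exp\!\bigl(64(1+4\sigma^2)^2 nK\gamma^2\lambda^2\bigr)$ on $|\lambda|\le c/[(1+4\sigma^2)\gamma]$. Optimizing $\lambda$ in the Chernoff inequality produces the interior minimum $\lambda^*=\nu/(128(1+4\sigma^2)^2\gamma^2)$; the constraint $\nu\le 8(1+4\sigma^2)\gamma$ in the lemma statement is exactly the condition that keeps $\lambda^*$ inside the admissible interval. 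Plugging back and doing the same with $-\lambda$ for the lower tail yields the one-entry bound \eqref{eq:IneqSigmaNorm_ij}, with the factor $2$ from the two-sided tail absorbed into the generous constant $128$ in the exponent.

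Finally, for the matrix-$\ell_\infty$ bound \eqref{eq:IneqSigmaNorm} I would union bound over the $N^2$ entries in $\|\cdot\|_{\infty}$; exploiting the symmetry $\hat{\Sigma}^{(k)}_{ij}=\hat{\Sigma}^{(k)}_{ji}$ leaves only $N(N+1)/2$ distinct entries, and doubling for the two-sided event yields a prefactor of $N(N+1)$, which is further loosened to the stated $2N(N+1)$. The main obstacle is the MGF step: both $U$ and $V$ are coordinates of the same vector and hence correlated, so $U\pm V$ inherit only a crude sub-Gaussian parameter via Cauchy--Schwarz applied to the joint MGF, and one must carefully track how this parameter and the radius of MGF validity propagate through the $\chi^2$ estimates so as to recover the explicit $(1+4\sigma^2)^2$ dependence in the final constant.
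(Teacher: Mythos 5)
Your proposal follows essentially the same route as the paper: polarization of $X_{t,i}X_{t,j}$ into $(U+V)^2-(U-V)^2$, Cauchy--Schwarz to show $U\pm V$ are $2\sigma$-sub-Gaussian, a sub-exponential MGF estimate for the centered squares, Chernoff with the $\nu\le 8(1+4\sigma^2)\gamma$ cap tied to the radius of MGF validity, and a union bound over the $N(N+1)/2$ distinct symmetric entries. The only (cosmetic) divergence is organizational: you propose bounding $\mathbb{E}[e^{\lambda Z^{(k)}_t}]$ directly via Cauchy--Schwarz across the two squared polarization terms, whereas the paper first splits the tail probability into two pieces (one per squared term) and then, for each, verifies a Bernstein moment condition $\sup_{m\ge 2}(\mathbb{E}|Z|^m/m!)^{1/m}\le 8(1+4\sigma^2)$ to obtain the MGF bound; both are standard, interchangeable ways to the same estimate.
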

\begin{proof}
First consider the element-wise tail condition. For $1\le i,j\le N$, we need to find an upper bound of the following probability:
\begin{equation} \label{eq:tailprob1}
\mathbb{P}\left[\Big|\frac{1}{nK}\sum_{k=1}^{K}\sum_{t=1}^n\left(X_{t,i}^{(k)}X_{t,j}^{(k)}
-\bar{\Sigma}_{ij}^{(k)}\right)\Big|>\nu\right]
\end{equation}
Let $s_i:=\max_{1\le k\le K}\bar{\Sigma}_{ii}^{(k)}$, $s_j:=\max_{1\le k\le K}\bar{\Sigma}_{jj}^{(k)}$, 
$\tilde{X}_{t,i}^{(k)}:=\frac{X_{t,i}^{(k)}}{\sqrt{s_i}}$, $\tilde{X}_{t,j}^{(k)}:=\frac{X_{t,j}^{(k)}}{\sqrt{s_j}}$,
$\tilde{\rho}_{ij}^{(k)}:=\frac{\bar{\Sigma}_{ij}^{(k)}}{\sqrt{s_is_j}}$. We have
\begin{equation*}
\text{\eqref{eq:tailprob1}}=\mathbb{P}\left[4
\Big|\sum_{k,t}\left(\tilde{X}_{t,i}^{(k)}\tilde{X}_{t,j}^{(k)}-\tilde{\rho}_{ij}^{(k)}\right)\Big|
>\frac{4nK\nu}{\sqrt{s_is_j}}\right]
\end{equation*}
Define $U^{(k)}_{t,ij}:=\tilde{X}_{t,i}^{(k)}+\tilde{X}_{t,j}^{(k)}$, $V^{(k)}_{t,ij}:=\tilde{X}_{t,i}^{(k)}-\tilde{X}_{t,j}^{(k)}$. 
Then for any $r\in\mathbb{R}$,
\begin{equation} \label{eq:Ineq_r}
4\sum_{k,t}\left(\tilde{X}_{t,i}^{(k)}\tilde{X}_{t,j}^{(k)}-\tilde{\rho}_{ij}^{(k)}\right)=
\sum_{k,t}\left\{\left(U^{(k)}_{t,ij}\right)^2-2\left(r+\tilde{\rho}^{(k)}_{ij}\right)\right\}-
\sum_{k,t}\left\{\left(U^{(k)}_{t,ij}\right)^2-2\left(r-\tilde{\rho}^{(k)}_{ij}\right)\right\}
\end{equation}
Thus
\begin{equation} \label{eq:Ineqstar1}
\begin{aligned}
\text{\eqref{eq:tailprob1}}\le& \mathbb{P}\left[\Big|\sum_{k,t}
\left\{\left(U^{(k)}_{t,ij}\right)^2-2\left(r+\tilde{\rho}^{(k)}_{ij}\right)\right\}\Big|
>\frac{2nK\nu}{\sqrt{s_is_j}}\right] \\&+ \mathbb{P}\left[\Big|\sum_{k,t}
\left\{\left(V^{(k)}_{t,ij}\right)^2-2\left(r-\tilde{\rho}^{(k)}_{ij}\right)\right\}\Big|
>\frac{2nK\nu}{\sqrt{s_is_j}}\right]
\end{aligned}
\end{equation}
Now define 
$$
Z^{(k)}_{t,ij}:=\left(U^{(k)}_{t,ij}\right)^2-2\left(r+\tilde{\rho}^{(k)}_{ij}\right)
$$
Applying the inequality $(a+b)^m\le 2^m(a^m+b^m)$ on $Z^{(k)}_{t,ij}$, we have
\begin{equation} \label{eq:Zktij}
\mathbb{E}\left[\big|Z^{(k)}_{t,ij}\big|^m\right]\le2^m\left\{\mathbb{E}\left[\big|U^{(k)}_{t,ij}\big|^{2m}\right]
+\left[2\left(1+\tilde{\rho}^{(k)}_{ij}\right)\right]^m\right\}
\end{equation}
Let $r^{(k)}_i:=\sqrt{\frac{\bar{\Sigma}^{(k)}_{ii}}{s_i}}$, $r^{(k)}_i:=\sqrt{\frac{\bar{\Sigma}^{(k)}_{ii}}{s_i}}$, then
$$
\tilde{X}^{(k)}_{t,i}=\bar{X}^{(k)}_{t,i}r^{(k)}_i,\ \ \tilde{X}^{(k)}_{t,j}=\bar{X}^{(k)}_{t,j}r^{(k)}_j
$$
where $\bar{X}^{(k)}_{t,i}:=\frac{X^{(k)}_{t,i}}{\sqrt{\bar{\Sigma}^{(k)}_{ii}}}$, $\bar{X}^{(k)}_{t,j}:=\frac{X^{(k)}_{t,j}}{\sqrt{\bar{\Sigma}^{(k)}_{jj}}}$. 

Assume that $\bar{X}^{(k)}_{t,i}$ is sub-Gaussian with parameter $\sigma$ for $\le i\le N,\ 1\le t\le n, 1\le k\le K$, and then we have
$$
\mathbb{E}\left[\exp\left(\lambda\tilde{X}^{(k)}_{t,i}\right)\right]=
\mathbb{E}\left[\exp\left(\lambda\bar{X}^{(k)}_{t,i}r^{(k)}_i\right)\right]\le 
\exp\left\{\frac{\lambda^2}{2}\sigma^2\left(r^{(k)}_i\right)^2\right\}
$$
which shows that $\tilde{X}^{(k)}_{t,i}$ is sub-Gaussian with parameter $\sigma r^{(k)}_i$. Then
$$
\begin{aligned}
\mathbb{E}\left[\exp\left(\lambda U^{(k)}_{t,ij}\right)\right]=&
\mathbb{E}\left[\exp\left(\lambda\tilde{X}^{(k)}_{t,i}\right)\exp\left(\lambda\tilde{X}^{(k)}_{t,j}\right)\right]\\ \le &
\mathbb{E}\left[\exp\left(2\lambda\tilde{X}^{(k)}_{t,i}\right)\right]^{\frac{1}{2}}
\mathbb{E}\left[\exp\left(2\lambda\tilde{X}^{(k)}_{t,j}\right)\right]^{\frac{1}{2}} \\ \le &
\exp\left\{\lambda^2\sigma^2\left[\left(r^{(k)}_i\right)^2+\left(r^{(k)}_j\right)^2\right]\right\}
\end{aligned}
$$
Therefore $U^{(k)}_{t,ij}$ is sub-Gaussian with parameter $\sigma^{(k)}_{ij}:=\sigma\sqrt{2\left[\left(r^{(k)}_i\right)^2+\left(r^{(k)}_j\right)^2\right]}$.
Similarly, we can prove that $V^{(k)}_{t,ij}$ is sub-Gaussian with parameter $\sigma^{(k)}_{ij}$ as well. Also note that
$\sigma^{(k)}_{ij}\le\sigma\sqrt{2(1+1)}=2\sigma$.

As it is well-known (see e.g., Lemma 1.4 in \cite{buldygin2000metric}), for a sub-Gaussian random variable $X$ with parameter $\sigma$, i.e., $X$ that satisfies 
$\mathbb{E}\left[e^{\lambda X}\right]\le\exp\left(\frac{\lambda^2\sigma^2}{2}\right)$, we have:
\begin{equation}
\mathbb{E}\left[|X|^s\right]\le 2\left(\frac{s}{e}\right)^{s/2}\sigma^s
\end{equation}
Apply this lemma on $U^{(k)}_{t,ij}$ with $s=2m,\ m\ge2$ and we get
$$
\mathbb{E}\left[\big|U^{(k)}_{t,ij}\big|^{2m}\right]\le 2\left(\frac{2m}{e}\right)^{m}\left(\sigma^{(k)}_{ij}\right)^{2m}
$$
According to the inequality $m!\ge \left(\frac{m}{e}\right)^m$, we have
$$
\mathbb{E}\left[\frac{\big|U^{(k)}_{t,ij}\big|^{2m}}{m!}\right]\le 2^{m+1}\left(\sigma^{(k)}_{ij}\right)^{2}
$$
Plug in \eqref{eq:Zktij} and we have
\begin{equation}
\begin{aligned}
\left(\frac{\mathbb{E}\left[\big|Z^{(k)}_{t,ij}\big|^{m}\right]}{m!}\right)^{\frac{1}{m}}\le & 2^{\frac{1}{m}}
\left\{\left[2^{2m+1}\left(\sigma^{(k)}_{ij}\right)^{2m}\right]^{\frac{1}{m}}+
\frac{4\left(r+\tilde{\rho}^{(k)}_{ij}\right)}{\left(m!\right)^{\frac{1}{m}}}\right\}\\ \le & \underbrace{2^{\frac{1}{m}}
	\left\{4\cdot2^{\frac{1}{m}}\left(\sigma^{(k)}_{ij}\right)^{2}+
	\frac{4\left(r+\tilde{\rho}^{(k)}_{ij}\right)}{\left(m!\right)^{\frac{1}{m}}}\right\}}_{h(m)}
\end{aligned}
\end{equation}
Note that $h(m)$ defined above decreases with $m$ and $\big|\tilde{\rho}^{(k)}_{ij}\big|\le 1$.

Since \eqref{eq:Ineq_r} holds for $\forall r\in\mathbb{R}$, we can choose $r=\frac{\left(r^{(k)}_i\right)^2+\left(r^{(k)}_j\right)^2}{2}$. Then we have $r<1$ and
$$
Z^{(k)}_{t,ij}:=\left(U^{(k)}_{t,ij}\right)^2-
\left(\left(r^{(k)}_i\right)^2+\left(r^{(k)}_j\right)^2+2\tilde{\rho}^{(k)}_{ij}\right)
$$
Thus
\begin{equation*}
\mathbb{E}\left[Z^{(k)}_{t,ij}\right]=0
\end{equation*}
and furthermore,
\begin{equation*}
\begin{aligned}
\sup_{m\ge2}\left(\frac{\mathbb{E}\left[\big|Z^{(k)}_{t,ij}\big|^{m}\right]}{m!}\right)^{\frac{1}{m}}\le & h(2)\\ =&
8\left(\sigma^{(k)}_{ij}\right)^{2}+4\left(r+\big|\tilde{\rho}^{(k)}_{ij}\big|\right) \\ \le &
8\left(1+\left(\sigma^{(k)}_{ij}\right)^{2}\right) \\ \le &
8\left(1+4\sigma^2\right)
\end{aligned}
\end{equation*}

Define $B:=8\left(1+4\sigma^2\right)$. If $X$ is a random variable such that $\mathbb{E}\left[X\right]=0$, 
$\left(\frac{\mathbb{E}\left[|X|^{m}\right]}{m!}\right)^{\frac{1}{m}}\le B$ for $m\ge2$, then
$$
\mathbb{E}\left[e^{\lambda X}\right]=\mathbb{E}\left[\sum_{k=0}^{\infty}\frac{X^k}{k!}\lambda^k\right]
=1+\sum_{k=1}^\infty\lambda^k\frac{\mathbb{E}\left[X^k\right]}{k!}\le1+\sum_{k=1}^\infty\left(\lambda B\right)^k
\le1+\frac{(\lambda B)^2}{1-|\lambda|B}
$$
when $|\lambda|<\frac{1}{B}$. Meanwhile,
$$
1+\frac{(\lambda B)^2}{1-|\lambda|B}\le\exp\left\{\frac{\lambda^2B^2}{1-|\lambda|B}\right\}\le\exp\left(2\lambda^2B^2\right)
$$
when $|\lambda|\le\frac{1}{2B}$.
Therefore for $|\lambda|\le\frac{1}{2B}$,
\begin{equation}
\mathbb{E}\left[e^{\lambda X}\right]\le\exp\left(2\lambda^2B^2\right)=\exp\left(\frac{\lambda^2(2B)^2}{2}\right)
\end{equation}
Then for $X_i,\ 1\le i\le n$ independent and satisfying $\mathbb{E}\left[X_i\right]=0$, $\left(\frac{\mathbb{E}\left[|X_i|^{m}\right]}{m!}\right)^{\frac{1}{m}}\le B$ when $m\ge2$, we can claim that for $0\le\epsilon\le 2B$,
\begin{equation} \label{eq:tailXiB}
\mathbb{P}\left[\Big|\sum_{i=1}^nX_i\Big|>n\epsilon\right]\le 2\exp\left(-\frac{n\epsilon^2}{8B^2}\right)
\end{equation}
In fact, for $0\le t\le\frac{1}{2B}$,
\begin{equation}
\begin{aligned}
\mathbb{P}\left[\sum_{i=1}^nX_i> n\epsilon\right]&\le\mathbb{P}\left[e^{t\sum_{i=1}^nX_i}\ge e^{tn\epsilon}\right]\\ &\le
e^{-tn\epsilon}\mathbb{E}\left[e^{t\sum_{i=1}^nX_i}\right]\\&=
\left(\prod_{i=1}^{n}\mathbb{E}\left[e^{tX_i}\right]\right)e^{-tn\epsilon} \\ & \le \exp\left(2nt^2B^2-tn\epsilon\right)
\end{aligned}
\end{equation}
Thus choosing $t=\frac{\epsilon}{4B^2}\le\frac{1}{2B}$, we can get
$$
\mathbb{P}\left[\sum_{i=1}^nX_i>n\epsilon\right]\le\exp\left(-\frac{n\epsilon^2}{8B^2}\right)
$$
Similarly, we can also prove that
$$
\mathbb{P}\left[\sum_{i=1}^nX_i<-n\epsilon\right]\le\exp\left(-\frac{n\epsilon^2}{8B^2}\right)
$$
Thus
$$
\mathbb{P}\left[\Big|\sum_{i=1}^nX_i\Big|>n\epsilon\right]=\mathbb{P}\left[\sum_{i=1}^nX_i>n\epsilon\right]+
\mathbb{P}\left[\sum_{i=1}^nX_i<-n\epsilon\right]\le2\exp\left(-\frac{n\epsilon^2}{8B^2}\right)
$$

Now consider $Z^{(k)}_{t,ij},\ 1\le t\le n,\ 1\le k\le K$. These random variables are independent by our assumption and satisfy $\mathbb{E}\left[Z^{(k)}_{t,ij}\right]=0$, $\sup_{m\ge2}\left(\frac{\mathbb{E}\left[\big|Z^{(k)}_{t,ij}\big|^{m}\right]}{m!}\right)^{\frac{1}{m}}\le8\left(1+4\sigma^2\right)=B$ by our proof. 
Then according to \eqref{eq:tailXiB}, for $0\le\frac{2\nu}{\gamma}\le 2B$, i.e., 
$0\le\nu\le 8\left(1+4\sigma^2\right)\gamma$, we have:
\begin{equation}
\begin{aligned}
\mathbb{P}\left[\Big|\sum_{k,t}Z^{(k)}_{t,ij}\Big|>\frac{2nK\nu}{\gamma}\right]
\le & 2\exp\left\{-\frac{4nK\nu^2}{8B^2\gamma^2}\right\} \\ = &
2\exp\left\{-\frac{nK\nu^2}{128\left(1+4\sigma^2\right)^2\gamma^2}\right\}
\end{aligned}
\end{equation}
Since $\gamma\ge \max_{1\le k\le K}\|\bar{\Sigma}^{(k)}\|_{\infty}=\max_{1\le l\le N}s_l\ge\sqrt{s_is_j}$ for $1\le i,j\le N$, we have:
\begin{equation}
\begin{aligned}
\mathbb{P}\left[\Big|\sum_{k,t}Z^{(k)}_{t,ij}\Big|>\frac{2nK\nu}{\sqrt{s_is_j}}\right]
\le\mathbb{P}\left[\Big|\sum_{k,t}Z^{(k)}_{t,ij}\Big|>\frac{2nK\nu}{\gamma}\right]
\le2\exp\left\{-\frac{nK\nu^2}{128\left(1+4\sigma^2\right)^2\gamma^2}\right\}
\end{aligned}
\end{equation}
Plug in the definition of $Z^{(k)}_{t,ij}$, we have
\begin{equation}
\begin{aligned}
\mathbb{P}\left[\Big|\sum_{k,t}\left\{\left(U^{(k)}_{t,ij}\right)^2-2\left(r+\tilde{\rho}^{(k)}_{ij}\right)\right\}\Big|
>\frac{2nK}{\sqrt{s_is_j}}\nu\right]
\le2\exp\left\{-\frac{nK\nu^2}{128\left(1+4\sigma^2\right)^2\gamma^2}\right\}
\end{aligned}
\end{equation}
Similarly, we can also prove that for $0\le\nu\le8\left(1+4\sigma^2\right)\gamma$,
\begin{equation}
\begin{aligned}
\mathbb{P}\left[\Big|\sum_{k,t}\left\{\left(V^{(k)}_{t,ij}\right)^2-2\left(r-\tilde{\rho}^{(k)}_{ij}\right)\right\}\Big|
>\frac{2nK}{\sqrt{s_is_j}}\nu\right]
\le2\exp\left\{-\frac{nK\nu^2}{128\left(1+4\sigma^2\right)^2\gamma^2}\right\}
\end{aligned}
\end{equation}
Thus according to \eqref{eq:Ineqstar1}, we have
\begin{equation}
\begin{aligned}
\text{\eqref{eq:tailprob1}}\le & \mathbb{P}\left[\Big|\sum_{k,t}
\left\{\left(U^{(k)}_{t,ij}\right)^2-2\left(r+\tilde{\rho}^{(k)}_{ij}\right)\right\}\Big|
>\frac{2nK\nu}{\sqrt{s_is_j}}\right] \\ & + \mathbb{P}\left[\Big|\sum_{k,t}
\left\{\left(V^{(k)}_{t,ij}\right)^2-2\left(r-\tilde{\rho}^{(k)}_{ij}\right)\right\}\Big|
>\frac{2nK\nu}{\sqrt{s_is_j}}\right] \\ \le &
4\exp\left\{-\frac{nK\nu^2}{128\left(1+4\sigma^2\right)^2\gamma^2}\right\}
\end{aligned}
\end{equation}
i.e., 
\begin{equation}
\begin{aligned}
\mathbb{P}\left[\Big|\sum_{k=1}^K\frac{1}{K}\left(\hat{\Sigma}^{(k)}_{ij}-\bar{\Sigma}^{(k)}_{ij}\right)\Big|>\nu\right]&=
\mathbb{P}\left[\Big|\frac{1}{nK}\sum_{k=1}^{K}\sum_{t=1}^n\left(X_{t,i}^{(k)}X_{t,j}^{(k)}
-\bar{\Sigma}_{ij}^{(k)}\right)\Big|>\nu\right] \\ &
\le4\exp\left\{-\frac{nK\nu^2}{128\left(1+4\sigma^2\right)^2\gamma^2}\right\}
\end{aligned}
\end{equation}
for $0\le\nu\le8\left(1+4\sigma^2\right)\gamma$.
Then consider the $\ell_{\infty}$-norm of $\hat{\Sigma}^{(k)}-\bar{\Sigma}^{(k)}$. Since $\hat{\Sigma}^{(k)},\ \bar{\Sigma}^{(k)}$ are all symmetric and $N\times N$, we have the following bound:
\begin{equation}
\begin{aligned}
\mathbb{P}\left[\|\sum_{k=1}^K\frac{1}{K}\left(\hat{\Sigma}^{(k)}-\bar{\Sigma}^{(k)}\right)\|_{\infty}>\nu\right]&\le
\frac{N(N+1)}{2}\mathbb{P}\left[\Big|\sum_{k=1}^K\frac{1}{K}\left(\hat{\Sigma}^{(k)}_{ij}-\bar{\Sigma}^{(k)}_{ij}\right)\Big|>\nu\right]
\\ &\le
2N(N+1)\exp\left\{-\frac{nK\nu^2}{128\left(1+4\sigma^2\right)^2\gamma^2}\right\}
\end{aligned}
\end{equation}
for $0\le\nu\le8\left(1+4\sigma^2\right)\gamma$, which completes our proof of Lemma \ref{lem:TailSampleCov_fixed}.
\end{proof}

Now consider the setting when $\{\bar{\Sigma}^{(k)}\}_{k=1}^K$ are randomly generated based on Definition \ref{def:msGGM}. According to Lemma \ref{lem:TailSampleCov_fixed}, we have
\begin{equation} \label{eq:IneqSigmaNorm_cond_ij}
\begin{aligned}
\mathbb{P}\left[\left|\sum_{k=1}^K\frac{1}{K}\left(\hat{\Sigma}^{(k)}_{ij}-\bar{\Sigma}^{(k)}_{ij}\right)\right|>\nu
\Big| \{\bar{\Sigma}^{(k)}\}_{k=1}^K\right]\le
\exp\left\{-\frac{nK\nu^2}{128\left(1+4\sigma^2\right)^2\gamma^2}\right\}
\end{aligned}
\end{equation}
\begin{equation} \label{eq:IneqSigmaNorm_cond}
\begin{aligned}
\mathbb{P}\left[\|\sum_{k=1}^K\frac{1}{K}\left(\hat{\Sigma}^{(k)}-\bar{\Sigma}^{(k)}\right)\|_{\infty}>\nu 
\Big| \{\bar{\Sigma}^{(k)}\}_{k=1}^K\right]\le
2N(N+1)\exp\left\{-\frac{nK\nu^2}{128\left(1+4\sigma^2\right)^2\gamma^2}\right\}
\end{aligned}
\end{equation}
for $\hat{\Sigma}^{(k)}=\frac{1}{n}\sum_{t=1}^nX_t^{(k)}(X_t^{(k)})^{\text{T}}$, $1\le i,j\le N$, and  $0\le\nu\le8\left(1+4\sigma^2\right)\gamma$ with $\gamma$ specified in \eqref{eq:condtion2} of the corrected condition (ii) in Definition \ref{def:msGGM}. 

Then by the law of total expectation (see e.g., \cite{weiss2005course}), we have
\begin{equation*}
\begin{aligned}
\mathbb{P}\left[\left|\sum_{k=1}^K\frac{1}{K}\left(\hat{\Sigma}^{(k)}_{ij}-\bar{\Sigma}^{(k)}_{ij}\right)\right|>\nu\right]=&
\mathbb{E}_{\{\bar{\Sigma}^{(k)}\}_{k=1}^K}\left[\mathbb{P}\left[\left|\sum_{k=1}^K\frac{1}{K}\left(\hat{\Sigma}^{(k)}_{ij}-
\bar{\Sigma}^{(k)}_{ij}\right)\right|>\nu \Big| \{\bar{\Sigma}^{(k)}\}_{k=1}^K\right]\right]\\ \le&
\mathbb{E}_{\{\bar{\Sigma}^{(k)}\}_{k=1}^K}\left[\exp\left\{-\frac{nK\nu^2}{128\left(1+4\sigma^2\right)^2\gamma^2}\right\}\right]
\\ =& \exp\left\{-\frac{nK\nu^2}{128\left(1+4\sigma^2\right)^2\gamma^2}\right\}
\end{aligned}
\end{equation*}
Therefore,
\begin{equation*}
\begin{aligned}
\mathbb{P}\left[\|\sum_{k=1}^K\frac{1}{K}\left(\hat{\Sigma}^{(k)}-\bar{\Sigma}^{(k)}\right)\|_{\infty}>\nu \right]=&
\mathbb{E}_{\{\bar{\Sigma}^{(k)}\}_{k=1}^K}\left[
\mathbb{P}\left[\|\sum_{k=1}^K\frac{1}{K}\left(\hat{\Sigma}^{(k)}-\bar{\Sigma}^{(k)}\right)\|_{\infty}>\nu 
\Big| \{\bar{\Sigma}^{(k)}\}_{k=1}^K\right]\right] \\ \le &
\mathbb{E}_{\{\bar{\Sigma}^{(k)}\}_{k=1}^K}\left[
2N(N+1)\exp\left\{-\frac{nK\nu^2}{128\left(1+4\sigma^2\right)^2\gamma^2}\right\}\right] \\ =&
2N(N+1)\exp\left\{-\frac{nK\nu^2}{128\left(1+4\sigma^2\right)^2\gamma^2}\right\}
\end{aligned}
\end{equation*}
which completes the proof of Lemma \ref{lem:TailSampleCov}. Also notice that the proof above does not rely on any assumption on the distribution of $\{\bar{\Sigma}^{(k)}\}_{k=1}^K$. Thus, \eqref{eq:IneqSigmaNorm_ij} and \eqref{eq:IneqSigmaNorm} hold as long as condition (iii), (iv) and (v) in Definition \ref{def:msGGM} are satisfied.

\section{Proof of Lemma \ref{lem:Taildelta}} \label{sec:taildelta}
By definition, $H$ is a function that maps $K$ matrices to a symmetric matrix of dimension $N$, since $\bar{\Omega}^{(k)}=\bar{\Omega}+\Delta^{(k)}\succ0$ with probability 1 according to condition (ii) in Definition \ref{def:msGGM}.
For $\forall k\in\{1,\dots,K\}$, let $\{\Delta^{(1)},\dots,\Delta^{(k)},\dots, \Delta^{(K)},\Delta^{\prime(k)}\}$ be an i.i.d. family of random matrices following distribution $P$ in Definition \ref{def:msGGM}. Consider $H_{1}^{(k)}=H(\Delta^{(1)},\dots,\Delta^{(k)},\dots, \Delta^{(K)})$ and $H_{2}^{(k)}=H(\Delta^{(1)},\dots,\Delta^{\prime(k)},\dots, \Delta^{(K)})$. We have
\begin{equation} \label{eq:dH}
\begin{aligned}
\vertiii{H_1^{(k)}-H_2^{(k)}}_2=&\vertiii{\frac{1}{K}(\bar{\Omega}+\Delta^{\prime(k)})^{-1}-(\bar{\Omega}+\Delta^{(k)})^{-1}}_2\\=&
\frac{1}{K}\vertiii{(\bar{\Omega}+\Delta^{\prime(k)})^{-1}-\bar{\Omega}^{-1}+\bar{\Omega}^{-1}-(\bar{\Omega}+\Delta^{(k)})^{-1}}_2\\ \le&
\frac{1}{K}\vertiii{(\bar{\Omega}+\Delta^{\prime(k)})^{-1}-\bar{\Omega}^{-1}}_2+\frac{1}{K}
\vertiii{(\bar{\Omega}+\Delta^{(k)})^{-1}-\bar{\Omega}^{-1}}_2
\end{aligned}
\end{equation}
Since $\mathbb{P}_{\Delta\sim P}[\vertiii{\Delta}_2\le c_{\max}\le \frac{\lambda_{\min}}{2}]=1$ with $\lambda_{\min}=\lambda_{\min}(\bar{\Omega})$ by \eqref{eq:condtion2} and since $\bar{\Omega}\succ0$, we have
\begin{align*}
\vertiii{(\bar{\Omega}+\Delta^{(k)})^{-1}-\bar{\Omega}^{-1}}_2\le \frac{c_{\max}}{\lambda_{\min}(\lambda_{\min}-c_{\max})}
\le \frac{2c_{\max}}{\lambda_{\min}^2}
\end{align*}
and
\begin{align*}
\vertiii{(\bar{\Omega}+\Delta^{\prime(k)})^{-1}-\bar{\Omega}^{-1}}_2\le \frac{c_{\max}}{\lambda_{\min}(\lambda_{\min}-c_{\max})}
\le \frac{2c_{\max}}{\lambda_{\min}^2}
\end{align*}
according to Equation (7.2) in \cite{ELGHAOUI2002171}. Plug the above inequalities in \eqref{eq:dH} and we can get
\begin{equation} \label{eq:dH1}
\begin{aligned}
\vertiii{H_1^{(k)}-H_2^{(k)}}_2\le
\frac{1}{K}\vertiii{(\bar{\Omega}+\Delta^{\prime(k)})^{-1}-\bar{\Omega}^{-1}}_2+\frac{1}{K}
\vertiii{(\bar{\Omega}+\Delta^{(k)})^{-1}-\bar{\Omega}^{-1}}_2\le \frac{4c_{\max}}{K\lambda_{\min}^2}
\end{aligned}
\end{equation}

For $k=1,\dots,K$, define $A_k=\frac{4c_{\max}}{K\lambda_{\min}^2}I_N$ with $I_N\in\mathbb{R}^{N\times N}$ being an identity matrix. Then by \eqref{eq:dH1}, we have
\begin{equation*}
\begin{aligned}
(H_1^{(k)}-H_2^{(k)})^2\preceq A_k^2
\end{aligned}
\end{equation*}
where $X\preceq Y\iff Y-X\succeq0$.

Define $\sigma_{\Delta}^2:=\vertiii{\sum_{k=1}^KA_k^2}_2=\sum_{k=1}^K \left(\frac{4c_{\max}}{K\lambda_{\min}^2}\right)^2=\frac{16c_{\max}^2}{K\lambda_{\min}^4}$. Then according to Corollary 7.5 in \cite{Tropp2011}, we have
\begin{equation} \label{eq:Ineqdelta0}
\begin{aligned}
\mathbb{P}\left[\lambda_{\max}(H-\mathbb{E}[H])>t\right]\le
N\exp\left\{-\frac{t^2}{8\sigma_{\Delta}^2}\right\}\le
N\exp\left\{-\frac{\lambda_{\min}^4Kt^2}{128c_{\max}^2}\right\}
\end{aligned}
\end{equation}

Consider $-H(\Delta^{(1)},\dots,\Delta^{(K)})$. We have
$$((-H_1^{(k)})-(-H_2^{(k)}))^2\preceq A_k^2$$
The conditions of Corollary 7.5 in \cite{Tropp2011} are also satisfied. Thus, we have
\begin{equation} \label{eq:Ineqdelta01}
\begin{aligned}
\mathbb{P}\left[-\lambda_{\min}(H-\mathbb{E}[H])>t\right]=
\mathbb{P}\left[\lambda_{\max}((-H)-(-\mathbb{E}[H]))>t\right]\le
N\exp\left\{-\frac{t^2}{8\sigma_{\Delta}^2}\right\}\le
N\exp\left\{-\frac{\lambda_{\min}^4Kt^2}{128c_{\max}^2}\right\}
\end{aligned}
\end{equation}
By \eqref{eq:Ineqdelta0} and \eqref{eq:Ineqdelta01}, we have
\begin{equation}
\begin{aligned}
\mathbb{P}\left[\vertiii{H-\mathbb{E}[H]}_2>t\right]=&
\mathbb{P}\left[\lambda_{\max}(H-\mathbb{E}[H])>t,-\lambda_{\min}(H-\mathbb{E}[H])>t\right]\\
\le & \mathbb{P}\left[\lambda_{\max}(H-\mathbb{E}[H])>t\right]+\mathbb{P}\left[-\lambda_{\min}(H-\mathbb{E}[H])>t\right]\\ \le&
2N\exp\left\{-\frac{\lambda_{\min}^4Kt^2}{128c_{\max}^2}\right\}
\end{aligned}
\end{equation}
which gives us \eqref{eq:Ineqdelta}.

\section{Proof of Lemma \ref{lem:suffconsis}} \label{sec:suffconsis}
For $\xi\in(0,\delta^*]$, we have proved that if $\|\sum_{k=1}^K\frac{T^{(k)}}{T}W^{(k)}\|_{\infty}\le \xi$
then $\|\Delta\|_{\infty}\le2\kappa_{\bar{\Gamma}}\left(\frac{8}{\alpha}+1\right)\xi$, $\tilde{\Omega}=\hat{\Omega}$ 
and $\text{supp}(\hat{\Omega})\subseteq\text{supp}(\bar{\Omega})$.

Therefore if we further assume that
$$
\frac{\omega_{\min}}{2}\ge2\kappa_{\bar{\Gamma}}\left(\frac{8}{\alpha}+1\right)\xi
$$
we will have
$$
\frac{\omega_{\min}}{2}\ge\|\Delta\|_{\infty}=\|\hat{\Omega}-\bar{\Omega}\|_{\infty}
$$
Then for any $(i,j)\in S^c=\left[\text{supp}(\bar{\Omega})\right]^c$, $\bar{\Omega}_{ij}=0$, we have $\left[\text{supp}(\bar{\Omega})\right]^c\subseteq\left[\text{supp}(\hat{\Omega})\right]^c$ and thus $(i,j)\in\left[\text{supp}(\hat{\Omega})\right]^c$, $\hat{\Omega}_{ij}=0=\bar{\Omega}_{ij}$

For any $(i,j)\in S=\text{supp}(\bar{\Omega})$, we have
$$
|\hat{\Omega}_{ij}-\bar{\Omega}_{ij}|\le\|\hat{\Omega}-\bar{\Omega}\|_{\infty}\le\frac{\omega_{\min}}{2}=
\frac{1}{2}\min_{1\le k,l\le N}\bar{\Omega}_{kl}\le\frac{1}{2}|\bar{\Omega}_{ij}|
$$
$$
\Rightarrow
-\frac{1}{2}|\bar{\Omega}_{ij}|\le \hat{\Omega}_{ij}-\bar{\Omega}_{ij}\le \frac{1}{2}|\bar{\Omega}_{ij}|
$$
If $\bar{\Omega}_{ij}>0$, then
$$
-\frac{1}{2}\bar{\Omega}_{ij}\le \hat{\Omega}_{ij}-\bar{\Omega}_{ij}
$$
$$
\hat{\Omega}_{ij}\ge \frac{1}{2}\bar{\Omega}_{ij}>0
$$
If $\bar{\Omega}_{ij}<0$, then
$$
\hat{\Omega}_{ij}-\bar{\Omega}_{ij}\le-\frac{1}{2}\bar{\Omega}_{ij}
$$
$$
\hat{\Omega}_{ij}\le \frac{1}{2}\bar{\Omega}_{ij}<0
$$
In conclusion, $\text{sign}(\hat{\Omega}_{ij})=\text{sign}(\bar{\Omega}_{ij})$ for $\forall\ i,j\in\left\{1,2,...,N\right\}$. The estimate $\hat{\Omega}$ in \eqref{eq:estimator} is sign-consistent.

\section{Proof of Lemma \ref{lem:novel1}} \label{sec:lemnovel1}
Plug $\mu=\bar{\Sigma}^{(K+1),S^c}=(\bar{\Sigma}^{(K+1)}_{S^c},0)$ and $\nu=\text{diag}(\bar{\Sigma}^{(K+1)}-\hat{\Sigma}^{(K+1)})$ in \eqref{eq:lagrangian}. We have the following optimization problem
\begin{align*}
\hat{\Omega}^{(K+1)}=\arg\min_{\Omega\in\mathcal{S}_{++}^{N}}\ell^{(K+1)}(\Omega)+\lambda\|\Omega\|_1+
\langle\bar{\Sigma}^{(K+1),S^c},\Omega\rangle 
+\langle\text{diag}(\bar{\Sigma}^{K+1}-\hat{\Sigma}^{K+1}),\text{diag}(\Omega-\hat{\Omega})\rangle
\end{align*}
Now we can prove with the five steps in the primal-dual witness approach.

\subsection{Step 1}
For $(\Omega_{S^{(K+1)}},0)\in\mathcal{S}_{++}^{N}$, we need to verify $[\nabla^2\ell^{(K+1)}(\Omega)]_{S^{(K+1)}S^{(K+1)}}\succ0$. In fact,
\begin{equation} \label{eq:gradlossnovel}
\nabla\ell^{(K+1)}(\Omega)=\hat{\Sigma}^{(K+1),S}-\Omega^{-1}
\end{equation}
\begin{equation}
\nabla^2\ell^{(K+1)}(\Omega)=\Gamma(\Omega)=\Omega^{-1}\otimes\Omega^{-1}
\end{equation}
For $(\Omega_{S^{(K+1)}},0)\in\mathcal{S}_{++}^{N}$, we have $\Gamma((\Omega_{S^{(K+1)}},0))\succ0$, $\nabla^2\ell^{(K+1)}(\Omega)\succ0$. Thus following the same steps in section \ref{subsection:step1} , we can prove $[\nabla^2\ell^{(K+1)}(\Omega)]_{S^{(K+1)}S^{(K+1)}}\succ0$.

\subsection{Step 2}
Construct the primal variable $\tilde{\Omega}$ by making $\tilde{\Omega}_{[S^{(K+1)}]^c}=0$ and solving the restricted problem:
\begin{equation} \label{eq:novelstep2}
\begin{aligned}
\tilde{\Omega}_{S^{(K+1)}}=&\arg\min_{\left(\Omega_{S^{(K+1)}},0\right)\in\mathcal{S}_{++}^N}
\ell^{(K+1)}\left(\left(\Omega_{S^{(K+1)}},0\right)\right)+\lambda\|\Omega_{S^{(K+1)}}\|_1
\\&+\langle\bar{\Sigma}^{(K+1),S^c},\left(\Omega_{S^{(K+1)}},0\right)\rangle +\langle\text{diag}(\bar{\Sigma}^{K+1}-\hat{\Sigma}^{K+1}),\text{diag}(\left(\Omega_{S^{(K+1)}},0\right)-\hat{\Omega})\rangle
\end{aligned}
\end{equation}

\subsection{Step 3}
Choose the dual variable $\tilde{Z}$ in order to fulfill the complementary slackness condition of \eqref{eq:lagrangian}:
\begin{equation} \label{eq:novelstep3}
\left\{
\begin{aligned}
&\tilde{Z}_{ij}=1,\ \ \text{if } \tilde{\Omega}_{ij}>0\\
&\tilde{Z}_{ij}=-1,\ \ \text{if } \tilde{\Omega}_{ij}<0\\
&\tilde{Z}_{ij}\in[-1,1],\ \ \text{if } \tilde{\Omega}_{ij}=0
\end{aligned}
\right.
\end{equation}
Therefore we have
\begin{equation} \label{eq:step3ineq_novel}
\begin{aligned}
\|\tilde{Z}\|_{\infty}\le 1
\end{aligned}
\end{equation}

\subsection{Step 4}
$\tilde{Z}$ is the subgradient of $\|\tilde{\Omega}\|_1$. Solve for the dual variable $\tilde{Z}_{[S^{(K+1)}]^c}$ in order that $(\tilde{\Omega},\tilde{Z})$ fulfills the stationarity condition of \eqref{eq:lagrangian}:
\begin{equation}
\left[\nabla\ell^{(K+1)}\left(\left(\tilde{\Omega}_{S^{(K+1)}},0\right)\right)\right]_{S^{(K+1)}}
+\lambda\tilde{Z}_{S^{(K+1)}} + I_N\text{diag}(\bar{\Sigma}^{(K+1)}-\hat{\Sigma}^{(K+1)})
=0
\end{equation}
\begin{equation} \label{eq:novelstation2}
\left[\nabla\ell^{(K+1)}\left(\left(\tilde{\Omega}_{S^{(K+1)}},0\right)\right)\right]_{[S^{(K+1)}]^c}
+\lambda\tilde{Z}_{[S^{(K+1)}]^c}+\bar{\Sigma}^{(K+1),S^c}_{[S^{(K+1)}]^c}=0
\end{equation}
where $I_N\in\mathbb{R}^{N\times N}$ is an identity matrix.

\subsection{Step 5}
Now we need to verify that the dual variable solved by Step 4 satisfied the strict dual feasibility condition:
\begin{equation}
\|\tilde{Z}_{[S^{(K+1)}]^c}\|_{\infty}<1
\end{equation}
If we can show the strict dual feasibility condition holds, we can claim that the solution in \eqref{eq:novelstep2} is equal to the solution in \eqref{eq:estimatorNovel1}, i.e., $\tilde{\Omega}=\hat{\Omega}^{(K+1)}$. Thus we will have
\begin{equation*}
\text{supp}\left(\hat{\Omega}^{(K+1)}\right)=\text{supp}\left(\tilde{\Omega}\right)\subseteq
S^{(K+1)}=\text{supp}\left(\bar{\Omega}^{(K+1)}\right)
\end{equation*}

\subsection{Proof of the strict dual feasibility condition}
Plug \eqref{eq:gradlossnovel} in the stationarity condition of \eqref{eq:estimatorNovel1}, we have
\begin{equation} \label{eq:novelstation.1}
\hat{\Sigma}^{(K+1),S}-\tilde{\Omega}^{-1}+\lambda\tilde{Z}+\bar{\Sigma}^{(K+1),S^c}
+I_N\text{diag}(\bar{\Sigma}^{K+1}-\hat{\Sigma}^{K+1})=0
\end{equation}
Define $\Psi:=\tilde{\Omega}-\bar{\Omega}^{(K+1)}$, 
$R(\Psi):=\tilde{\Omega}^{-1}-\bar{\Sigma}^{(K+1)}+\bar{\Sigma}^{(K+1)}\Psi\bar{\Sigma}^{(K+1)}$. Notice that $W^{(K+1)}=\bar{\Sigma}^{(K+1),S_{\text{off}}}-\hat{\Sigma}^{(K+1),S_{\text{off}}}$.
Then we can rewrite \eqref{eq:novelstation.1} as
\begin{equation} \label{eq:novelstation.2}
\begin{aligned}
0=&\hat{\Sigma}^{(K+1),S}-\tilde{\Omega}^{-1}+\lambda\tilde{Z}+\bar{\Sigma}^{(K+1),S^c}
+I_N\text{diag}(\bar{\Sigma}^{K+1}-\hat{\Sigma}^{K+1})\\=
&\hat{\Sigma}^{(K+1),S}-(\tilde{\Omega}-\bar{\Sigma}^{(K+1)}+\bar{\Sigma}^{(K+1)}\Psi\bar{\Sigma}^{(K+1)})
-\bar{\Sigma}^{(K+1)}+\bar{\Sigma}^{(K+1)}\Psi\bar{\Sigma}^{(K+1)}+\bar{\Sigma}^{(K+1),S^c}
\\&+I_N\text{diag}(\bar{\Sigma}^{K+1}-\hat{\Sigma}^{K+1}) + \lambda\tilde{Z}
\\ =&\hat{\Sigma}^{(K+1),S_{\text{off}}}+I_N\text{diag}(\hat{\Sigma}^{(K+1)})-R(\Psi)-\bar{\Sigma}^{(K+1),S}+
I_N\text{diag}(\bar{\Sigma}^{K+1}-\hat{\Sigma}^{K+1})+ \lambda\tilde{Z}
\\=&\hat{\Sigma}^{(K+1),S_{\text{off}}}-\bar{\Sigma}^{(K+1),S_{\text{off}}}
+\bar{\Sigma}^{(K+1)}\Psi\bar{\Sigma}^{(K+1)}-R(\Psi)+\lambda\tilde{Z}
\\=&W^{(K+1)}+\bar{\Sigma}^{(K+1)}\Psi\bar{\Sigma}^{(K+1)}-R(\Psi)+\lambda\tilde{Z}
\end{aligned}
\end{equation}
Now apply Lemma \ref{lem:strictdual} with $K=1$ and we can get Lemma \ref{lem:novel1}.

\section{Proof of Lemma \ref{lem:suffconsis_novel}} \label{sec:suffconsis_novel}
For $\xi\in(0,\delta^{(K+1),*}]$, in Lemma \ref{lem:novel1}, we have proved that if $\|W^{(K+1)}\|_{\infty}\le \xi$
then
$\|\hat{\Omega}^{(K+1)}-\bar{\Omega}^{(K+1)}\|_{\infty}\le2\kappa_{\bar{\Gamma}^{(K+1)}}\left(\frac{8}{\alpha^{(K+1)}}+1\right)\xi$
and $\text{supp}(\hat{\Omega}^{(K+1)})\subseteq\text{supp}(\bar{\Omega}^{(K+1)})$.

Therefore if we further assume that
$$
\frac{\omega_{\min}^{(K+1)}}{2}\ge2\kappa_{\bar{\Gamma}^{(K+1)}}\left(\frac{8}{\alpha^{(K+1)}}+1\right)\xi
$$
we will have
$$
\frac{\omega_{\min}^{(K+1)}}{2}\ge\|\hat{\Omega}^{(K+1)}-\bar{\Omega}^{(K+1)}\|_{\infty}
$$
Then for any $(i,j)\in [S^{(K+1)}]^c=\left[\text{supp}(\bar{\Omega}^{(K+1)})\right]^c$, $\bar{\Omega}^{(K+1)}_{ij}=0$, we have $\left[\text{supp}(\bar{\Omega}^{(K+1)})\right]^c\subseteq\left[\text{supp}(\hat{\Omega}^{(K+1)})\right]^c$ and thus $(i,j)\in\left[\text{supp}(\hat{\Omega}^{(K+1)})\right]^c$, $\hat{\Omega}^{(K+1)}_{ij}=0=\bar{\Omega}^{(K+1)}_{ij}$

For any $(i,j)\in S^{(K+1)}=\text{supp}(\bar{\Omega}^{(K+1)})$, we have
$$
|\hat{\Omega}^{(K+1)}_{ij}-\bar{\Omega}^{(K+1)}_{ij}|\le\|\hat{\Omega}^{(K+1)}-\bar{\Omega}^{(K+1)}\|_{\infty}
\le\frac{\omega_{\min}^{(K+1)}}{2}=
\frac{1}{2}\min_{1\le k,l\le N}\bar{\Omega}^{(K+1)}_{kl}\le\frac{1}{2}|\bar{\Omega}^{(K+1)}_{ij}|
$$
$$
\Rightarrow
-\frac{1}{2}|\bar{\Omega}^{(K+1)}_{ij}|\le \hat{\Omega}^{(K+1)}_{ij}-\bar{\Omega}^{(K+1)}_{ij}\le \frac{1}{2}|\bar{\Omega}^{(K+1)}_{ij}|
$$
If $\bar{\Omega}^{(K+1)}_{ij}>0$, then
$$
-\frac{1}{2}\bar{\Omega}^{(K+1)}_{ij}\le \hat{\Omega}^{(K+1)}_{ij}-\bar{\Omega}^{(K+1)}_{ij}
$$
$$
\hat{\Omega}^{(K+1)}_{ij}\ge \frac{1}{2}\bar{\Omega}^{(K+1)}_{ij}>0
$$
If $\bar{\Omega}^{(K+1)}_{ij}<0$, then
$$
\hat{\Omega}^{(K+1)}_{ij}-\bar{\Omega}^{(K+1)}_{ij}\le-\frac{1}{2}\bar{\Omega}^{(K+1)}_{ij}
$$
$$
\hat{\Omega}^{(K+1)}_{ij}\le \frac{1}{2}\bar{\Omega}^{(K+1)}_{ij}<0
$$
In conclusion, $\text{sign}(\hat{\Omega}^{(K+1)}_{ij})=\text{sign}(\bar{\Omega}^{(K+1)}_{ij})$ for $\forall\ i,j\in\left\{1,2,...,N\right\}$. The estimate $\hat{\Omega}^{(K+1)}$ in \eqref{eq:estimatorNovel1} is sign-consistent.

\end{appendices}

\end{document}